\documentclass{article}
\usepackage{iclr2026_conference,times}
\usepackage{amsmath}
\usepackage{amsfonts}
\usepackage{amsthm}
\usepackage{amssymb}
\usepackage{bm}
\usepackage{fontawesome5}
\usepackage{pgffor}
\usepackage{multirow}
\usepackage{verbatim}
\usepackage{enumitem}
\usepackage{microtype}
\usepackage{graphicx}
\usepackage{subcaption}
\usepackage{booktabs}
\usepackage{caption}
\usepackage{xcolor}
\usepackage{wrapfig}

\usepackage[pagebackref=true,breaklinks=true,colorlinks=true,bookmarks=false]{hyperref}
\definecolor{DirtyOrange}{HTML}{D87C46}
\hypersetup{linkcolor=DirtyOrange}
\hypersetup{urlcolor=DirtyOrange}
\definecolor{Indigo}{HTML}{4B0082}
\hypersetup{citecolor=Indigo}

\usepackage{comment}
\usepackage{pifont}
\usepackage[misc]{ifsym}

\usepackage{mathtools}
\newtheorem{thm}{Theorem}
\newtheorem{lem}[thm]{Lemma}
\newtheorem{prop}[thm]{Proposition}
\newtheorem{cor}[thm]{Corollary}

\newtheorem{pro}[thm]{Property}

\newcommand{\ie}{\emph{i.e.}}

\usepackage[textsize=tiny]{todonotes}

\title{Model Evolution Under Zeroth-Order Optimization: A Neural Tangent Kernel Perspective}

\author{\begin{tabular}{c}
    Chen Zhang\textsuperscript{1}\thanks{Equal contribution} \quad 
    Yuxin Cheng\textsuperscript{1}\footnote[1]{} \quad 
    Chenchen Ding\textsuperscript{1} \quad Shuqi Wang\textsuperscript{1} \quad Jingreng Lei\textsuperscript{1}
    \\[2pt] 
    Runsheng Yu\textsuperscript{2} \quad
    Yik-Chung Wu\textsuperscript{1} \quad Ngai Wong\textsuperscript{1}
  \end{tabular}\\
\\
\textsuperscript{1} The University of Hong Kong \\
\textsuperscript{2} The Hong Kong University of Science and Technology
}

\iclrfinalcopy

\begin{document}

\maketitle

\begin{abstract}
Zeroth-order (ZO) optimization enables memory-efficient training of neural networks by estimating gradients via forward passes only, eliminating the need for backpropagation. However, the stochastic nature of gradient estimation significantly obscures the training dynamics, in contrast to the well-characterized behavior of first-order methods under Neural Tangent Kernel (NTK) theory. To address this, we introduce the Neural Zeroth-order Kernel (NZK) to describe model evolution in function space under ZO updates. For linear models, we prove that the expected NZK remains constant throughout training and depends explicitly on the first and second moments of the random perturbation directions. This invariance yields a closed-form expression for model evolution under squared loss. We further extend the analysis to linearized neural networks. Interpreting ZO updates as kernel gradient descent via NZK provides a novel perspective for potentially accelerating convergence. Extensive experiments across synthetic and real-world datasets (including MNIST, CIFAR-10, and Tiny ImageNet) validate our theoretical results and demonstrate acceleration when using a single shared random vector. Codes are available at the \href{https://github.com/chen2hang/NZK}{LINK}.
\end{abstract}

\section{Introduction and Related Work}
\vspace{-0.03in}

Zeroth-order (ZO) optimization approximates gradients using finite differences computed via forward passes, offering a memory-efficient alternative to first-order (FO) methods and enabling optimization of non-differentiable objectives \citep{nesterov2017random,duchi2015optimal,liu2020primer,malladi2023fine,wang2025noisezo}. Despite strong empirical performance in black-box attacks, LLM fine-tuning, and on-chip learning, theoretical understanding of ZO training dynamics remains limited due to inherent randomness.

In contrast, FO methods benefit from Neural Tangent Kernel (NTK) theory \citep{jacot2018neural,lee2019wide}, which shows that wide neural networks evolve linearly in function space with a constant NTK, yielding closed-form dynamics under squared loss. Recent works have further bridged NTK theory with nonparametric teaching frameworks \citep{zhang2023nonparametric,zhang2023mint} to accelerate training dynamics in various neural architectures \citep{zhang2024nonparametric,zhang2025nonparametric,zhang2026nonparametric,zhang2026nint}. NTK theory, however, relies on exact gradients and cannot be directly applied to ZO.

To bridge this gap, we propose the Neural Zeroth-order Kernel (NZK), which adapts the NTK concept to describe model evolution in function space under random directional perturbations. Starting from linear models, we reveal that (i) the expected NZK is time-invariant, (ii) its form depends explicitly on the moments of random directions, and (iii) reusing the same random vector for perturbation and Jacobian estimation yields substantial convergence acceleration. We then extend these insights to linearized neural networks, providing a foundation for understanding ZO dynamics in wider regimes. Our key contributions are: \textbf{(1)} Introduction of the Neural Zeroth-order Kernel (NZK) for analyzing ZO optimization in function space; \textbf{(2)} Closed-form evolution dynamics for linear and linearized models under squared loss; \textbf{(3)} We empirically validate our theoretical findings through extensive experiments (MNIST, CIFAR-10, Tiny ImageNet).

\section{Preliminary}
\vspace{-0.03in}

We consider empirical risk minimization for a scalar-valued model $f(\bm{x};\theta)$ with parameters $\theta\in\mathbb{R}^d$. ZO gradient descent updates parameters as
\begin{equation}\label{eq:zog}
\theta_{t+1} \gets \theta_t - \eta \mathcal{G}_t,\quad
\mathcal{G}_t = \frac{1}{N}\sum_{i=1}^N \frac{\mathcal{L}(f(\bm{x}_i;\theta_t+\epsilon\bm{z}))-\mathcal{L}(f(\bm{x}_i;\theta_t-\epsilon\bm{z}))}{2\epsilon} \bm{z},
\end{equation}
where $\bm{z}\sim\mathcal{N}(\bm{0},\sigma^2\bm{I}_d)$ (unless stated otherwise) and $\epsilon>0$ is small.

The Neural Tangent Kernel (NTK) \citep{jacot2018neural} describes FO dynamics in function space via $K(\bm{x}_i,\bm{x}_j) = \big\langle \frac{\partial f}{\partial\theta}\big|_{\bm{x}_i}, \frac{\partial f}{\partial\theta}\big|_{\bm{x}_j} \big\rangle,$
which remains constant in wide networks, enabling closed-form analysis.

\section{Neural Zeroth-Order Kernel}
\vspace{-0.03in}

\subsection{Evolution of linear models under ZO optimization}

Our primary focus lies on the evolution of the linear model $f(\bm{x};\theta)=\langle\theta,\bm{x}\rangle$ itself rather than the parameter dynamics. It is important to recognize that gradients in FO methods can be decomposed into two components: a factor governing magnitude and a vector dictating direction \citep{ruder2016overview, zhang2023nonparametric,zhang2023mint,zhang2024nonparametric}, which may not necessarily be unitary. Building on this understanding, we reformulate $\mathcal{G}_t$ in Eq.~\ref{eq:zog} equivalently as:
\begin{small}
\begin{eqnarray}\label{eq:rezog}
    \mathcal{G}_t=\frac{1}{N}\underbrace{\Big[\frac{\mathcal{L}(f(\bm{x}_i;\theta_t+\epsilon\bm{z}),y_i)-\mathcal{L}(f(\bm{x}_i;\theta_t-\epsilon\bm{z}),y_i)}{f(\bm{x}_i;\theta_t+\epsilon\bm{z})-f(\bm{x}_i;\theta_t-\epsilon\bm{z})}\Big]^\top_N}_{(*)}\cdot\underbrace{\Big[\frac{f(\bm{x}_i;\theta_t+\epsilon\bm{z})-f(\bm{x}_i;\theta_t-\epsilon\bm{z})}{2\epsilon}\bm{z}\Big]_N}_{(**)}.
\end{eqnarray}
\end{small}

The magnitude relates to $(*)$, whereas the direction corresponds to $(**)$. $(*)$ allows for $\mathcal{L}$ to potentially be non-differentiable \citep{stiennon2020learning, ouyang2022training}, contrasting with FO methods. Meanwhile, one can see that training involves the entire dataset but uses only one $\bm{z}$ per iteration, which means the gradient is estimated across numerous data points using a single random direction $\bm{z}$.

Meanwhile, the rate of change of $f$ w.r.t. $\theta$, denoted as $\partial f_{\theta}/\partial \theta$ can be expressed through finite differences in ZO optimization as:
\begin{eqnarray}\label{eq:parest}
    \frac{f(\cdot;\theta_t+\epsilon\bm{\zeta})-f(\cdot;\theta_t-\epsilon\bm{\zeta})}{2\epsilon}\bm{\zeta},
\end{eqnarray}
where $\bm{\zeta}$ signifies the same concept as $\bm{z}$. Therefore, the evolution of $f(\bm{x};\theta)$ can be delineated as:
\begin{eqnarray}\label{eq:diff}
    f_{\theta_{t+1}}-f_{\theta_t}=-\frac{\eta}{N}\left[\frac{\mathcal{L}(f(\bm{x}_i;\theta_t+\epsilon\bm{z}),y_i)-\mathcal{L}(f(\bm{x}_i;\theta_t-\epsilon\bm{z}),y_i)}{f(\bm{x}_i;\theta_t+\epsilon\bm{z})-f(\bm{x}_i;\theta_t-\epsilon\bm{z})}\right]^\top_N \cdot\left[K_{\bm{\zeta},\bm{z}}(\cdot,\bm{x}_i)\right]_N,
\end{eqnarray}
where $[K_{\bm{\zeta},\bm{z}}(\cdot,\bm{x}_i)]_N$ represents a column vector extracted from a random kernel matrix $\bm{K}_{\bm{\zeta},\bm{z}}$. This kernel matrix $\bm{K}_{\bm{\zeta},\bm{z}}$ is referred to as the Neural Zeroth-order Kernel (NZK). Its entries are
\begin{eqnarray}\label{eq:nzk}
    K_{\bm{\zeta},\bm{z}}(\bm{x}_i,\bm{x}_j)=\left\langle\frac{f(\bm{x}_i;\theta_t+\epsilon\bm{\zeta})-f(\bm{x}_i;\theta_t-\epsilon\bm{\zeta})}{2\epsilon}\bm{\zeta},\frac{f(\bm{x}_j;\theta_t+\epsilon\bm{z})-f(\bm{x}_j;\theta_t-\epsilon\bm{z})}{2\epsilon}\bm{z} \right\rangle.
\end{eqnarray}

The stability of the NTK is crucial for analyzing how neural networks evolve using FO methods, an area that has seen significant research efforts \citep{jacot2018neural,liu2020linearity}. We show the stability of the expected NZK, and we also establish the connection between the expected NZK and the distribution of random direction vectors.
\begin{thm}\label{thm:enzk}
    Suppose a linear model $f(\bm{x};\theta)$ is trained using ZO optimization with random direction vectors $\bm{z}\sim\mathcal{N}(\mu_{\bm{z}}\bm{1},\sigma_{\bm{z}}^2\bm{I}_d)$, and the rate of change of $f(\bm{x};\theta)$ w.r.t. $\theta$ is estimated following Eq.~\ref{eq:parest} with $\bm{\zeta}\sim\mathcal{N}(\mu_{\bm{\zeta}}\bm{1},\sigma_{\bm{\zeta}}^2\bm{I}_d)$, the corresponding NZK stays constant throughout training in its expected sense, with its entries closely tied to the distribution of the random direction vectors.
    \begin{equation}\label{eq:enzk}
        \resizebox{.98\hsize}{!}{$\mathbb{E}_{\bm{\zeta},\bm{z}} K_{\bm{\zeta},\bm{z}}(\bm{x}_i,\bm{x}_j) =\sigma^2_{\bm{\zeta}}\sigma^2_{\bm{z}}\langle \bm{x}_i,\bm{x}_j \rangle+\sigma^2_{\bm{\zeta}}\mu^2_{\bm{z}}\langle \bm{x}_i,\bm{1}\rangle\langle \bm{1},\bm{x}_j \rangle+\mu^2_{\bm{\zeta}}\sigma^2_{\bm{z}}\langle \bm{x}_i,\bm{1}\rangle\langle \bm{1},\bm{x}_j \rangle+d\mu^2_{\bm{\zeta}}\mu^2_{\bm{z}}\langle \bm{x}_i,\bm{1}\rangle\langle \bm{1},\bm{x}_j \rangle$}.
    \end{equation}
\end{thm}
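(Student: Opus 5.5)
For the linear model $f(\bm{x};\theta)=\langle\theta,\bm{x}\rangle$ the finite difference in Eq.~\ref{eq:parest} is exact and does not depend on $\theta_t$ or $\epsilon$:
\[
\frac{f(\bm{x};\theta_t+\epsilon\bm{\zeta})-f(\bm{x};\theta_t-\epsilon\bm{\zeta})}{2\epsilon}=\frac{\langle\theta_t+\epsilon\bm{\zeta},\bm{x}\rangle-\langle\theta_t-\epsilon\bm{\zeta},\bm{x}\rangle}{2\epsilon}=\langle\bm{\zeta},\bm{x}\rangle .
\]
The same identity holds with $\bm{z}$ in place of $\bm{\zeta}$. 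So the NZK entry in Eq.~\ref{eq:nzk} reduces to
\[
K_{\bm{\zeta},\bm{z}}(\bm{x}_i,\bm{x}_j)=\langle\bm{\zeta},\bm{x}_i\rangle\langle\bm{z},\bm{x}_j\rangle\langle\bm{\zeta},\bm{z}\rangle=\bm{x}_i^\top\bm{\zeta}\,\bm{\zeta}^\top\bm{z}\,\bm{z}^\top\bm{x}_j .
\]
This no longer involves $\theta_t$, so whatever its expectation is, it is the same at every iteration $t$. That gives the time-invariance claim. It remains to compute $\mathbb{E}_{\bm{\zeta},\bm{z}}$ explicitly.

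I take $\bm{\zeta}$ and $\bm{z}$ to be independent, as they are drawn separately. Then
\[
\mathbb{E}\,K=\bm{x}_i^\top\,\mathbb{E}[\bm{\zeta}\bm{\zeta}^\top]\,\mathbb{E}[\bm{z}\bm{z}^\top]\,\bm{x}_j ,
\]
and each factor is a second moment:
\[
\mathbb{E}[\bm{\zeta}\bm{\zeta}^\top]=\sigma_{\bm{\zeta}}^2\bm{I}_d+\mu_{\bm{\zeta}}^2\bm{1}\bm{1}^\top,\qquad \mathbb{E}[\bm{z}\bm{z}^\top]=\sigma_{\bm{z}}^2\bm{I}_d+\mu_{\bm{z}}^2\bm{1}\bm{1}^\top .
\]
Multiplying out, and using $\bm{1}\bm{1}^\top\bm{1}\bm{1}^\top=d\,\bm{1}\bm{1}^\top$, the product matrix is
\[
\sigma_{\bm{\zeta}}^2\sigma_{\bm{z}}^2\bm{I}_d+\left(\sigma_{\bm{\zeta}}^2\mu_{\bm{z}}^2+\mu_{\bm{\zeta}}^2\sigma_{\bm{z}}^2+d\,\mu_{\bm{\zeta}}^2\mu_{\bm{z}}^2\right)\bm{1}\bm{1}^\top .
\]
Sandwiching between $\bm{x}_i^\top$ and $\bm{x}_j$, and writing $\bm{x}_i^\top\bm{1}\bm{1}^\top\bm{x}_j=\langle\bm{x}_i,\bm{1}\rangle\langle\bm{1},\bm{x}_j\rangle$, gives exactly the four terms of Eq.~\ref{eq:enzk}.

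The computation itself is routine. The one step needing care is justifying the factorization of the expectation, which rests on independence of $\bm{\zeta}$ and $\bm{z}$. If instead one takes $\bm{\zeta}=\bm{z}$ (the shared-vector variant mentioned in the introduction), the expectation becomes a fourth moment. Isserlis' theorem would then add extra terms, so that case is excluded by the hypothesis.
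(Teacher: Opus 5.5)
Your proof is correct and follows the paper's argument. The steps match: the exact finite difference $\langle\bm{\zeta},\bm{x}\rangle$ for linear models, the reduction of the NZK entry to $\langle\bm{\zeta},\bm{x}_i\rangle\langle\bm{z},\bm{x}_j\rangle\langle\bm{\zeta},\bm{z}\rangle$, independence factoring the expectation into the second-moment matrices $\sigma^2\bm{I}_d+\mu^2\bm{1}\bm{1}^\top$, and expansion into the four terms.

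The only difference is cosmetic. Writing the scalar as the bilinear form $\bm{x}_i^\top\bm{\zeta}\bm{\zeta}^\top\bm{z}\bm{z}^\top\bm{x}_j$ lets you skip the paper's trace-cyclicity step and its expectation--trace commutation lemma. You also state explicitly the time-invariance (no dependence on $\theta_t$) that the paper leaves implicit.
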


The purpose of using the expectation operation here is twofold: to control the randomness in estimation and to unveil a statistical characteristic of the NZK. Meanwhile, this can be achieved by utilizing batch sampling \citep{duchi2015optimal,liu2020admm}, with a sufficiently large batch size, as per the law of large numbers. To simplify the notation, we define symmetric and positive definite $\bm{\mathcal{K}}_{\bm{\zeta},\bm{z}}\coloneqq\mathbb{E}_{\bm{\zeta},\bm{z}} \bm{K}_{\bm{\zeta},\bm{z}}$.

Typically, in most literature (see \citealp{liu2020primer} for a survey), the components of $\bm{z}$ and $\bm{\zeta}$ are assumed to have zero mean and unit variance, \ie, $\mu_{\bm{z}}=\mu_{\bm{\zeta}}=0$ and $\sigma_{\bm{z}}^2=\sigma_{\bm{\zeta}}^2=1$. Under these conditions, Eq.~\ref{eq:enzk} simplifies to $\bm{\mathcal{K}}_{\bm{\zeta},\bm{z}}(\bm{x}_i,\bm{x}_j)=\langle \bm{x}_i,\bm{x}_j \rangle$, implying that the expected NZK equals its FO counterpart, \ie, NTK \citep{jacot2018neural,zhang2024nonparametric,zhang2025nonparametric,zhang2026nonparametric,zhang2026nint}.  We highlight that this kernel equivalence pertains to the function space. Interestingly, this aligns with the unbiasedness of gradient estimation derived in parameter space \citep{nesterov2017random,berahas2022theoretical}.

Since $\bm{\mathcal{K}}_{\bm{\zeta},\bm{z}}$ does not vary with time, when the loss function is specified as the squared loss, the model dynamics following Eq.~\ref{eq:diff} can be expressed in closed form as:
\begin{eqnarray}\label{eq:closed}
    \left[f_{\theta_t}(\bm{x}_i)\right]_N=\left(\bm{I}_N - \left(\bm{I}_N-\eta\bm{\bar{\mathcal{K}}}_{\bm{\zeta},\bm{z}}\right)^t\right)\left[f^*(\bm{x}_i)\right]_N+\left(\bm{I}_N-\eta\bm{\bar{\mathcal{K}}}_{\bm{\zeta},\bm{z}}\right)^t\left[f_{\theta_0}(\bm{x}_i)\right]_N,
\end{eqnarray}
where $\bm{\bar{\mathcal{K}}}_{\bm{\zeta},\bm{z}}=\bm{\mathcal{K}}_{\bm{\zeta},\bm{z}}/N$. The detailed derivation is deferred to Appendix~\ref{ad:dcf}.
This closed form remains consistent with that of FO methods, with the modification of replacing $\bm{\bar{\mathcal{K}}}_{\bm{\zeta},\bm{z}}$ by $\bar{\bm{K}}$. This suggests that, in terms of expected behavior, linear models evolve similarly under both ZO and FO~optimization.

Interestingly, although $\bm{\zeta}$ and $\bm{z}$ are distinct independent random vectors serving different purposes, they originate from the same $d$-dimensional space. This naturally allows to generate either $\bm{\zeta}$ or $\bm{z}$ and apply it to the other. This approach saves memory and results in an accelerated performance.
\begin{cor}\label{cor:z=zeta}
    When $\bm{\zeta}$ and $\bm{z}$ are identical random vectors with zero mean and independent entries $z_i, i\in\mathbb{N}_d$, meaning using a single random vector for ZO optimization and estimating the rate of change of $f(\bm{x};\theta)$ w.r.t. $\theta$, we have
    \begin{eqnarray}\label{eq:enzkIden}
        \bm{\mathcal{K}}_{\bm{\zeta},\bm{z}}(\bm{x}_i,\bm{x}_j) =\left(\mathbb{V}z_i^2+d\cdot\mathbb{E}^2\left[z_i^2\right]\right)\langle \bm{x}_i,\bm{x}_j \rangle,
    \end{eqnarray}
    where $\mathbb{V}$ denotes the variance operation. If $\bm{\zeta}=\bm{z}\sim\mathcal{N}(\bm{0},\sigma_{\bm{z}}^2\bm{I}_d)$, we have
    \begin{eqnarray}
        \bm{\mathcal{K}}_{\bm{\zeta},\bm{z}}(\bm{x}_i,\bm{x}_j) =(d+2)\sigma_{\bm{z}}^4\langle \bm{x}_i,\bm{x}_j \rangle,
    \end{eqnarray}
    and $\left(z_i/\sigma_{\bm{z}}\right)^2\sim\chi^2(1)$.    
\end{cor}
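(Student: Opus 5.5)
For a linear model $f(\bm{x};\theta)=\langle\theta,\bm{x}\rangle$ the finite difference in Eq.~\ref{eq:parest} is exact: $\frac{f(\bm{x};\theta_t+\epsilon\bm{\zeta})-f(\bm{x};\theta_t-\epsilon\bm{\zeta})}{2\epsilon}=\langle\bm{\zeta},\bm{x}\rangle$, independently of $\theta_t$ and $\epsilon$. Substituting into Eq.~\ref{eq:nzk} with $\bm{\zeta}=\bm{z}$ gives
\begin{equation*}
K_{\bm{z},\bm{z}}(\bm{x}_i,\bm{x}_j)=\langle\bm{z},\bm{x}_i\rangle\langle\bm{z},\bm{x}_j\rangle\|\bm{z}\|^2=\sum_{a,b,c}x_{i,a}x_{j,b}\,z_az_bz_c^2 .
\end{equation*}
This is a deterministic quartic in $\bm{z}$, so taking the expectation reduces the statement to computing the fourth-order mixed moments $\mathbb{E}[z_az_bz_c^2]$. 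We cannot invoke Theorem~\ref{thm:enzk} here, because it assumes $\bm{\zeta}$ and $\bm{z}$ are independent; the whole point of the corollary is that this independence fails.

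Next I split the sum by index coincidences, using that the entries are independent with zero mean. If $a\neq b$, then one of $z_a,z_b$ appears to an odd power alone, namely to the first power, times independent factors. For example, when $c=a\neq b$ the term is $z_a^3z_b$, and its expectation vanishes because $\mathbb{E}z_b=0$. So only $a=b$ survives, and there
\begin{equation*}
\mathbb{E}[z_a^2z_c^2]=\begin{cases}\mathbb{E}[z_a^4], & c=a,\\ \mathbb{E}^2[z_a^2], & c\neq a,\end{cases}
\end{equation*}
taking the entries identically distributed, as implicit in writing $z_i$ generically. Summing over $c$ gives $\mathbb{E}z^4+(d-1)\mathbb{E}^2z^2=\mathbb{V}z^2+d\,\mathbb{E}^2z^2$, since $\mathbb{V}z^2=\mathbb{E}z^4-\mathbb{E}^2z^2$. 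Multiplying by $\sum_a x_{i,a}x_{j,a}=\langle\bm{x}_i,\bm{x}_j\rangle$ yields Eq.~\ref{eq:enzkIden}.

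For the Gaussian case, $z_i/\sigma_{\bm{z}}\sim\mathcal{N}(0,1)$, so $(z_i/\sigma_{\bm{z}})^2\sim\chi^2(1)$. This distribution has mean $1$ and variance $2$, so $\mathbb{E}z_i^2=\sigma_{\bm{z}}^2$ and $\mathbb{V}z_i^2=2\sigma_{\bm{z}}^4$. Plugging these in gives $(2+d)\sigma_{\bm{z}}^4\langle\bm{x}_i,\bm{x}_j\rangle$.

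As in Theorem~\ref{thm:enzk}, the kernel expression never involves $\theta_t$, so the expected NZK is again constant in time.

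The only delicate step is the bookkeeping of index cases in the fourth moment, in particular $c=a\neq b$ and $c=b\neq a$. There one must check that the zero mean kills the odd term $z_a^3z_b$, rather than accidentally requiring symmetry of the distribution.
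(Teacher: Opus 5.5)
Your proof is correct and follows essentially the same route as the paper. The paper uses the trace and its cyclic property to rewrite the kernel as $\mathrm{Tr}\left(\mathbb{E}[\bm{z}\bm{z}^\top\bm{z}\bm{z}^\top]\bm{x}_i\bm{x}_j^\top\right)$, then computes $\mathbb{E}[\bm{z}\bm{z}^\top\bm{z}\bm{z}^\top]=\left(\mathbb{E}[z_i^4]+(d-1)\mathbb{E}^2[z_i^2]\right)\bm{I}_d$ entrywise; this is exactly your index computation of $\sum_c\mathbb{E}[z_az_bz_c^2]$, including killing the $a\neq b$ terms via the zero mean, assuming identically distributed entries, and using the $\chi^2(1)$ variance of $2$ for the Gaussian case.
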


It can be observed that by setting $\bm{z}=\bm{\zeta}\sim\mathcal{N}(\bm{0},\sigma_{\bm{z}}^2\bm{I}_d)$, the expected NZK is scaled by $(d+2)\sigma_{\bm{z}}^4$. This implies the potential to accelerate ZO optimization.

\subsection{Extension to linearized neural networks}

Following \citep{lee2019wide}, we consider linearized networks (without specifying the argument $\bm{x}$)
\begin{eqnarray}
    &f^{\mathrm{lin}}(\theta_t) \!\! 	\equiv \!\!f(\theta_0)+\left\langle\frac{f(\theta_0+\epsilon\bm{u})-f(\theta_0-\epsilon\bm{u})}{2\epsilon}\bm{u},\theta_t-\theta_0\right\rangle \!\!,
\end{eqnarray}
To characterize the training dynamics of $f^{\mathrm{lin}}(\bm{x};\theta_t)$ within the function space, it is necessary to derive the corresponding expected NZK, which is
\begin{eqnarray}
    \bm{\mathcal{K}}_{\bm{u},\bm{\zeta},\bm{z}}(\bm{x}_i,\bm{x}_j)=\Big\langle\widehat{\frac{\partial f(\bm{x}_i;\theta)}{\partial\theta_0}},\widehat{\frac{\partial f(\bm{x}_j;\theta)}{\partial\theta_0}} \Big\rangle,
\end{eqnarray}
where the random direction vectors are assumed to have zero mean and unit variance, and
\begin{eqnarray}
    \widehat{\frac{\partial f(\bm{x}_i;\theta)}{\partial\theta_0}}\coloneqq\mathbb{E}_{\bm{u}} \frac{f(\bm{x}_i;\theta_0+\epsilon\bm{u})-f(\bm{x}_i;\theta_0-\epsilon\bm{u})}{2\epsilon}\bm{u}.\nonumber
\end{eqnarray}
The term $\bm{\mathcal{K}}_{\bm{u},\bm{\zeta},\bm{z}}(\bm{x}_i,\bm{x}_j)$ differs from, yet shares similarities with, the expected NZK of linear models, in the sense that it replaces the input $\bm{x}$ with the estimation of the FO neural tangent random feature. 

It is clear that this expected NZK remains unchanged during training. Consequently, for squared loss, we can derive the closed-form evolution of linearized neural networks as:
\begin{eqnarray}\label{eq:linclo}
    \left[f^{\mathrm{lin}}_{\theta_t}(\bm{x}_i)\right]_N=\left(\bm{I}_N - \left(\bm{I}_N-\eta\bm{\bar{\mathcal{K}}}_{\bm{u},\bm{\zeta},\bm{z}}\right)^t\right)\left[f^*(\bm{x}_i)\right]_N+\left(\bm{I}_N-\eta\bm{\bar{\mathcal{K}}}_{\bm{u},\bm{\zeta},\bm{z}}\right)^t\left[f_{\theta_0}(\bm{x}_i)\right]_N.
\end{eqnarray}
\section{Experiments}

We validate our findings in student-teacher settings and on both synthetic and real datasets. Additional discussion is provided  in Appendix~\ref{app:ade}.

\setlength{\columnsep}{11pt}
\begin{wrapfigure}{r}{0.6\textwidth}
	%\advance\leftskip+1mm
	\vskip -.2in
    \begin{subfigure}[t]{0.45\linewidth}
            \includegraphics[width=\linewidth]{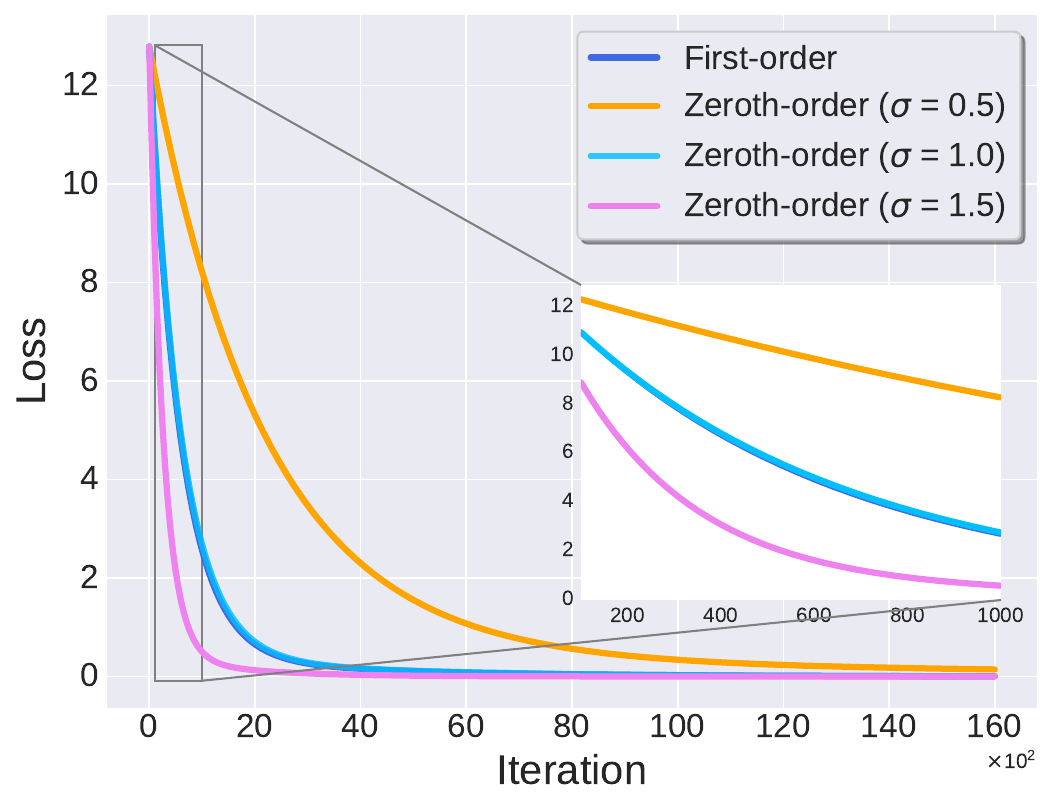}
            \vskip -0.1in
            \caption{Independent.}
            % \label{fig:linIndLoss}
        \end{subfigure}
        \begin{subfigure}[t]{0.45\linewidth}
            \includegraphics[width=\linewidth]{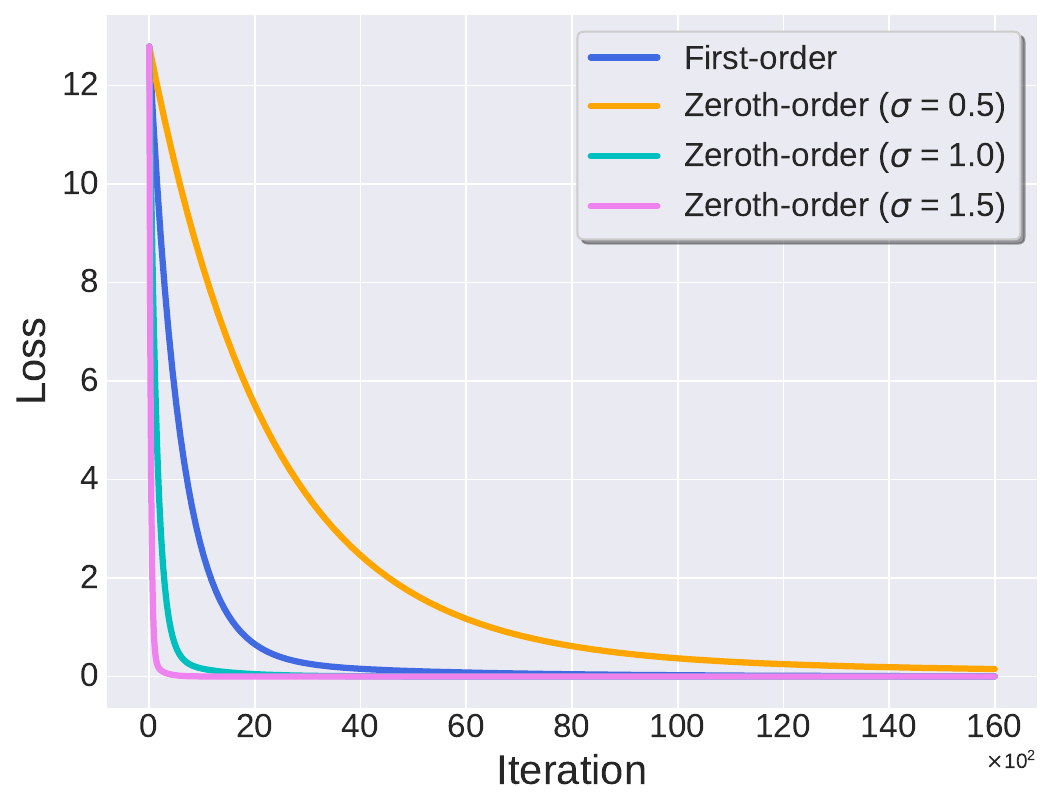}
            \vskip -0.1in
            \caption{Identical.}
            % \label{fig:linEquDiffSigmLoss}
        \end{subfigure}
        \vskip -0.1in
        \caption{Comparison of losses in 2-D tasks under FO and ZO with varying $\sigma_{\bm{z}}$ for sampling. ``Independent" indicates that $\bm{z}$ and $\mathcal{\bm{\zeta}}$ are sampled independently, while ``Identical" denotes that $\bm{\zeta}$ remains the same as $\bm{z}$.}
        \label{fig:linLoss}
	\vskip -.2in
\end{wrapfigure}
\textbf{Linear models}. In practice, $\mu_{\bm{z}}$ is commonly set to zero \citep{liu2020primer}, a setting we adopt in our experiments. We consider a linear model $f(\bm x) = \langle\theta,\bm{x}\rangle$ with $\theta\in\mathbb{R}^2$. The teacher model is defined as $f^*(\bm x;\theta^*)=7.0*x_1 + 2.0*x_2+\delta$, where $\delta$ represents Gaussian perturbation. The training set consists of data points $\bm x$ randomly sampled from a unit circle, \ie, $\bm{x}\in\mathbb{S}^2$. The parameter $\theta$ of the student model $f_0(\bm x;\theta)$ is initialized randomly from a Gaussian distribution $\mathcal{N}(\bm{0}, \bm{I}_2)$. Throughout training, we maintain a fixed learning rate $\eta = $1e-3 and iterations for both FO and ZO optimization. For ZO optimization, we set $\epsilon=$1e-3, sample $\bm \zeta$ from $\mathcal{N}(\bm{0}, \bm{I}_2)$, and estimate $\bm{\mathcal{K}}_{\bm{\zeta},\bm{z}}(\bm{x}_i,\bm{x}_j)$ by averaging over 10,000 random samples.

Figure~\ref{fig:linLoss} (a) shows that with $\sigma_{\bm{z}}=1$, both FO and ZO exhibit similar convergence rates. Besides, we find that for ZO, the evolution rate accelerates as $\sigma_{\bm{z}}$ increases. When considering the scenario where $\bm{\zeta}$ and $\bm{z}$ are identical, using a single random vector for zeroth-order optimization and estimating the rate of change of $f(\bm{x};\theta)$ w.r.t. $\theta$, Figure~\ref{fig:linLoss} (b) shows that this memory-saving method can also speed up convergence without necessitating an increase in variance.  The visualization of model evolutions can be found in Figure~\ref{fig:linIndCom}. These align with Theorem~\ref{thm:enzk} and Corollary~\ref{cor:z=zeta}.

\begin{figure}[th]
	\vskip -0.1in
    \centering
    \begin{minipage}{0.49\linewidth}
        \centering
		\begin{subfigure}[t]{0.49\linewidth}
			\centering %.65
			\includegraphics[width=\linewidth]{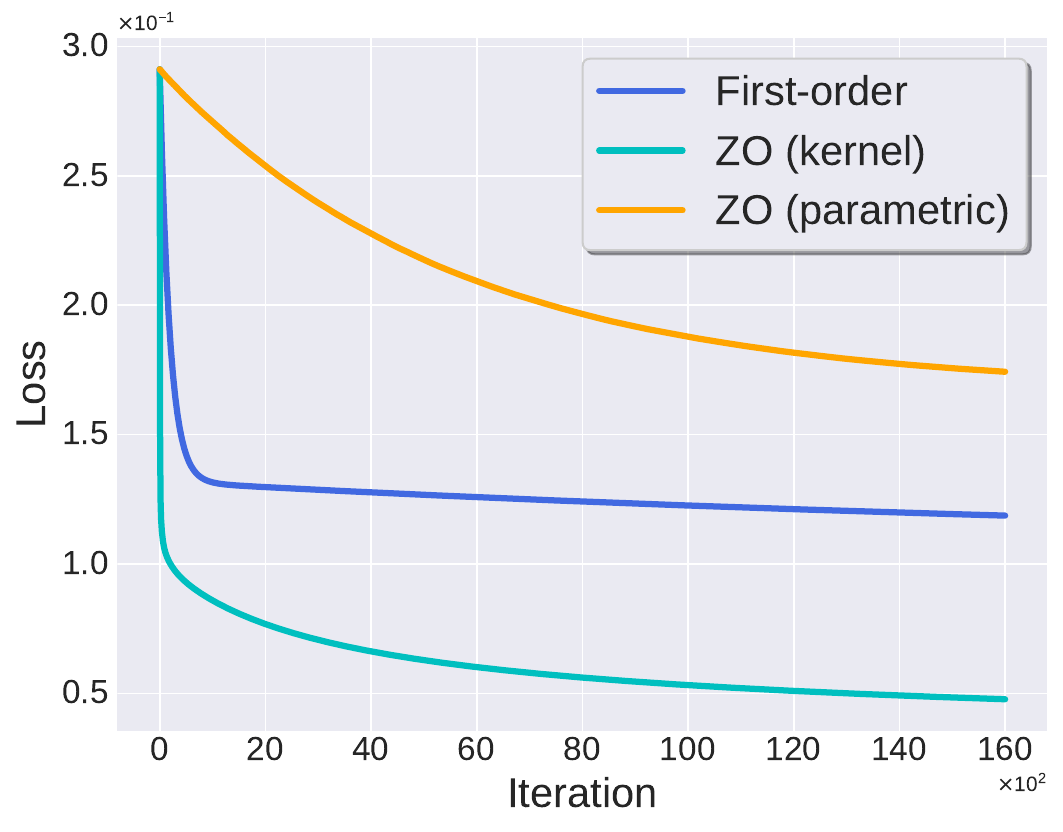}
		\vskip -0.1in
			\caption{CIFAR-10.}
		\end{subfigure}
		\begin{subfigure}[t]{0.49\linewidth}
			\centering%.65
			\includegraphics[width=\linewidth]{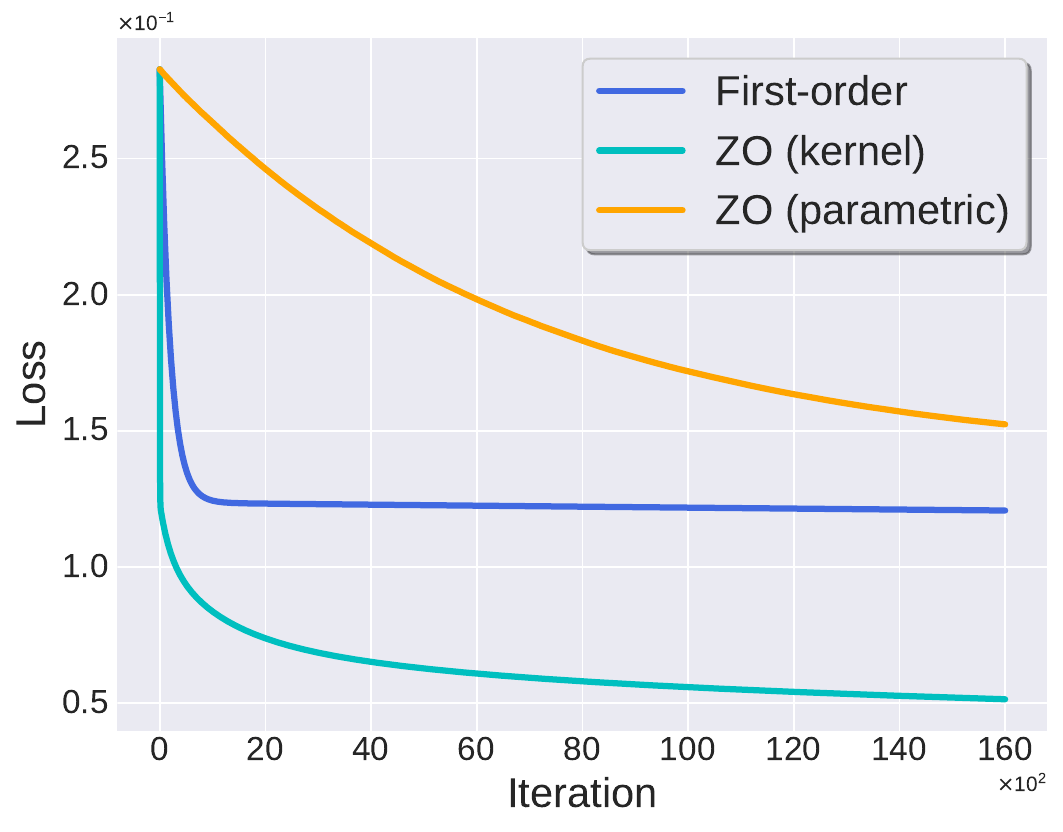}
		\vskip -0.1in
			\caption{Tiny imagenet.}
		\end{subfigure}
		\vskip -0.1in
		\caption{Comparison of losses on CIFAR-10 and Tiny imagenet using the linearized neural network under FO, traditional parametric gradient ZO (``ZO (parametric)") and kernel gradient ZO with identical $\bm{\zeta}$ and $\bm{z}$ (``ZO (kernel)"). 
		}
		\label{fig:linearizedCIFARloss}
    \end{minipage}
    \hfill
    \begin{minipage}{0.49\linewidth}
		\vskip -0.35in
        \centering
		\includegraphics[width=\linewidth]{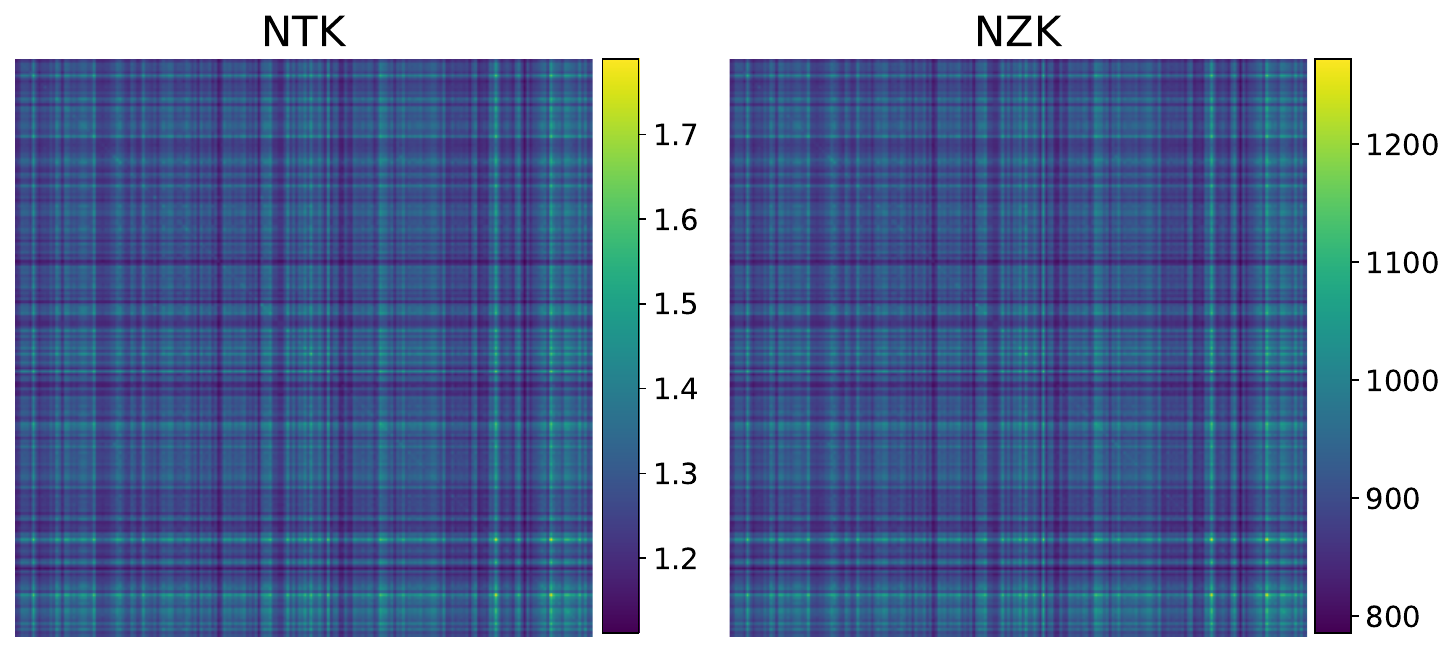}
        \vskip -0.1in
    \caption{Comparison of NZK for linearized neural networks in tiny imagenet classification using FO (left) and ZO (right), with identical $\bm{\zeta}$ and $\bm{z}$.}
    \label{fig:linearizedTITnzk}
    \end{minipage}
	\vskip -0.1in
\end{figure}

\textbf{Linearized neural networks}. We also perform practical classification tasks on the CIFAR-10 and tiny imagenet dataset using the linearized neural network. The outcomes of these tasks highlight NZK's capability to handle more complex tasks and emphasize the effectiveness of examining ZO using kernel gradients. The loss plots shown in Figure~\ref{fig:linearizedCIFARloss} tell that updating with traditional parametric gradient ZO converges more slowly than FO, which in turn performs worse than using kernel gradients ZO. This finding is consistent with our theoretical results. The visualization of NZK on tiny imagenet can be found in Figure~\ref{fig:linearizedTITnzk}, while that for CIFAR-10 are presented in Figure~\ref{fig:linearizedCIFARnzk}.

\section{Conclusion}

We introduced the Neural Zeroth-order Kernel to analyze ZO optimization in function space. The expected NZK is invariant and moment-dependent, yielding closed-form dynamics and suggesting a novel acceleration strategy via shared random directions. These findings pave the way for linking ZO dynamics to recent research analyzing neural networks from a functional perspective \citep{zhang2024nonparametric,zhang2025nonparametric,zhang2026nonparametric,zhang2026nint}.

\section*{Acknowledgments}
This work was supported in part by the Theme-based Research Scheme (TRS) project T45-701/22-R of the Research Grants Council of Hong Kong, and in part by the AVNET-HKU Emerging Microelectronics and Ubiquitous Systems (EMUS) Lab.

\bibliography{main.bib}

@inproceedings{lee2019wide,
  title={Wide neural networks of any depth evolve as linear models under gradient descent},
  author={Lee, Jaehoon and Xiao, Lechao and Schoenholz, Samuel and Bahri, Yasaman and Novak, Roman and Sohl-Dickstein, Jascha and Pennington, Jeffrey},
  booktitle={NeurIPS},
  year={2019}
}

@inproceedings{liu2020linearity,
  title={On the linearity of large non-linear models: when and why the tangent kernel is constant},
  author={Liu, Chaoyue and Zhu, Libin and Belkin, Misha},
  booktitle={NeurIPS},
  year={2020}
}

@inproceedings{lyu2019gradient,
  title={Gradient Descent Maximizes the Margin of Homogeneous Neural Networks},
  author={Lyu, Kaifeng and Li, Jian},
  booktitle={International Conference on Learning Representations},
  year={2019}
}

@inproceedings{hu2020surprising,
  title={The surprising simplicity of the early-time learning dynamics of neural networks},
  author={Hu, Wei and Xiao, Lechao and Adlam, Ben and Pennington, Jeffrey},
  booktitle={NeurIPS},
  year={2020}
}

@inproceedings{xu2023linear,
  title={Linear convergence of gradient descent for finite width over-parametrized linear networks with general initialization},
  author={Xu, Ziqing and Min, Hancheng and Tarmoun, Salma and Mallada, Enrique and Vidal, Ren{\'e}},
  booktitle={AISTAT},
  year={2023}
}

@inproceedings{flaxman2005online,
  title={Online convex optimization in the bandit setting: gradient descent without a gradient},
  author={Flaxman, Abraham D and Kalai, Adam Tauman and McMahan, H Brendan},
  booktitle={Proceedings of the sixteenth annual ACM-SIAM symposium on Discrete algorithms},
  year={2005}
}

@article{duchi2015optimal,
  title={Optimal rates for zero-order convex optimization: The power of two function evaluations},
  author={Duchi, John C and Jordan, Michael I and Wainwright, Martin J and Wibisono, Andre},
  journal={IEEE Transactions on Information Theory},
  volume={61},
  number={5},
  pages={2788--2806},
  year={2015},
  publisher={IEEE}
}

@inproceedings{malladi2023fine,
  title={Fine-tuning language models with just forward passes},
  author={Malladi, Sadhika and Gao, Tianyu and Nichani, Eshaan and Damian, Alex and Lee, Jason D and Chen, Danqi and Arora, Sanjeev},
  booktitle={NeurIPS},
  year={2023}
}

@article{liu2020primer,
  title={A primer on zeroth-order optimization in signal processing and machine learning: Principals, recent advances, and applications},
  author={Liu, Sijia and Chen, Pin-Yu and Kailkhura, Bhavya and Zhang, Gaoyuan and Hero III, Alfred O and Varshney, Pramod K},
  journal={IEEE Signal Processing Magazine},
  volume={37},
  number={5},
  pages={43--54},
  year={2020},
  publisher={IEEE}
}

@article{ruder2016overview,
  title={An overview of gradient descent optimization algorithms},
  author={Ruder, Sebastian},
  journal={arXiv preprint arXiv:1609.04747},
  year={2016}
}

@inproceedings{stiennon2020learning,
  title={Learning to summarize with human feedback},
  author={Stiennon, Nisan and Ouyang, Long and Wu, Jeffrey and Ziegler, Daniel and Lowe, Ryan and Voss, Chelsea and Radford, Alec and Amodei, Dario and Christiano, Paul F},
  booktitle={NeurIPS},
  year={2020}
}

@inproceedings{ouyang2022training,
  title={Training language models to follow instructions with human feedback},
  author={Ouyang, Long and Wu, Jeffrey and Jiang, Xu and Almeida, Diogo and Wainwright, Carroll and Mishkin, Pamela and Zhang, Chong and Agarwal, Sandhini and Slama, Katarina and Ray, Alex and others},
  booktitle={NeurIPS},
  year={2022}
}

@inproceedings{liu2020admm,
  title={An ADMM based framework for automl pipeline configuration},
  author={Liu, Sijia and Ram, Parikshit and Vijaykeerthy, Deepak and Bouneffouf, Djallel and Bramble, Gregory and Samulowitz, Horst and Wang, Dakuo and Conn, Andrew and Gray, Alexander},
  booktitle={AAAI},
  year={2020}
}

@inproceedings{jacot2018neural,
  title={Neural tangent kernel: Convergence and generalization in neural networks},
  author={Jacot, Arthur and Gabriel, Franck and Hongler, Cl{\'e}ment},
  booktitle={NeurIPS},
  year={2018}
}

@inproceedings{zhang2024nonparametric,
  title={Nonparametric Teaching of Implicit Neural Representations},
  author={Zhang, Chen and Luo, Steven Tin Sui and Li, Jason Chun Lok and Wu, Yik-Chung and Wong, Ngai},
  booktitle={ICML},
  year={2024}
}

@article{nesterov2017random,
  title={Random gradient-free minimization of convex functions},
  author={Nesterov, Yurii and Spokoiny, Vladimir},
  journal={Foundations of Computational Mathematics},
  volume={17},
  number={2},
  pages={527--566},
  year={2017},
  publisher={Springer}
}

@inproceedings{arora2019exact,
  title={On exact computation with an infinitely wide neural net},
  author={Arora, Sanjeev and Du, Simon S and Hu, Wei and Li, Zhiyuan and Salakhutdinov, Russ R and Wang, Ruosong},
  booktitle={NeurIPS},
  year={2019}
}

@inproceedings{tropp2012user,
  title={User-friendly tools for random matrices: An introduction},
  author={Tropp, Joel A},
  booktitle={NeurIPS Tutorial},
  year={2012}
}

@inproceedings{zhang2023nonparametric,
  title={Nonparametric iterative machine teaching},
  author={Zhang, Chen and Cao, Xiaofeng and Liu, Weiyang and Tsang, Ivor and Kwok, James},
  booktitle={ICML},
  year={2023}
}

@article{berahas2022theoretical,
  title={A theoretical and empirical comparison of gradient approximations in derivative-free optimization},
  author={Berahas, Albert S and Cao, Liyuan and Choromanski, Krzysztof and Scheinberg, Katya},
  journal={Foundations of Computational Mathematics},
  volume={22},
  number={2},
  pages={507--560},
  year={2022},
  publisher={Springer}
}

@book{hall2013quantum,
	title={Quantum theory for mathematicians},
	author={Hall, Brian C},
	year={2013},
	publisher={Springer}
}

@inproceedings{du2018algorithmic,
  title={Algorithmic regularization in learning deep homogeneous models: Layers are automatically balanced},
  author={Du, Simon S and Hu, Wei and Lee, Jason D},
  booktitle={NeurIPS},
  year={2018}
}

@inproceedings{tianunderstanding,
  title={Understanding the Role of Nonlinearity in Training Dynamics of Contrastive Learning},
  author={Tian, Yuandong},
  booktitle={ICLR},
  year={2023}
}

@inproceedings{wang2025noisezo,
  title={NoiseZO: RRAM Noise-Driven Zeroth-Order Optimization for Efficient Forward-Only Training},
  author={Wang, Shuqi and Liu, Zhengwu and Ding, Chenchen and Zhang, Chen and Wu, Taiqiang and Zhou, Jiajun and Wong, Ngai},
  booktitle={2025 62nd ACM/IEEE Design Automation Conference (DAC)},
  pages={1--7},
  year={2025},
  organization={IEEE}
}

@inproceedings{zhang2023mint,
  title={Nonparametric Teaching for Multiple Learners},
  author={Zhang, Chen and Cao, Xiaofeng and Liu, Weiyang and Tsang, Ivor and Kwok, James},
  booktitle={NeurIPS},
  year={2023}
}

@inproceedings{zhang2025nonparametric,
  title={Nonparametric Teaching for Graph Property Learners},
  author={Zhang, Chen and Bu, Weixin and Ren, Zeyi and Liu, Zhengwu and Wu, Yik-Chung and Wong, Ngai},
  booktitle={ICML},
  year={2025}
}

@inproceedings{zhang2026nonparametric,
  title={Nonparametric Teaching for Attention Learners},
  author={Zhang, Chen and Wang, Jianghui and Cheng, Bingyang and Chen, Zhongtao and Xu, Wendong and Wang, Cong and Canini, Marco and Orabona, Francesco and Wu, Yik-Chung and Wong, Ngai},
  booktitle={ICLR},
  year={2026}
}

@inproceedings{zhang2026nint,
  title={NTK-Guided Implicit Neural Teaching},
  author={Zhang, Chen and Zuo, Wei and Cheng, Bingyang and Wang, Yikun and Kou, Wei-Bin and Wu, Yik-Chung and Wong, Ngai},
  booktitle={CVPR},
  year={2026}
}
\bibliographystyle{iclr2026_conference}

%%%%%%%%%%%%%%%%%%%%%%%%%%%%%%%%%%%%%%%%%%%%%%%%%%%%%%%%%%%%%%%%%%%%%%%%

\clearpage

\newpage

\appendix
\onecolumn

\begin{appendix}
	
	\thispagestyle{plain}
	\begin{center}
		{\Large \bf Appendix}
	\end{center}
	
\end{appendix}

\section{Additional Discussions}
\subsection{Neural Zeroth-order Kernel (NZK)} \label{ad:dpnzk}

It follows from \ref{eq:zog} that we can formulate the parameter dynamics as
\begin{eqnarray}\label{eq:appthed}
    \theta_{t+1}-\theta_t&=&-\frac{\eta}{N}\sum_{i=1}^N\frac{\mathcal{L}(f(\bm{x}_i;\theta_t+\epsilon\bm{z}),y_i)-\mathcal{L}(f(\bm{x}_i;\theta_t-\epsilon\bm{z}),y_i)}{2\epsilon}\cdot\bm{z}\nonumber\\
    &=&-\frac{\eta}{N}\underbrace{\left[\frac{\mathcal{L}(f(\bm{x}_i;\theta_t+\epsilon\bm{z}),y_i)-\mathcal{L}(f(\bm{x}_i,\theta_t-\epsilon\bm{z}),y_i)}{f(\bm{x}_i;\theta_t+\epsilon\bm{z})-f(\bm{x}_i;\theta_t-\epsilon\bm{z})}\right]^T_N}_{(*)}\underbrace{\left[\frac{f(\bm{x}_i;\theta_t+\epsilon\bm{z})-f(\bm{x}_i;\theta_t-\epsilon\bm{z})}{2\epsilon}\cdot\bm{z}\right]_N}_{(**)},
\end{eqnarray}
where $(*)$ stands for estimating the linear approximation of $\left.\frac{\partial\mathcal{L}}{\partial f_{\theta}}\right|_{f_{\theta_t},\bm{x}_i}$, influencing the magnitude in FO gradients, while $(**)$ relates to the direction vector.

Using Equation~\ref{eq:parest}, the estimation of how $f_{\theta}$ changes concerning $\theta$, and applying the chain rule, we can describe the evolution of the model $f_{\theta}$ as
\begin{eqnarray}\label{eq:fdyn}
    f_{\theta_{t+1}}-f_{\theta_t}=\left\langle\frac{f(\cdot;\theta_t+\epsilon\bm{\zeta})-f(\cdot;\theta_t-\epsilon\bm{\zeta})}{2\epsilon}\bm{\zeta},\theta_{t+1}-\theta_t\right\rangle.
\end{eqnarray}
By substituting Equation \ref{eq:appthed} into the preceding equation, we obtain
\begin{eqnarray}\label{eq:appdetdiff}
    f_{\theta_{t+1}}-f_{\theta_t}&=&-\frac{\eta}{N}\left\langle \frac{f(\cdot;\theta_t+\epsilon\bm{\zeta})-f(\cdot;\theta_t-\epsilon\bm{\zeta})}{2\epsilon}\bm{\zeta},\right.\nonumber\\
    &&\qquad\left.\left[\frac{\mathcal{L}(f(\bm{x}_i;\theta_t+\epsilon\bm{z}),y_i)-\mathcal{L}(f(\bm{x}_i;\theta_t-\epsilon\bm{z}),y_i)}{f(\bm{x}_i;\theta_t+\epsilon\bm{z})-f(\bm{x}_i;\theta_t-\epsilon\bm{z})}\right]^T_N\left[\frac{f(\bm{x}_i;\theta_t+\epsilon\bm{z})-f(\bm{x}_i;\theta_t-\epsilon\bm{z})}{2\epsilon}\cdot\bm{z}\right]_N\right\rangle\nonumber\\
    &=&-\frac{\eta}{N}\left[\frac{\mathcal{L}(f(\bm{x}_i;\theta_t+\epsilon\bm{z}),y_i)-\mathcal{L}(f(\bm{x}_i;\theta_t-\epsilon\bm{z}),y_i)}{f(\bm{x}_i;\theta_t+\epsilon\bm{z})-f(\bm{x}_i;\theta_t-\epsilon\bm{z})}\right]^T_N\nonumber\\
    &&\left\langle \frac{f(\cdot;\theta_t+\epsilon\bm{\zeta})-f(\cdot;\theta_t-\epsilon\bm{\zeta})}{2\epsilon}\bm{\zeta},\left[\frac{f(\bm{x}_i;\theta_t+\epsilon\bm{z})-f(\bm{x}_i;\theta_t-\epsilon\bm{z})}{2\epsilon}\cdot\bm{z}\right]_N\right\rangle\nonumber\\
    &=&-\frac{\eta}{N}\left[\frac{\mathcal{L}(f(\bm{x}_i;\theta_t+\epsilon\bm{z}),y_i)-\mathcal{L}(f(\bm{x}_i;\theta_t-\epsilon\bm{z}),y_i)}{f(\bm{x}_i;\theta_t+\epsilon\bm{z})-f(\bm{x}_i;\theta_t-\epsilon\bm{z})}\right]^T_N\nonumber\\
    &&\left[\underbrace{\left\langle \frac{f(\cdot;\theta_t+\epsilon\bm{\zeta})-f(\cdot;\theta_t-\epsilon\bm{\zeta})}{2\epsilon}\bm{\zeta},\frac{f(\bm{x}_i;\theta_t+\epsilon\bm{z})-f(\bm{x}_i;\theta_t-\epsilon\bm{z})}{2\epsilon}\cdot\bm{z}\right\rangle}_{(\Xi)}\right]_N,
\end{eqnarray}
where $(\Xi)$ defines a kernel $\bm{K}_{\bm{\zeta},\bm{z}}$ on the training set. This kernel matrix $\bm{K}_{\bm{\zeta},\bm{z}}$ is named the Neural Zeroth-order Kernel, and its entries are defined as
\begin{eqnarray}\label{eq:appnzk}
    K_{\bm{\zeta},\bm{z}}(\bm{x}_i,\bm{x}_j)=\left\langle\frac{f(\bm{x}_i;\theta_t+\epsilon\bm{\zeta})-f(\bm{x}_i;\theta_t-\epsilon\bm{\zeta})}{2\epsilon}\bm{\zeta},\frac{f(\bm{x}_j;\theta_t+\epsilon\bm{z})-f(\bm{x}_j;\theta_t-\epsilon\bm{z})}{2\epsilon}\bm{z} \right\rangle.
\end{eqnarray}

In simple terms, the correlation between NZK and NTK can be informally explained via the perturbation $\epsilon$. Specifically, for differentiable models $f(\bm{x};\theta)$, NTK is included in the NZK set with respect to random variables $\bm{\zeta}$ and $\bm{z}$ when the perturbation becomes infinitesimal \citep{flaxman2005online}, that is, 
\begin{eqnarray}
    K(\bm{x}_i,\bm{x}_j)\in \underset{\epsilon\to 0}{\lim}\,K_{\bm{\zeta},\bm{z}}(\bm{x}_i,\bm{x}_j).
\end{eqnarray}

\subsection{Derivation of Equation~\ref{eq:closed}, \ie, the closed form of model dynamics} \label{ad:dcf}

In the case of squared loss, represented as $\mathcal{L}(f,f^*)=\frac{1}{2}(f-f^*)^2$, and with $f^*(\bm{x}_i)\equiv y_i$ signifying the target of $\bm{x}$, Equation \ref{eq:diff} can be expressed as
\begin{eqnarray}\label{eq:appdiff}
    f_{\theta_{t+1}}-f_{\theta_t}&=&-\frac{\eta}{2N}\left[\frac{\left(f(\bm{x}_i;\theta_t+\epsilon\bm{z})-f^*(\bm{x}_i)\right)^2-\left(f(\bm{x}_i;\theta_t-\epsilon\bm{z})-f^*(\bm{x}_i)\right)^2}{f(\bm{x}_i;\theta_t+\epsilon\bm{z})-f(\bm{x}_i;\theta_t-\epsilon\bm{z})}\right]^T_N\cdot\left[K_{\bm{\zeta},\bm{z}}(\cdot,\bm{x}_i)\right]_N\nonumber\\
    &=&-\frac{\eta}{2N}\left[f(\bm{x}_i;\theta_t+\epsilon\bm{z})+f(\bm{x}_i;\theta_t-\epsilon\bm{z})-2f^*(\bm{x}_i)\right]^T_N\cdot\left[K_{\bm{\zeta},\bm{z}}(\cdot,\bm{x}_i)\right]_N.
\end{eqnarray}
Regarding the summation term $f(\bm{x};\theta_t+\epsilon\bm{z})+f(\bm{x};\theta_t-\epsilon\bm{z})$, we obtain
\begin{eqnarray}
    f(\bm{x};\theta_t+\epsilon\bm{z})+f(\bm{x};\theta_t-\epsilon\bm{z})&=&\langle\theta_t+\epsilon\bm{z},\bm{x}\rangle+\langle\theta_t-\epsilon\bm{z},\bm{x}\rangle\nonumber\\
    &=&\langle2\theta_t,\bm{x}\rangle\nonumber\\
    &=&2 f(\bm{x};\theta_t).
\end{eqnarray}
Hence, we can simplify Equation~\ref{eq:appdiff} to
\begin{eqnarray}\label{eq:appmadiffer}
    f_{\theta_{t+1}}-f_{\theta_t}&=&-\frac{\eta}{N}\left[f(\bm{x}_i;\theta_t)-f^*(\bm{x}_i)\right]^T_N\cdot\left[K_{\bm{\zeta},\bm{z}}(\cdot,\bm{x}_i)\right]_N\nonumber\\
    &=&-\frac{\eta}{N}\left[f_{\theta_t}(\bm{x}_i)-f^*(\bm{x}_i)\right]^T_N\cdot\left[K_{\bm{\zeta},\bm{z}}(\cdot,\bm{x}_i)\right]_N.
\end{eqnarray}

After taking the expectation, we define $\bm{\mathcal{K}}_{\bm{\zeta},\bm{z}}$ as $\mathbb{E}_{\bm{\zeta},\bm{z}} \bm{K}_{\bm{\zeta},\bm{z}}$. According to Theorem \ref{thm:enzk}, it can be seen that $\bm{\mathcal{K}}_{\bm{\zeta},\bm{z}}$ remains constant over time. By substituting the inputs into Equation~\ref{eq:appmadiffer}, the expected model dynamics can be written as
\begin{eqnarray}\label{eq:appadjrel}
    \left[f_{\theta_{t+1}}(\bm{x}_i)-f_{\theta_t}(\bm{x}_i)\right]_N&=&-\frac{\eta}{N}\bm{\mathcal{K}}_{\bm{\zeta},\bm{z}}\cdot\left[f_{\theta_t}(\bm{x}_i)-f^*(\bm{x}_i)\right]_N\nonumber\\
    \therefore\quad\left[f_{\theta_{t+1}}(\bm{x}_i)-f^*(\bm{x}_i)\right]_N-\left[f_{\theta_t}(\bm{x}_i)-f^*(\bm{x}_i)\right]_N&=&-\frac{\eta}{N}\bm{\mathcal{K}}_{\bm{\zeta},\bm{z}}\cdot\left[f_{\theta_t}(\bm{x}_i)-f^*(\bm{x}_i)\right]_N\nonumber\\
    \therefore\quad\left[f_{\theta_{t+1}}(\bm{x}_i)-f^*(\bm{x}_i)\right]_N&=&\left(\bm{I}_N-\eta\bar{\bm{\mathcal{K}}}_{\bm{\zeta},\bm{z}}\right)\cdot\left[f_{\theta_t}(\bm{x}_i)-f^*(\bm{x}_i)\right]_N.
\end{eqnarray}
where $\bm{\bar{\mathcal{K}}}_{\bm{\zeta},\bm{z}}=\bm{\mathcal{K}}_{\bm{\zeta},\bm{z}}/N$. Consequently, using the relationship of adjacent models as shown in Equation~\ref{eq:appadjrel}, we derive
\begin{eqnarray}
    \left[f_{\theta_t}(\bm{x}_i)-f^*(\bm{x}_i)\right]_N&=&\left(\bm{I}_N-\eta\bar{\bm{\mathcal{K}}}_{\bm{\zeta},\bm{z}}\right)^t\cdot\left[f_{\theta_0}(\bm{x}_i)-f^*(\bm{x}_i)\right]_N,
\end{eqnarray}
that is,
\begin{eqnarray}\label{eq:appffs}
    \left[f_{\theta_t}(\bm{x}_i)\right]_N&=&\left[f^*(\bm{x}_i)\right]_N+\left(\bm{I}_N-\eta\bar{\bm{\mathcal{K}}}_{\bm{\zeta},\bm{z}}\right)^t\cdot\left[f_{\theta_0}(\bm{x}_i)\right]_N-\left(\bm{I}_N-\eta\bar{\bm{\mathcal{K}}}_{\bm{\zeta},\bm{z}}\right)^t\left[f^*(\bm{x}_i)\right]_N\nonumber\\
    &=&\left(\bm{I}_N - \left(\bm{I}_N-\eta\bar{\bm{\mathcal{K}}}_{\bm{\zeta},\bm{z}}\right)^t\right)\left[f^*(\bm{x}_i)\right]_N+\left(\bm{I}_N-\eta\bar{\bm{\mathcal{K}}}_{\bm{\zeta},\bm{z}}\right)^t\left[f_{\theta_0}(\bm{x}_i)\right]_N,
\end{eqnarray}
which is precisely Equation~\ref{eq:closed}.

Given the symmetric and positive definite nature of $\bar{\bm{\mathcal{K}}}_{\bm{\zeta},\bm{z}}$, it can be orthogonally diagonalized as $\bar{\bm{\mathcal{K}}}_{\bm{\zeta},\bm{z}}=\bm{V}\bm{\Lambda} \bm{V}^\top$ according to the spectral theorem~\citep{hall2013quantum}. Here, $\bm{V}=[\bm{v}_1,\cdots,\bm{v}_N]$ is a matrix of column vectors $\bm{v}_i$, which are the eigenvectors corresponding to the eigenvalues $\lambda_i$, and $\bm{\Lambda}=\text{diag}(\lambda_1,\cdots,\lambda_N)$  is a diagonal matrix with ordered entries ($\lambda_1\geq\cdots\geq\lambda_N$). Because $\bm{I}_N=\bm{V}\bm{V}^\top$, we have
\begin{eqnarray}
    \bm{I}_N-\eta\bar{\bm{\mathcal{K}}}_{\bm{\zeta},\bm{z}}&=&\bm{V}\left(\bm{I}_N-\eta\bm{\Lambda}\right)\bm{V}^\top\nonumber\\
    \therefore\left(\bm{I}_N-\eta\bar{\bm{\mathcal{K}}}_{\bm{\zeta},\bm{z}}\right)^t&=&\bm{V}\left(\bm{I}_N-\eta\bm{\Lambda}\right)^t\bm{V}^\top\nonumber\\
    &=&\bm{V}\begin{pmatrix}
    \left(1-\eta\lambda_1\right)^t & \cdots & 0\\
    \vdots & \ddots & \vdots \\
    0 & \cdots & \left(1-\eta\lambda_N\right)^t \\
    \end{pmatrix}\bm{V}^\top.
\end{eqnarray}
Since $\eta$ is a small positive scalar, we conclude that $1-\eta\lambda_i<1$ for all $i\in\mathbb{N}_N$. Therefore, we obtain
\begin{eqnarray}
    \underset{t\to\infty}{\lim}\begin{pmatrix}
    \left(1-\eta\lambda_1\right)^t & \cdots & 0\\
    \vdots & \ddots & \vdots \\
    0 & \cdots & \left(1-\eta\lambda_N\right)^t \\
    \end{pmatrix}\to \bm{0}_{N\times N}.
\end{eqnarray}
Given the asymptotic behavior of the matrix $\left(\bm{I}_N-\eta\bm{\Lambda}\right)^t$, Equation~\ref{eq:appffs} indicates that as $t\to\infty$,
\begin{eqnarray}
    \underset{t\to\infty}{\lim}f_{\theta_t}=f^*,
\end{eqnarray}
which shows the convergence of training within the function space. 

\subsection{Impact of hierarchical layer structures}

To explore the influence of hierarchical layer structures in neural networks while employing ZO optimization, we examine a two-layer linear neural network of width $\omega$ as a specific example, defined by
\begin{eqnarray}
    f(\bm{x};\theta)=\left\langle\theta^{(1)}_{n\times \omega}\cdot\theta^{(2)}_{\omega\times1},\bm{x}\right\rangle\nonumber,
\end{eqnarray}
where $\theta\coloneqq\{\theta^{(1)},\theta^{(2)}\}$ with a total size of $d=n(\omega+1)$, and $\theta^{(i)}$ denotes the parameters of the $i$-th layer.

From $(**)$ in Eq.~\ref{eq:rezog} and Eq.~\ref{eq:nzk}, it follows that the finite difference $(f(\cdot;\theta_t+\epsilon\bm{z})-f(\cdot;\theta_t-\epsilon\bm{z}))/2\epsilon$ – that is, the estimation of the rate magnitude of change of $f(\bm{x};\theta)$ with respect to $\theta$ – holds significant importance in ZO optimization. This prompts us to closely examine this term in order to understand the behavior of hierarchical layer structures.
\begin{prop}\label{prop:dec}
    In a two-layer linear neural network $f(\bm{x};\theta)$, estimating the rate magnitude of change of $f(\bm{x};\theta)$ with respect to $\theta$ as per Eq.~\ref{eq:parest} can be equated to summing the estimates alternately conducted for each layer while keeping the other layers fixed,
    \begin{eqnarray}
        \frac{f(\theta_t+\epsilon\bm{z})-f(\theta_t-\epsilon\bm{z})}{2\epsilon}=\frac{f\left(\theta_t^{(1)}+\epsilon\bm{z}^{(1)}\right)-f\left(\theta_t^{(1)}-\epsilon\bm{z}^{(1)}\right)}{2\epsilon}+\frac{f\left(\theta_t^{(2)}+\epsilon\bm{z}^{(2)}\right)-f\left(\theta_t^{(2)}-\epsilon\bm{z}^{(2)}\right)}{2\epsilon},
    \end{eqnarray}
    where we omit the argument $\bm{x}$ and fixed parameters, and $\bm{z}^{(i)}$ denotes a random direction vector of the same dimensions as $\theta_t^{(i)}$.
\end{prop}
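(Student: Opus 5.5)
The plan is a direct algebraic expansion. The key fact is that the two-layer linear network is \emph{bilinear} in $(\theta^{(1)},\theta^{(2)})$. Central finite differences of a bilinear map are therefore exact and split additively over the two blocks, because the second-order cross term in $\epsilon$ is even and cancels. I would write $f(\bm{x};\theta)=\bm{x}^\top\theta^{(1)}\theta^{(2)}$ and partition the random direction $\bm{z}\in\mathbb{R}^{d}$, $d=n(\omega+1)$, into blocks $\bm{z}^{(1)}\in\mathbb{R}^{n\times\omega}$ and $\bm{z}^{(2)}\in\mathbb{R}^{\omega\times 1}$, conformally with $\theta=\{\theta^{(1)},\theta^{(2)}\}$. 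The same blocks $\bm{z}^{(i)}$ are then used on the right-hand side of the claimed identity.

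First, I would expand the left-hand side:
\begin{eqnarray}
f(\theta_t+\epsilon\bm{z})&=&\bm{x}^\top\big(\theta_t^{(1)}+\epsilon\bm{z}^{(1)}\big)\big(\theta_t^{(2)}+\epsilon\bm{z}^{(2)}\big)\nonumber\\
&=&\bm{x}^\top\theta_t^{(1)}\theta_t^{(2)}+\epsilon\,\bm{x}^\top\big(\bm{z}^{(1)}\theta_t^{(2)}+\theta_t^{(1)}\bm{z}^{(2)}\big)+\epsilon^2\,\bm{x}^\top\bm{z}^{(1)}\bm{z}^{(2)}.\nonumber
\end{eqnarray}
Replacing $\epsilon$ by $-\epsilon$ gives $f(\theta_t-\epsilon\bm{z})$. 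The zeroth- and second-order terms are even in $\epsilon$, so they cancel in the difference, leaving
\begin{eqnarray}
\frac{f(\theta_t+\epsilon\bm{z})-f(\theta_t-\epsilon\bm{z})}{2\epsilon}=\bm{x}^\top\bm{z}^{(1)}\theta_t^{(2)}+\bm{x}^\top\theta_t^{(1)}\bm{z}^{(2)}.\nonumber
\end{eqnarray}

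Second, I would compute each single-layer estimate with the other layer held fixed. Because $f$ is linear in $\theta^{(1)}$ when $\theta^{(2)}$ is fixed,
\begin{eqnarray}
\frac{f(\theta_t^{(1)}+\epsilon\bm{z}^{(1)})-f(\theta_t^{(1)}-\epsilon\bm{z}^{(1)})}{2\epsilon}=\bm{x}^\top\bm{z}^{(1)}\theta_t^{(2)}.\nonumber
\end{eqnarray}
This is exactly the computation used in Appendix~\ref{ad:dcf}, where $f(\theta_t+\epsilon\bm{z})+f(\theta_t-\epsilon\bm{z})=2f(\theta_t)$ for linear models. Symmetrically, the layer-two estimate equals $\bm{x}^\top\theta_t^{(1)}\bm{z}^{(2)}$. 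Adding the two single-layer estimates reproduces the expression from the first step, which proves the identity. It holds exactly, for every $\epsilon>0$ and every realization of $\bm{z}$, not merely in expectation.

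There is no real analytic obstacle. The only point needing care is bookkeeping: the vector $\bm{z}$ in Eq.~\ref{eq:parest} must be identified with the pair of blocks $(\bm{z}^{(1)},\bm{z}^{(2)})$, so that the same random draws appear on both sides. It also has to be checked that the $\epsilon^2\bm{x}^\top\bm{z}^{(1)}\bm{z}^{(2)}$ cross term, which is the only place the two layers interact, is even in $\epsilon$ and hence vanishes under central differencing. I would close with a remark that the argument extends by induction to any network that is multilinear in its layer parameters. For $L\geq 3$ layers, however, odd cross terms of order $\epsilon^3$ survive, so there the decomposition holds only up to $O(\epsilon^2)$.
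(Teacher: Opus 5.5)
Your proposal is correct and follows essentially the same route as the paper's proof. Both expand the bilinear product at $\pm\epsilon\bm{z}$, cancel the even-order terms (including the $\epsilon^2$ cross term), and then match the two surviving first-order terms with the single-layer central differences computed while the other layer is held fixed. Your explicit identification of $\bm{z}$ with the blocks $(\bm{z}^{(1)},\bm{z}^{(2)})$ is left implicit in the paper, and your closing remark that for $L\geq 3$ layers the identity holds only up to an $O(\epsilon^2)$ remainder is a correct refinement the paper does not make.
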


\begin{proof}
 See Proof of Proposition \ref{prop:dec} in Appendix~\ref{prop:pdec}.
\end{proof} 

This implies that for ZO optimization, we  can estimate the rate of change either across all parameters simultaneously or layer by layer. The layer-by-layer approach requires less memory but increases training time. The underlying reason for this equivalence lies in the independence of parameters during gradient computation, where the order of layers is also irrelevant.
Interestingly, this finding aligns with the positive semi-definite Hermitian linear operator in \citep{xu2023linear} and hierarchical decomposition in \citep{hu2020surprising}. In contrast, FO methods necessitate propagating gradients backward from later to earlier layers.

However, it is worth noting that this term can be intuitively understood as the square root of NZK. This implies that while it can be decomposed, NZK itself is not a linear function of this term, hence NZK cannot be regarded as the sum of decomposed kernels.

\subsection{Derivation for linearized neural networks} \label{ad:lnn}
For linearized neural networks (LNNs), $f^{\mathrm{lin}}(\theta_t)$ is defined as
\begin{eqnarray}
    f^{\mathrm{lin}}(\theta_t)\coloneqq f(\theta_0)+\left\langle\frac{f(\theta_0+\epsilon\bm{u})-f(\theta_0-\epsilon\bm{u})}{2\epsilon}\bm{u},\theta_t-\theta_0\right\rangle,
\end{eqnarray}
where the rate of change of $f^{\mathrm{lin}}(\theta_t)$  with respect to $\theta_t$ can be approximated using Equation~\ref{eq:parest} as
\begin{eqnarray}
    &&\frac{f^{\mathrm{lin}}(\theta_t+\epsilon\bm{\zeta})-f^{\mathrm{lin}}(\theta_t-\epsilon\bm{\zeta})}{2\epsilon}\bm{\zeta}\nonumber\\
    &=&\frac{f(\theta_0)+\left\langle\frac{f(\theta_0+\epsilon\bm{u})-f(\theta_0-\epsilon\bm{u})}{2\epsilon}\bm{u},\theta_t+\epsilon\bm{\zeta}-\theta_0\right\rangle-\left(f(\theta_0)+\left\langle\frac{f(\theta_0+\epsilon\bm{u})-f(\theta_0-\epsilon\bm{u})}{2\epsilon}\bm{u},\theta_t-\epsilon\bm{\zeta}-\theta_0\right\rangle\right)}{2\epsilon}\bm{\zeta}\nonumber\\
    &=&\frac{\left\langle\frac{f(\theta_0+\epsilon\bm{u})-f(\theta_0-\epsilon\bm{u})}{2\epsilon}\bm{u},2\epsilon\bm{\zeta}\right\rangle}{2\epsilon}\bm{\zeta}\nonumber\\
    &=&\left\langle\frac{f(\theta_0+\epsilon\bm{u})-f(\theta_0-\epsilon\bm{u})}{2\epsilon}\bm{u},\bm{\zeta}\right\rangle\bm{\zeta}.
\end{eqnarray}
In this scenario, the magnitude represented by $\left\langle\frac{f(\theta_0+\epsilon\bm{u})-f(\theta_0-\epsilon\bm{u})}{2\epsilon}\bm{u},\bm{\zeta}\right\rangle$  is not equivalent to $f^{\mathrm{lin}}(\bm{\zeta})$, as observed in the context of linear models (see Equation~\ref{eq:appdiffterm}).

Similar to Equation~\ref{eq:appdetdiff}, we can derive
\begin{eqnarray}
    &&f^{\mathrm{lin}}_{\theta_{t+1}}-f^{\mathrm{lin}}_{\theta_t}\nonumber\\
    &=&-\frac{\eta}{N}\left\langle \frac{f^{\mathrm{lin}}(\cdot;\theta_t+\epsilon\bm{\zeta})-f^{\mathrm{lin}}(\cdot;\theta_t-\epsilon\bm{\zeta})}{2\epsilon}\bm{\zeta},\right.\nonumber\\
    &&\qquad\left.\left[\frac{\mathcal{L}(f^{\mathrm{lin}}(\bm{x}_i;\theta_t+\epsilon\bm{z}),y_i)-\mathcal{L}(f^{\mathrm{lin}}(\bm{x}_i;\theta_t-\epsilon\bm{z}),y_i)}{f^{\mathrm{lin}}(\bm{x}_i;\theta_t+\epsilon\bm{z})-f^{\mathrm{lin}}(\bm{x}_i;\theta_t-\epsilon\bm{z})}\right]^T_N\left[\frac{f^{\mathrm{lin}}(\bm{x}_i;\theta_t+\epsilon\bm{z})-f^{\mathrm{lin}}(\bm{x}_i;\theta_t-\epsilon\bm{z})}{2\epsilon}\cdot\bm{z}\right]_N\right\rangle\nonumber\\
    &=&-\frac{\eta}{N}\left[\frac{\mathcal{L}(f^{\mathrm{lin}}(\bm{x}_i;\theta_t+\epsilon\bm{z}),y_i)-\mathcal{L}(f^{\mathrm{lin}}(\bm{x}_i;\theta_t-\epsilon\bm{z}),y_i)}{f^{\mathrm{lin}}(\bm{x}_i;\theta_t+\epsilon\bm{z})-f^{\mathrm{lin}}(\bm{x}_i;\theta_t-\epsilon\bm{z})}\right]^T_N\nonumber\\
    &&\left\langle \frac{f^{\mathrm{lin}}(\cdot;\theta_t+\epsilon\bm{\zeta})-f^{\mathrm{lin}}(\cdot;\theta_t-\epsilon\bm{\zeta})}{2\epsilon}\bm{\zeta},\left[\frac{f^{\mathrm{lin}}(\bm{x}_i;\theta_t+\epsilon\bm{z})-f^{\mathrm{lin}}(\bm{x}_i;\theta_t-\epsilon\bm{z})}{2\epsilon}\cdot\bm{z}\right]_N\right\rangle\nonumber\\
    &=&-\frac{\eta}{N}\left[\frac{\mathcal{L}(f^{\mathrm{lin}}(\bm{x}_i;\theta_t+\epsilon\bm{z}),y_i)-\mathcal{L}(f^{\mathrm{lin}}(\bm{x}_i;\theta_t-\epsilon\bm{z}),y_i)}{f^{\mathrm{lin}}(\bm{x}_i;\theta_t+\epsilon\bm{z})-f^{\mathrm{lin}}(\bm{x}_i;\theta_t-\epsilon\bm{z})}\right]^T_N\nonumber\\
    &&\left\langle\left\langle\frac{f(\cdot;\theta_0+\epsilon\bm{u})-f(\cdot;\theta_0-\epsilon\bm{u})}{2\epsilon}\bm{u},\bm{\zeta}\right\rangle\bm{\zeta},\left[\left\langle\frac{f(\bm{x}_i;\theta_0+\epsilon\bm{u})-f(\bm{x}_i;\theta_0-\epsilon\bm{u})}{2\epsilon}\bm{u},\bm{z}\right\rangle\bm{z}\right]_N\right\rangle\nonumber\\
    &=&-\frac{\eta}{N}\left[\frac{\mathcal{L}(f^{\mathrm{lin}}(\bm{x}_i;\theta_t+\epsilon\bm{z}),y_i)-\mathcal{L}(f^{\mathrm{lin}}(\bm{x}_i;\theta_t-\epsilon\bm{z}),y_i)}{f^{\mathrm{lin}}(\bm{x}_i;\theta_t+\epsilon\bm{z})-f^{\mathrm{lin}}(\bm{x}_i;\theta_t-\epsilon\bm{z})}\right]^T_N\nonumber\\
    &&\left[\underbrace{\left\langle\left\langle\frac{f(\cdot;\theta_0+\epsilon\bm{u})-f(\cdot;\theta_0-\epsilon\bm{u})}{2\epsilon}\bm{u},\bm{\zeta}\right\rangle\bm{\zeta},\left\langle\frac{f(\bm{x}_i;\theta_0+\epsilon\bm{u})-f(\bm{x}_i;\theta_0-\epsilon\bm{u})}{2\epsilon}\bm{u},\bm{z}\right\rangle\bm{z}\right\rangle}_{(\Gamma)}\right]_N,
\end{eqnarray}
where $(\Gamma)$ defines the NZK of LNNs, with its entry being
\begin{eqnarray}
    K_{\bm{u},\bm{\zeta},\bm{z}}(\bm{x}_i,\bm{x}_j)&=&\left\langle\left\langle\frac{f(\bm{x}_i;\theta_0+\epsilon\bm{u})-f(\bm{x}_i;\theta_0-\epsilon\bm{u})}{2\epsilon}\bm{u},\bm{\zeta}\right\rangle\bm{\zeta},\left\langle\frac{f(\bm{x}_j;\theta_0+\epsilon\bm{u})-f(\bm{x}_j;\theta_0-\epsilon\bm{u})}{2\epsilon}\bm{u},\bm{z}\right\rangle\bm{z}\right\rangle\nonumber\\
    &=&\left\langle\frac{f(\bm{x}_i;\theta_0+\epsilon\bm{u})-f(\bm{x}_i;\theta_0-\epsilon\bm{u})}{2\epsilon}\bm{u},\bm{\zeta}\right\rangle\left\langle\frac{f(\bm{x}_j;\theta_0+\epsilon\bm{u})-f(\bm{x}_j;\theta_0-\epsilon\bm{u})}{2\epsilon}\bm{u},\bm{z}\right\rangle\left\langle\bm{\zeta},\bm{z}\right\rangle.
\end{eqnarray}
Following the steps outlined in the proof of Theorem \ref{thm:enzk} in Appendix~\ref{thm:penzk} and employing the trace operation method, we can establish that
\begin{eqnarray}
    \mathbb{E}_{\bm{u},\bm{\zeta},\bm{z}} K_{\bm{u},\bm{\zeta},\bm{z}}(\bm{x}_i,\bm{x}_j)=\left\langle\underbrace{\mathbb{E}_{\bm{u}} \frac{f(\bm{x}_i;\theta_0+\epsilon\bm{u})-f(\bm{x}_i;\theta_0-\epsilon\bm{u})}{2\epsilon}\bm{u}}_{(\Phi)},\mathbb{E}_{\bm{u}} \frac{f(\bm{x}_j;\theta_0+\epsilon\bm{u})-f(\bm{x}_j;\theta_0-\epsilon\bm{u})}{2\epsilon}\bm{u} \right\rangle.
\end{eqnarray}
We observe that $(\Phi)$ can serve as a ZO estimation of $\frac{\partial f(\bm{x}_i;\theta)}{\partial\theta_0}$. Thus, by defining $\widehat{\frac{\partial f(\bm{x}_i;\theta)}{\partial\theta_0}}\coloneqq\mathbb{E}_{\bm{u}} \frac{f(\bm{x}_i;\theta_0+\epsilon\bm{u})-f(\bm{x}_i;\theta_0-\epsilon\bm{u})}{2\epsilon}\bm{u}$, we obtain
\begin{eqnarray}
    \mathbb{E}_{\bm{u},\bm{\zeta},\bm{z}} K_{\bm{u},\bm{\zeta},\bm{z}}(\bm{x}_i,\bm{x}_j)=\bm{\mathcal{K}}_{\bm{u},\bm{\zeta},\bm{z}}(\bm{x}_i,\bm{x}_j)=\left\langle\widehat{\frac{\partial f(\bm{x}_i;\theta)}{\partial\theta_0}},\widehat{\frac{\partial f(\bm{x}_j;\theta)}{\partial\theta_0}} \right\rangle.
\end{eqnarray}

Following the procedure to derive Equation~\ref{eq:closed} for linear models, we can also derive
\begin{eqnarray}
    \left[f^{\mathrm{lin}}_{\theta_t}(\bm{x}_i)\right]_N=\left(\bm{I}_N - \left(\bm{I}_N-\eta\bm{\bar{\mathcal{K}}}_{\bm{u},\bm{\zeta},\bm{z}}\right)^t\right)\left[f^*(\bm{x}_i)\right]_N+\left(\bm{I}_N-\eta\bm{\bar{\mathcal{K}}}_{\bm{u},\bm{\zeta},\bm{z}}\right)^t\left[f^{\mathrm{lin}}_{\theta_0}(\bm{x}_i)\right]_N.
\end{eqnarray}
Since $f^{\mathrm{lin}}_{\theta_0}=f_{\theta_0}$, we have
\begin{eqnarray}
    \left[f^{\mathrm{lin}}_{\theta_t}(\bm{x}_i)\right]_N=\left(\bm{I}_N - \left(\bm{I}_N-\eta\bm{\bar{\mathcal{K}}}_{\bm{u},\bm{\zeta},\bm{z}}\right)^t\right)\left[f^*(\bm{x}_i)\right]_N+\left(\bm{I}_N-\eta\bm{\bar{\mathcal{K}}}_{\bm{u},\bm{\zeta},\bm{z}}\right)^t\left[f_{\theta_0}(\bm{x}_i)\right]_N,
\end{eqnarray}
which is exactly Equation~\ref{eq:linclo}.

\subsection{The property of homogeneous activation} \label{app:homo}
In this paper, we explore the evolution of neural networks through ZO optimization, as detailed in Equation~\ref{eq:linclo}, assuming sufficiently large width \citep{jacot2018neural,arora2019exact}. However, practical neural networks are generally of finite width. To reconcile the disparity between theoretical infinite-width neural networks and real-world finite-width ones, the property of homogeneous activation \citep{du2018algorithmic,lyu2019gradient} can be pivotal.  This property links to linear models through nonlinear gating \citep{tianunderstanding}.
\begin{pro}(\citealp{du2018algorithmic})
    The activation $\varphi(\cdot)$ is homogeneous if it satisfies $\varphi(x)=\varphi'(x)x$.
\end{pro}
We can verify that this characteristic holds for functions such as ReLU $\varphi(x)=\max\{0,x\}$, Leaky ReLU $\varphi(x)=\max\{\alpha x,x\}, \alpha\in(0,1)$, and linear $\varphi(x)=kx$. However, this pertains specifically to FO scenarios. For ZO cases, we directly present the corresponding ZO counterpart as
\begin{pro} (Zeroth-order Homogeneous)
    The activation $\varphi(\cdot)$ is zeroth-order homogeneous if it holds $\varphi(x)=\frac{\varphi(x+\epsilon)-\varphi(x-\epsilon)}{2\epsilon}x$, where $\epsilon$ is a sufficiently small positive constant.
\end{pro}
We can also validate that these functions (ReLU, Leaky ReLU and linear) satisfy this property: for $x>0$, we can always find a sufficiently small positive constant $\epsilon$ such that $x-\epsilon>0$, and similarly for $x<0$. Specifically, for ReLU $\varphi(x)=\max \{x,0\}=\mathbb{I}_{x\geq0}x$, where $\mathbb{I}$ denotes an indicator function, it has
\begin{eqnarray}
    \frac{\varphi(x+\epsilon)-\varphi(x-\epsilon)}{2\epsilon}x&=&\frac{\mathbb{I}_{x+\epsilon\geq0}(x+\epsilon)-\mathbb{I}_{x-\epsilon\geq0}(x-\epsilon)}{2\epsilon}x\nonumber\\
    &=&\frac{\mathbb{I}_{x\geq0}(x+\epsilon)-\mathbb{I}_{x\geq0}(x-\epsilon)}{2\epsilon}x\nonumber\\
    &=&\frac{\mathbb{I}_{x\geq0}\left((x+\epsilon)-(x-\epsilon)\right)}{2\epsilon}x\nonumber\\
    &=&\frac{\mathbb{I}_{x\geq0}\left(2\epsilon\right)}{2\epsilon}x\nonumber\\
    &=& \varphi(x).
\end{eqnarray}
Similarly, for Leaky ReLU $\varphi(x)=\mathbb{I}_{x\geq0}(1-\alpha)x + \alpha x$ ($0<\alpha<1$), it can be verified that
\begin{eqnarray}
    \frac{\varphi(x+\epsilon)-\varphi(x-\epsilon)}{2\epsilon}x&=&\frac{\mathbb{I}_{x\geq0}(1-\alpha)(x+\epsilon) + \alpha (x+\epsilon)-\mathbb{I}_{x\geq0}(1-\alpha)(x-\epsilon) + \alpha (x-\epsilon)}{2\epsilon}x\nonumber\\
    &=&\frac{\mathbb{I}_{x\geq0}(1-\alpha)(x+\epsilon-x+\epsilon) + \alpha (x+\epsilon-x+\epsilon)}{2\epsilon}x\nonumber\\
    &=&\mathbb{I}_{x\geq0}(1-\alpha)x + \alpha x= \varphi(x).
\end{eqnarray}

\clearpage
\section{Detailed Proofs}

Prior to exploring the stability of the NZK, we introduce a lemma regarding the commutativity (\ie, interchangeability) of expectation and trace operations applied to a random matrix \citep{tropp2012user}.
\begin{lem}\label{lem:etr}
     Suppose $\bm{M}_{d\times d}$ is a $d\times d$ matrix where each entry $m_{i,j}$ is a random variable. Then, the expected value of the trace of $\bm{M}_{d\times d}$ equals the trace of the expectation of $\bm{M}_{d\times d}$.
     \begin{eqnarray}
         \mathbb{E}[\mathrm{Tr}(\bm{M}_{d\times d})]=\mathrm{Tr}(\mathbb{E}[\bm{M}_{d\times d}]).
     \end{eqnarray}
\end{lem}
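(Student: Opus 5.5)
The statement to prove is Lemma~\ref{lem:etr}: for a random $d\times d$ matrix $\bm{M}_{d\times d}$, $\mathbb{E}[\mathrm{Tr}(\bm{M}_{d\times d})]=\mathrm{Tr}(\mathbb{E}[\bm{M}_{d\times d}])$. We assume every diagonal entry $m_{i,i}$ is integrable, so that $\mathbb{E}[\bm{M}_{d\times d}]$ (defined entrywise) makes sense. The plan is to reduce both sides to the same finite sum of scalar expectations and then invoke linearity of expectation.

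\emph{First step.} Write the trace explicitly as $\mathrm{Tr}(\bm{M}_{d\times d})=\sum_{i=1}^d m_{i,i}$. This is a finite sum of scalar random variables.

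\emph{Second step.} Apply linearity of expectation to this finite sum. It holds whenever each summand is integrable, with no independence assumption needed. This gives $\mathbb{E}[\mathrm{Tr}(\bm{M}_{d\times d})]=\sum_{i=1}^d \mathbb{E}[m_{i,i}]$.

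\emph{Third step.} Recall that the expectation of a random matrix is defined entrywise, so $(\mathbb{E}[\bm{M}_{d\times d}])_{i,i}=\mathbb{E}[m_{i,i}]$. Hence $\mathrm{Tr}(\mathbb{E}[\bm{M}_{d\times d}])=\sum_{i=1}^d \mathbb{E}[m_{i,i}]$, which matches the right-hand side of the second step and proves the identity.

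There is no real obstacle. The only point needing care is the integrability hypothesis: without it the entrywise expectation is undefined, and linearity could fail through an undefined $\infty-\infty$. Off-diagonal entries play no role, since the trace involves only the diagonal.

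This lemma is what the proof of Theorem~\ref{thm:enzk} will use. The kernel entry can be rewritten as $K_{\bm{\zeta},\bm{z}}(\bm{x}_i,\bm{x}_j)=\bm{x}_i^\top\bm{\zeta}\bm{\zeta}^\top\bm{z}\bm{z}^\top\bm{x}_j=\mathrm{Tr}(\bm{\zeta}\bm{\zeta}^\top\bm{z}\bm{z}^\top\bm{x}_j\bm{x}_i^\top)$, so the expectation can be moved inside the trace. Independence of $\bm{\zeta}$ and $\bm{z}$ then factorizes the expectation into the second-moment matrices $\sigma^2\bm{I}_d+\mu^2\bm{1}\bm{1}^\top$ of each vector.
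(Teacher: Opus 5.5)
Your proof is correct and is essentially the paper's argument: write the trace as $\sum_{i=1}^d m_{i,i}$, pull the expectation through the finite sum by linearity, and recognize $\sum_{i=1}^d \mathbb{E}[m_{i,i}]$ as the trace of the entrywise expectation. Your explicit integrability hypothesis is a useful precision that the paper leaves implicit; strictly, $\mathbb{E}[\bm{M}_{d\times d}]$ as a full matrix needs every entry integrable, but only the diagonal entries enter the identity.
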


With this lemma, 
\subsection{Proof of Lemma \ref{lem:etr}} \label{lem:petr} The trace of random matrix $\bm{M}_{d\times d}$ is
\begin{eqnarray}
    \mathrm{Tr}(\bm{M}_{d\times d})=\sum^d_{i=1} m_{i,i}.
\end{eqnarray}
One can observe that the trace of a random matrix is also a random variables. Therefore, taking expectation on both side of above equation, we have
\begin{eqnarray}\label{eq:etr}
    \mathbb{E}[\mathrm{Tr}(\bm{M}_{d\times d})]=\mathbb{E}\left[\sum^d_{i=1} m_{i,i}\right].
\end{eqnarray}
It follows from the linearity of the expectation that 
\begin{eqnarray}
    \mathbb{E}\left[\sum^d_{i=1} m_{i,i}\right]=\sum^d_{i=1} \mathbb{E}\left[m_{i,i}\right].
\end{eqnarray}
Note that $\sum^d_{i=1} \mathbb{E}[m_{i,i}]$ can be expressed as the trace of a $d\times d$ matrix with diagonal entries to be $\mathbb{E}[m_{i,i}],i\in\mathbb{N}_d$. The diagonal entries of the expectation of $\bm{M}_{d\times d}$ are precisely $\mathbb{E}[m_{i,i}],i\in\mathbb{N}_d$, that is
\begin{eqnarray}\label{eq:tre}
    \mathrm{Tr}(\mathbb{E}[\bm{M}_{d\times d}])=\sum^d_{i=1} \mathbb{E}[m_{i,i}].
\end{eqnarray}
Hence, combining Eq.~\ref{eq:etr}-\ref{eq:tre}, we have
\begin{eqnarray}
    \mathbb{E}[\mathrm{Tr}(\bm{M}_{d\times d})]=\mathrm{Tr}(\mathbb{E}[\bm{M}_{d\times d}]).
\end{eqnarray}
$\blacksquare$

\subsection{Proof of Theorem \ref{thm:enzk}} \label{thm:penzk}
It follows from Equation \ref{eq:nzk}, \ie, Equation \ref{eq:appnzk} that NZK is predominantly determined by the difference term. In the context of linear models, we can simplify this expression with a vector $\bm{\dagger}\in\mathbb{R}^d$ as
\begin{eqnarray}\label{eq:appdiffterm}
    f(\bm{x};\theta_t+\epsilon\bm{\dagger})-f(\bm{x};\theta_t-\epsilon\bm{\dagger})
    &=&\langle\theta_t+\epsilon\bm{\dagger},\bm{x}\rangle-\langle\theta_t-\epsilon\bm{\dagger},\bm{x}\rangle\nonumber\\
    &=&\langle2\epsilon\bm{\dagger},\bm{x}\rangle\nonumber\\
    &=&2\epsilon f(\bm{x};\bm{\dagger}).
\end{eqnarray}
The substitution of $\theta_t$ with $\bm{\dagger}$ aligns with the approach taken by \citealp{nesterov2017random}, in which $\theta_t$ is replaced by $\bm{z}$.

By defining inputs $\bm{x}_i$ and $\bm{x}_j$, we are able to scrutinize each component of $\bm{K}_{\bm{\zeta},\bm{z}}$ as
\begin{eqnarray}
    K_{\bm{\zeta},\bm{z}}(\bm{x}_i,\bm{x}_j)&=& \langle \langle \bm{\zeta},\bm{x}_i\rangle \bm{\zeta},\langle \bm{z},\bm{x}_j\rangle\bm{z}\rangle\nonumber\\
    &=&\langle \bm{\zeta},\bm{x}_i\rangle \langle \bm{z},\bm{x}_j\rangle \langle \bm{\zeta},\bm{z}\rangle\nonumber\\
    &=&\langle \bm{\zeta},\bm{x}_i\rangle \langle \bm{x}_j,\bm{z}\rangle \langle\bm{z},\bm{\zeta}\rangle\nonumber\\
    &=&\bm{\zeta}^\top\bm{x}_i\bm{x}_j^\top\bm{z}\bm{z}^\top\bm{\zeta}.
\end{eqnarray}
It is trivial to observe that this is a scalar, so it equals the result after performing the trace operation, indicating
\begin{eqnarray}
    K_{\bm{\zeta},\bm{z}}(\bm{x}_i,\bm{x}_j)&=&\mathrm{Tr}\left(K_{\bm{\zeta},\bm{z}}(\bm{x}_i,\bm{x}_j)\right)\nonumber\\
    &=&\mathrm{Tr}\left(\bm{\zeta}^\top\bm{x}_i\bm{x}_j^\top\bm{z}\bm{z}^\top\bm{\zeta}\right).
\end{eqnarray}
Due to the cyclic property of trace operation, we have
\begin{eqnarray}
    K_{\bm{\zeta},\bm{z}}(\bm{x}_i,\bm{x}_j)=\mathrm{Tr}\left(\bm{\zeta}\bm{\zeta}^\top\bm{x}_i\bm{x}_j^\top\bm{z}\bm{z}^\top\right).
\end{eqnarray}

To control the effects of randomness in estimation and uncover a statistical property of NZK, we compute the expected $K_{\bm{\zeta},\bm{z}}(\bm{x}_i,\bm{x}_j)$, which involves
\begin{eqnarray}
    \mathbb{E}_{\bm{\zeta},\bm{z}} \left[K_{\bm{\zeta},\bm{z}}(\bm{x}_i,\bm{x}_j)\right]= \mathbb{E}_{\bm{\zeta},\bm{z}} \left[\mathrm{Tr}\left(\bm{\zeta}\bm{\zeta}^\top\bm{x}_i\bm{x}_j^\top\bm{z}\bm{z}^\top\right)\right].
\end{eqnarray}
Using Lemma~\ref{lem:etr}, we obtain
\begin{eqnarray}
    \mathbb{E}_{\bm{\zeta},\bm{z}} \left[\mathrm{Tr}\left(\bm{\zeta}\bm{\zeta}^\top\bm{x}_i\bm{x}_j^\top\bm{z}\bm{z}^\top\right)\right]=\mathrm{Tr}\left(\mathbb{E}_{\bm{\zeta},\bm{z}} \left[\bm{\zeta}\bm{\zeta}^\top\bm{x}_i\bm{x}_j^\top\bm{z}\bm{z}^\top\right]\right).
\end{eqnarray}
Since $\bm{\zeta}\sim\mathcal{N}(\mu_{\bm{\zeta}}\bm{1},\sigma_{\bm{\zeta}}^2\bm{I}_d)$ is independent of $\bm{z}\sim\mathcal{N}(\mu_{\bm{z}}\bm{1},\sigma_{\bm{z}}^2\bm{I}_d)$, we have
\begin{eqnarray}
    \mathrm{Tr}\left(\mathbb{E}_{\bm{\zeta},\bm{z}} \left[\bm{\zeta}\bm{\zeta}^\top\bm{x}_i\bm{x}_j^\top\bm{z}\bm{z}^\top\right]\right)=\mathrm{Tr}\left(\mathbb{E}_{\bm{\zeta}}\left[\bm{\zeta}\bm{\zeta}^\top\right]\bm{x}_i\bm{x}_j^\top\mathbb{E}_{\bm{z}}\left[\bm{z}\bm{z}^\top\right]\right).
\end{eqnarray}
Since $\mathbb{E}\left[r^2\right]=\mathbb{V}(r)+\mathbb{E}^2\left[r\right]$ holds for the random variable $r$, the same relationship holds for entries of $\mathbb{E}_{\bm{z}}\left[\bm{z}\bm{z}^\top\right]$, where
\begin{eqnarray}
    \mathbb{E}_{\bm{z}}\left[\bm{z}\bm{z}^\top\right]&=&\mathbb{E}_{\bm{z}}\left[
    \begin{pmatrix}
    z_1^2 & \cdots &z_1 z_d \\
    \vdots & \ddots & \vdots \\
    z_dz_1 & \cdots & z_d^2 \\
    \end{pmatrix}
    \right]\nonumber\\
    &=&\begin{pmatrix}
    \mathbb{E}_{\bm{z}}\left[z_1^2\right] & \cdots & \mathbb{E}_{\bm{z}}\left[z_1 z_d \right]\\
    \vdots & \ddots & \vdots \\
    \mathbb{E}_{\bm{z}}\left[z_dz_1\right] & \cdots & \mathbb{E}_{\bm{z}}\left[z_d^2 \right] \\
    \end{pmatrix}\nonumber\\
    &=&\begin{pmatrix}
    \sigma^2_{\bm{z}}+\mu^2_{\bm{z}} & \cdots &\mu^2_{\bm{z}} \\
    \vdots & \ddots & \vdots \\
    \mu^2_{\bm{z}} & \cdots & \sigma^2_{\bm{z}}+\mu^2_{\bm{z}} \\
    \end{pmatrix}\nonumber\\
    &=&\sigma^2_{\bm{z}}I_n+\mu^2_{\bm{z}}\bm{1}_{n\times n}.
\end{eqnarray}
Similarly, for $\mathbb{E}_{\bm{\zeta}}\left[\bm{\zeta}\bm{\zeta}^\top\right]$, we can derive that $\mathbb{E}_{\bm{\zeta}}\left[\bm{\zeta}\bm{\zeta}^\top\right]=\sigma^2_{\bm{\zeta}}I_n+\mu^2_{\bm{\zeta}}\bm{1}_{n\times n}$.

Thus, given the linearity of the Trace operation, we have
\begin{eqnarray}
&&\mathrm{Tr}\left(\mathbb{E}_{\bm{\zeta}}\left[\bm{\zeta}\bm{\zeta}^\top\right]\bm{x}_i\bm{x}_j^\top\mathbb{E}_{\bm{z}}\left[\bm{z}\bm{z}^\top\right]\right)\nonumber\\
&=&\mathrm{Tr}\left(\left(\sigma^2_{\bm{\zeta}}I_n+\mu^2_{\bm{\zeta}}\bm{1}_{n\times n}\right)\bm{x}_i\bm{x}_j^\top\left(\sigma^2_{\bm{z}}I_n+\mu^2_{\bm{z}}\bm{1}_{n\times n}\right)\right)\nonumber\\
&=&\mathrm{Tr}\left(\sigma^2_{\bm{\zeta}}\sigma^2_{\bm{z}}\bm{x}_i\bm{x}_j^\top\right)+\mathrm{Tr}\left(\sigma^2_{\bm{\zeta}}\mu^2_{\bm{z}}\bm{x}_i\bm{x}_j^\top\bm{1}_{n\times n}\right)+\mathrm{Tr}\left(\mu^2_{\bm{\zeta}}\sigma^2_{\bm{z}}\bm{1}_{n\times n}\bm{x}_i\bm{x}_j^\top\right)+\mathrm{Tr}\left(\mu^2_{\bm{\zeta}}\mu^2_{\bm{z}}\bm{1}_{n\times n}\bm{x}_i\bm{x}_j^\top\bm{1}_{n\times n}\right)\nonumber\\
&=&\sigma^2_{\bm{\zeta}}\sigma^2_{\bm{z}}\langle \bm{x}_i,\bm{x}_j \rangle+\sigma^2_{\bm{\zeta}}\mu^2_{\bm{z}}\langle \bm{x}_i,\bm{1}\rangle\langle \bm{1},\bm{x}_j \rangle+\mu^2_{\bm{\zeta}}\sigma^2_{\bm{z}}\langle \bm{x}_i,\bm{1}\rangle\langle \bm{1},\bm{x}_j \rangle+\mu^2_{\bm{\zeta}}\mu^2_{\bm{z}}d\langle \bm{x}_i,\bm{1}\rangle\langle \bm{1},\bm{x}_j \rangle.
\end{eqnarray}

Therefore, by combining the equations above, we have
\begin{eqnarray}\label{eq:appenzk}
    \mathbb{E}_{\bm{\zeta},\bm{z}} K_{\bm{\zeta},\bm{z}}(\bm{x}_i,\bm{x}_j) =\sigma^2_{\bm{\zeta}}\sigma^2_{\bm{z}}\langle \bm{x}_i,\bm{x}_j \rangle+\sigma^2_{\bm{\zeta}}\mu^2_{\bm{z}}\langle \bm{x}_i,\bm{1}\rangle\langle \bm{1},\bm{x}_j \rangle+\mu^2_{\bm{\zeta}}\sigma^2_{\bm{z}}\langle \bm{x}_i,\bm{1}\rangle\langle \bm{1},\bm{x}_j \rangle+\mu^2_{\bm{\zeta}}\mu^2_{\bm{z}}d\langle \bm{x}_i,\bm{1}\rangle\langle \bm{1},\bm{x}_j \rangle,
\end{eqnarray}
which concludes the proof.

$\blacksquare$

Assuming that the components of $\bm{z}$ and $\bm{\zeta}$ have zero mean and unit variance, denoted as $\mu_{\bm{z}}=\mu_{\bm{\zeta}}=0$ and $\sigma_{\bm{z}}^2=\sigma_{\bm{\zeta}}^2=1$, Equation~\ref{eq:appenzk} simplifies to $\mathbb{E}_{\bm{\zeta},\bm{z}} K_{\bm{\zeta},\bm{z}}(\bm{x}_i,\bm{x}_j) =\langle \bm{x}_i,\bm{x}_j \rangle$

Meanwhile, the expectation can be realized by exploiting batch sampling \citep{duchi2015optimal,liu2020admm}, 
\begin{eqnarray}
    \mathcal{G}_{\text{batch}}=\frac{1}{B}\sum_{j=1}^B\left(\frac{1}{N}\sum_{i=1}^N\frac{\mathcal{L}(f(\bm{x}_i;\theta_t+\epsilon\bm{z}_j),y_i)-\mathcal{L}(f(\bm{x}_i;\theta_t-\epsilon\bm{z}_j),y_i)}{2\epsilon}\bm{z}_j\right)
\end{eqnarray}
with a sufficiently large batch size $B$, as per the law of large numbers.

\subsection{Proof of Corollary \ref{cor:z=zeta}} \label{cor:pz=zeta}
When generating $\bm{z}$ and setting $\bm{\zeta}$ to be equal to $\bm{z}$, \ie, $\bm{\zeta}=\bm{z}$, for each entry, we have
\begin{eqnarray}
    \mathbb{E}_{\bm{\zeta},\bm{z}} \left[K_{\bm{\zeta},\bm{z}}(\bm{x}_i,\bm{x}_j)\right] &=& \mathbb{E}_{\bm{\zeta},\bm{z}} \left[\bm{\zeta}^\top\bm{x}_i\bm{x}_j^\top\bm{z}\bm{z}^\top\bm{\zeta}\right]\nonumber\\
    &=&\mathbb{E}_{\bm{z}} \left[\bm{z}^\top\bm{x}_i\bm{x}_j^\top\bm{z}\bm{z}^\top\bm{z}\right].
\end{eqnarray}
Since $\bm{z}^\top\bm{x}_i\bm{x}_j^\top\bm{z}\bm{z}^\top\bm{z}$ results in a scalar value, upon performing the trace operation, we obtain
\begin{eqnarray}
    \mathbb{E}_{\bm{z}} \left[\bm{z}^\top\bm{x}_i\bm{x}_j^\top\bm{z}\bm{z}^\top\bm{z}\right]=\mathbb{E}_{\bm{z}} \left[\mathrm{Tr}\left(\bm{z}^\top\bm{x}_i\bm{x}_j^\top\bm{z}\bm{z}^\top\bm{z}\right)\right].
\end{eqnarray}
Due to the cyclic property of trace operation, we have
\begin{eqnarray}
\mathrm{Tr}\left(\bm{z}^\top\bm{x}_i\bm{x}_j^\top\bm{z}\bm{z}^\top\bm{z}\right)=\mathrm{Tr}\left(\bm{z}\bm{z}^\top\bm{z}\bm{z}^\top\bm{x}_i\bm{x}_j^\top\right).
\end{eqnarray}
Thus, employing Lemma~\ref{lem:etr}, we obtain
\begin{eqnarray}
    \mathbb{E}_{\bm{z}} \left[\mathrm{Tr}\left(\bm{z}\bm{z}^\top\bm{z}\bm{z}^\top\bm{x}_i\bm{x}_j^\top\right)\right]&=&\mathrm{Tr}\left(\mathbb{E}_{\bm{z}} \left[\bm{z}\bm{z}^\top\bm{z}\bm{z}^\top\bm{x}_i\bm{x}_j^\top\right]\right)\nonumber\\
    &=&\mathrm{Tr}\left(\mathbb{E}_{\bm{z}} \left[\bm{z}\bm{z}^\top\bm{z}\bm{z}^\top\right]\bm{x}_i\bm{x}_j^\top\right).
\end{eqnarray}
Regarding $\mathbb{E}_{\bm{z}} \left[\bm{z}\bm{z}^\top\bm{z}\bm{z}^\top\right]$, let us examine the specific formulation
\begin{eqnarray}\label{eq:appezzzz}
    \mathbb{E}_{\bm{z}} \left[\bm{z}\bm{z}^\top\bm{z}\bm{z}^\top\right]&=&\mathbb{E}_{\bm{z}}\left[
    \sum^d_{i=1}z_i^2\begin{pmatrix}
    z_1^2 & \cdots &z_1 z_d \\
    \vdots & \ddots & \vdots \\
    z_dz_1 & \cdots & z_d^2 \\
    \end{pmatrix}
    \right]\nonumber\\
    &=&
    \begin{pmatrix}
    \mathbb{E}_{\bm{z}}\left[z_1^4+\sum^d_{i=2}z_1^2z_i^2\right] & \cdots & \mathbb{E}_{\bm{z}}\left[z_1^3z_d+z_d^3z_1+\sum^{d-1}_{i=2}z_1z_dz_i^2\right]\\
    \vdots & \ddots & \vdots \\
    \mathbb{E}_{\bm{z}}\left[z_dz_1^3+z_1z_d^3+\sum^{d-1}_{i=2}z_1z_dz_i^2\right] & \cdots & \mathbb{E}_{\bm{z}}\left[z_d^4+\sum^{d-1}_{i=1}z_d^2z_i^2\right] \\
    \end{pmatrix}\nonumber\\
    &=&\begin{pmatrix}
    \mathbb{E}_{\bm{z}}\left[z_1^4\right]+\sum^d_{i=2}\mathbb{E}_{\bm{z}}\left[z_1^2\right]\mathbb{E}_{\bm{z}}\left[z_i^2\right] & \cdots & 0\\
    \vdots & \ddots & \vdots \\
    0 & \cdots & \mathbb{E}_{\bm{z}}\left[z_d^4\right]+\sum^{d-1}_{i=1}\mathbb{E}_{\bm{z}}\left[z_d^2\right]\mathbb{E}_{\bm{z}}\left[z_i^2\right] \\
    \end{pmatrix}\nonumber\\
    &=&\begin{pmatrix}
    \mathbb{E}_{\bm{z}}\left[z_1^4\right]+(d-1)\mathbb{E}^2_{\bm{z}}\left[z_1^2\right] & \cdots & 0\\
    \vdots & \ddots & \vdots \\
    0 & \cdots & \mathbb{E}_{\bm{z}}\left[z_d^4\right]+(d-1)\mathbb{E}^2_{\bm{z}}\left[z_d^2\right]\\
    \end{pmatrix},
\end{eqnarray}
where $z_i$ are independent and identically distributed entries of $\bm{z}$ with zero mean. 

Hence, Equation~\ref{eq:appezzzz} can be rewritten as
\begin{eqnarray}
    \mathbb{E}_{\bm{z}} \left[\bm{z}\bm{z}^\top\bm{z}\bm{z}^\top\right]=\left(\mathbb{E}_{\bm{z}}\left[z_i^4\right]+(d-1)\mathbb{E}^2_{\bm{z}}\left[z_i^2\right]\right)\bm{I}_d=\left(\mathbb{V}z_i^2+d\cdot\mathbb{E}^2\left[z_i^2\right]\right)\bm{I}_d,
\end{eqnarray}
where $\mathbb{V}$ denotes the variance operation and $i\in\mathbb{N}_d$. Therefore, we have
\begin{eqnarray}
    \mathrm{Tr}\left(\mathbb{E}_{\bm{z}} \left[\bm{z}\bm{z}^\top\bm{z}\bm{z}^\top\right]\bm{x}_i\bm{x}_j^\top\right)&=&\mathrm{Tr}\left(\left(\mathbb{V}z_i^2+d\cdot\mathbb{E}^2\left[z_i^2\right]\right)\bm{I}_d\bm{x}_i\bm{x}_j^\top\right)\nonumber\\
    &=&\left(\mathbb{V}z_i^2+d\cdot\mathbb{E}^2\left[z_i^2\right]\right)\langle \bm{x}_i,\bm{x}_j \rangle,
\end{eqnarray}
which means
\begin{eqnarray}
    \mathbb{E}_{\bm{\zeta},\bm{z}} \left[K_{\bm{\zeta},\bm{z}}(\bm{x}_i,\bm{x}_j)\right]=\left(\mathbb{V}z_i^2+d\cdot\mathbb{E}^2\left[z_i^2\right]\right)\langle \bm{x}_i,\bm{x}_j \rangle.
\end{eqnarray}

If $\bm{z}\sim\mathcal{N}(\bm{0},\sigma_{\bm{z}}^2\bm{I}_d)$, \ie, $z_i\sim\mathcal{N}(0,\sigma_{\bm{z}}^2),i\in\mathbb{N}_d$, for $\mathbb{E}_{\bm{z}}\left[z_i^4\right]+(d-1)\mathbb{E}^2_{\bm{z}}\left[z_i^2\right]$, we obtain
\begin{eqnarray}
    \mathbb{E}_{\bm{z}}\left[z_i^4\right]+(d-1)\mathbb{E}^2_{\bm{z}}\left[z_i^2\right]&=&\sigma_{\bm{z}}^4\mathbb{E}_{\bm{z}}\left[\left(\frac{z_i}{\sigma_{\bm{z}}}\right)^4\right]+(d-1)\sigma_{\bm{z}}^4\mathbb{E}^2_{\bm{z}}\left[\left(\frac{z_i}{\sigma_{\bm{z}}}\right)^2\right]\nonumber\\
    &=&\sigma_{\bm{z}}^4\left(\mathbb{V}_{\bm{z}}\left(\frac{z_i}{\sigma_{\bm{z}}}\right)^2+\mathbb{E}^2_{\bm{z}}\left(\frac{z_i}{\sigma_{\bm{z}}}\right)^2\right)+(d-1)\sigma_{\bm{z}}^4\mathbb{E}^2_{\bm{z}}\left[\left(\frac{z_i}{\sigma_{\bm{z}}}\right)^2\right]\nonumber\\
    &=&\sigma_{\bm{z}}^4\mathbb{V}_{\bm{z}}\left(\frac{z_i}{\sigma_{\bm{z}}}\right)^2+d\sigma_{\bm{z}}^4\mathbb{E}^2_{\bm{z}}\left[\left(\frac{z_i}{\sigma_{\bm{z}}}\right)^2\right]\nonumber\\
    &=&2\sigma_{\bm{z}}^4+d\sigma_{\bm{z}}^4\nonumber\\
    &=&(d+2)\sigma_{\bm{z}}^4,
\end{eqnarray}
where $\left(\frac{z_i}{\sigma_{\bm{z}}}\right)^2\sim\chi^2(1)$ with a variance of $2$.
Hence, Equation~\ref{eq:appezzzz} can be specified as
\begin{eqnarray}
    \mathbb{E}_{\bm{z}} \left[\bm{z}\bm{z}^\top\bm{z}\bm{z}^\top\right]=(d+2)\sigma_{\bm{z}}^4\bm{I}_d.
\end{eqnarray}
Therefore, we have
\begin{eqnarray}
    \mathrm{Tr}\left(\mathbb{E}_{\bm{z}} \left[\bm{z}\bm{z}^\top\bm{z}\bm{z}^\top\right]\bm{x}_i\bm{x}_j^\top\right)&=&\mathrm{Tr}\left((d+2)\sigma_{\bm{z}}^4\bm{I}_d\bm{x}_i\bm{x}_j^\top\right)\nonumber\\
    &=&(d+2)\sigma_{\bm{z}}^4\langle \bm{x}_i,\bm{x}_j \rangle,
\end{eqnarray}
which means
\begin{eqnarray}
    \mathbb{E}_{\bm{\zeta},\bm{z}} \left[K_{\bm{\zeta},\bm{z}}(\bm{x}_i,\bm{x}_j)\right]=(d+2)\sigma_{\bm{z}}^4\langle \bm{x}_i,\bm{x}_j \rangle.
\end{eqnarray}
$\blacksquare$

This demonstrates that the expected NZK remains constant over time. At each step, assuming initial points $\theta_t$ are the same for ZO and FO methods, then ZO optimization will scale the expected NZK by $(d+2)\sigma_{\bm{z}}^4$. Moreover, the Gaussian distribution considered here can be extended to other distributions, such as the uniform distribution over a unit ball.

\subsection{Proof of Proposition \ref{prop:dec}}\label{prop:pdec}

Let's consider how Eq.~\ref{eq:parest} estimates the magnitude of the rate of change of two-layer linear neural networks $f(\bm{x};\theta)$ with respect to $\theta$.
\begin{eqnarray}
    \frac{f\left(\bm{x};\theta_t+\epsilon\bm{z}\right)-f\left(\bm{x};\theta_t-\epsilon\bm{z}\right)}{2\epsilon}&=&\frac{\left(\theta_t^{(2)}+\epsilon\bm{z}^{(2)}\right)\cdot\left(\theta_t^{(1)}+\epsilon\bm{z}^{(1)}\right)}{2\epsilon}\bm{x}-\frac{\left(\theta_t^{(2)}-\epsilon\bm{z}^{(2)}\right)\cdot\left(\theta_t^{(1)}-\epsilon\bm{z}^{(1)}\right)}{2\epsilon}\bm{x}\nonumber\\
    &=&\frac{\left(\theta_t^{(2)}\theta_t^{(1)}+\epsilon\theta_t^{(2)}\bm{z}^{(1)}+\epsilon\bm{z}^{(2)}\theta_t^{(1)}+\epsilon^2\bm{z}^{(2)}\bm{z}^{(1)}\right)}{2\epsilon}\bm{x}\nonumber\\
    &&-\frac{\left(\theta_t^{(2)}\theta_t^{(1)}-\epsilon\theta_t^{(2)}\bm{z}^{(1)}-\epsilon\bm{z}^{(2)}\theta_t^{(1)}+\epsilon^2\bm{z}^{(2)}\bm{z}^{(1)}\right)}{2\epsilon}\bm{x}\nonumber\\
    &=&\underbrace{\theta_t^{(2)}\bm{z}^{(1)}\bm{x}}_{\textcircled{\scriptsize1}}+\underbrace{\bm{z}^{(2)}\theta_t^{(1)}\bm{x}}_{\textcircled{\scriptsize2}}.
\end{eqnarray}

Let's consider what happens if we fix the second layer and then proceed with estimation. We get
\begin{eqnarray}
    \frac{f\left(\bm{x};\theta_t^{(1)}+\epsilon\bm{z}^{(1)},\theta_t^{(2)}\right)-f\left(\bm{x};\theta_t^{(1)}-\epsilon\bm{z}^{(1)},\theta_t^{(2)}\right)}{2\epsilon}&=&\frac{\theta_t^{(2)}\cdot\left(\theta_t^{(1)}+\epsilon\bm{z}^{(1)}\right)}{2\epsilon}\bm{x}-\frac{\theta_t^{(2)}\cdot\left(\theta_t^{(1)}-\epsilon\bm{z}^{(1)}\right)}{2\epsilon}\bm{x}\nonumber\\
    &=&\frac{\theta_t^{(2)}\theta_t^{(1)}+\epsilon\theta_t^{(2)}\bm{z}^{(1)}}{2\epsilon}\bm{x}-\frac{\theta_t^{(2)}\theta_t^{(1)}-\epsilon\theta_t^{(2)}\bm{z}^{(1)}}{2\epsilon}\bm{x}\nonumber\\
    &=&\theta_t^{(2)}\bm{z}^{(1)}\bm{x},
\end{eqnarray}
which is equivalent to \textcircled{\scriptsize1}.  Likewise, we can derive analogous results by fixing the first layer as
\begin{eqnarray}
    \frac{f\left(\bm{x};\theta_t^{(1)},\theta_t^{(2)}+\epsilon\bm{z}^{(2)}\right)-f\left(\bm{x};\theta_t^{(1)},\theta_t^{(2)}-\epsilon\bm{z}^{(2)}\right)}{2\epsilon}&=&\frac{\left(\theta_t^{(2)}+\epsilon\bm{z}^{(2)}\right)\cdot\theta_t^{(1)}}{2\epsilon}\bm{x}-\frac{\left(\theta_t^{(2)}-\epsilon\bm{z}^{(2)}\right)\cdot\theta_t^{(1)}}{2\epsilon}\bm{x}\nonumber\\
    &=&\frac{\theta_t^{(2)}\theta_t^{(1)}+\epsilon\bm{z}^{(2)}\theta_t^{(1)}}{2\epsilon}\bm{x}-\frac{\theta_t^{(2)}\theta_t^{(1)}-\epsilon\bm{z}^{(2)}\theta_t^{(1)}}{2\epsilon}\bm{x}\nonumber\\
    &=&\bm{z}^{(2)}\theta_t^{(1)}\bm{x},
\end{eqnarray}
which corresponds to \textcircled{\scriptsize2}. Therefore, in the context of a two-layer linear neural network $f(\bm{x};\theta)$, estimating the rate magnitude of change of $f(\bm{x};\theta)$ with respect to $\theta$ using Equation~\ref{eq:parest} is interchangeable with summing the estimates obtained alternately for each layer while holding the other layers fixed,
\begin{eqnarray}
    \frac{f(\theta_t+\epsilon\bm{z})-f(\theta_t-\epsilon\bm{z})}{2\epsilon}=\frac{f\left(\theta_t^{(1)}+\epsilon\bm{z}^{(1)}\right)-f\left(\theta_t^{(1)}-\epsilon\bm{z}^{(1)}\right)}{2\epsilon}+\frac{f\left(\theta_t^{(2)}+\epsilon\bm{z}^{(2)}\right)-f\left(\theta_t^{(2)}-\epsilon\bm{z}^{(2)}\right)}{2\epsilon}.
\end{eqnarray}
$\blacksquare$

\clearpage
\section{Experiment Details}\label{app:ade}

\subsection{Computing infrastructure} \label{app:computer}

All experiments in this paper conducted on the machine with Ubuntu 20.04 system, equiped with AMD Ryzen 3975WX CPU and NVIDIA RTX3090 GPU (24G). The MNIST, CIFAR-10 and tiny imagenet datasets and corresponding pre-processing code used in our experiments can be obtained from Datasets library provided on Hugging Face.

\subsection{Linear models}

The Gaussian perturbation $\delta$ in the teacher model $f^*(\bm x;\theta^*)=7.0*x_1 + 2.0*x_2+\delta$ follows $\mathcal{N}(0,0.02)$, where $0.02$ controls the relative magnitude of the perturbation with respect to the training points.

We display the final trained models under FO and ZOs with varying variances after 16,000 iterations in Figure~\ref{fig:finalSigma}, while the individual evolution is shown in Figure~\ref{fig:linIndCom}. Figure~\ref{fig:linEquDiffSigmd2} presents the NZK comparisons for different $\sigma_{\bm{z}}$ in $d=2$.

For the case where $\bm{\zeta}$ and $\bm{z}$ are identical, we show the evolution of linear models with different $\sigma_{\bm{z}}$ in Figure~\ref{fig:linEquDiffSig}, and the corresponding loss comparisons are presented in Figure~\ref{fig:linLoss} (b).

To investigate how $d$ affects ZO optimization, as described in Equation~\ref{eq:enzkIden} from Corollary~\ref{cor:z=zeta}, we perform experiments with varying dimensions $d$ for $z$. Specifically, we set $d\in\{10, 30, 50\}$, meaning $\theta\in\mathbb{R}^{10}/\mathbb{R}^{30}/\mathbb{R}^{50}$. The data points have the same dimension and are normalized, so $\bm x\in\mathbb{S}^{10}/\mathbb{S}^{30}/\mathbb{S}^{50}$. We visualize the evolution by projecting the high-dimensional space into 2D, as shown in Figure~\ref{fig:linEquEvoDegr}, with the loss plot in Figure~\ref{fig:linEquLossDegr} and the NZK top view in Figure~\ref{fig:linEquNZKDegr}.

We investigate scenarios where $\bm{\zeta}$ and $\bm{z}$ are identical, extending beyond Gaussian distributions to include Laplace and Student's t distributions. The loss results are depicted in Figure~\ref{fig:distriloss}, illustrating that ZO with identical $\mathcal{\bm{\zeta}}$ and $\bm{z}$ converges faster than the evolution of FO. Interestingly, Gaussian-1, Laplace-0.604, and Student's t-1000 demonstrate identical convergence performance. This alignment arises from our careful specification of distribution parameters to ensure consistency in $\mathbb{V}z_i^2+d\cdot\mathbb{E}^2\left[z_i^2\right]$ in Equation~\ref{eq:enzkIden}, illustrating the distributional irrelevance highlighted in Corollary~\ref{cor:z=zeta}.  For the Laplace distribution, we use $\mu=0$ and $b=0.605$, and for the Student's t distribution, we set the parameter $\nu=1,000$. Figure~\ref{fig:evoldistri} illustrates the model's evolution, while Figure~\ref{fig:finevoldistri} compares the final trained models across the different distributions. The 2-D visualization of NZK is shown in Figure~\ref{fig:distriNZK}.

For the MNIST dataset classification using a linear model, the loss is shown in Figure~\ref{fig:linMNIloss}, which indicates that updating with traditional parametric gradient ZO converges more slowly than FO, and performs worse compared to using kernel gradients ZO. In this task with $d=24$, Figure~\ref{fig:linMNInzk} displays the 2-D visualization of NZK.

\begin{figure}[ht]
    \centering
    \includegraphics[width=.4\linewidth]{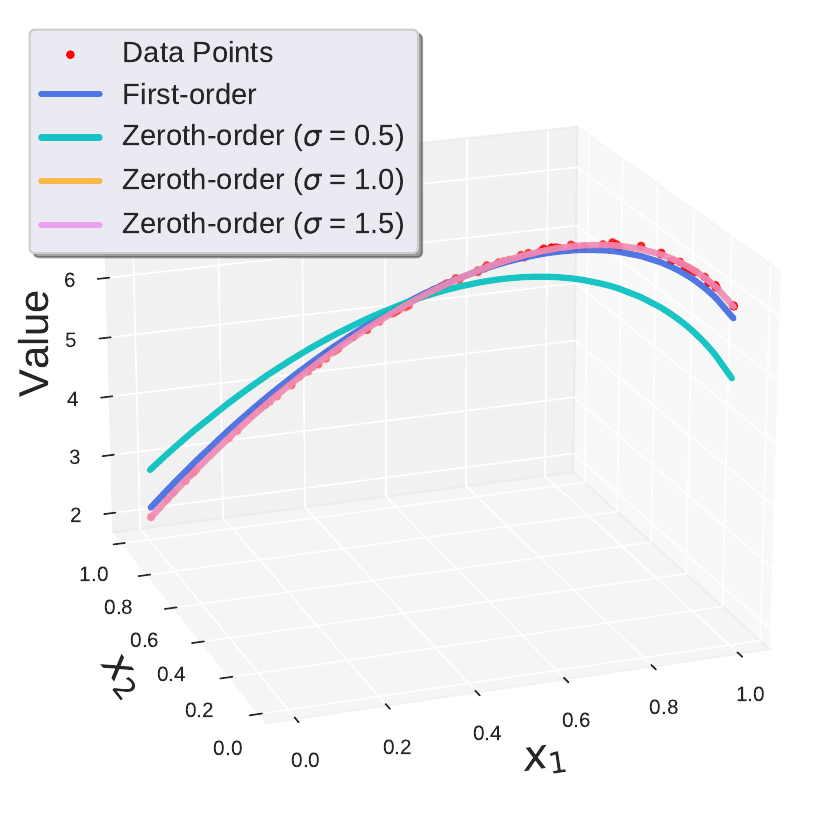}
    \vskip -0.1in
    \caption{Comparison of the final linear models trained with FO and ZO after 16,000 iterations, with different $\sigma_{\bm{z}}$. The vectors $\bm{\zeta}$ and $\bm{z}$ are sampled independently.}
    \label{fig:finalSigma}
\end{figure}

\begin{figure*}[th]
    \centering
    \begin{subfigure}[b]{0.24\textwidth}
        \centering
        \includegraphics[width=\linewidth]{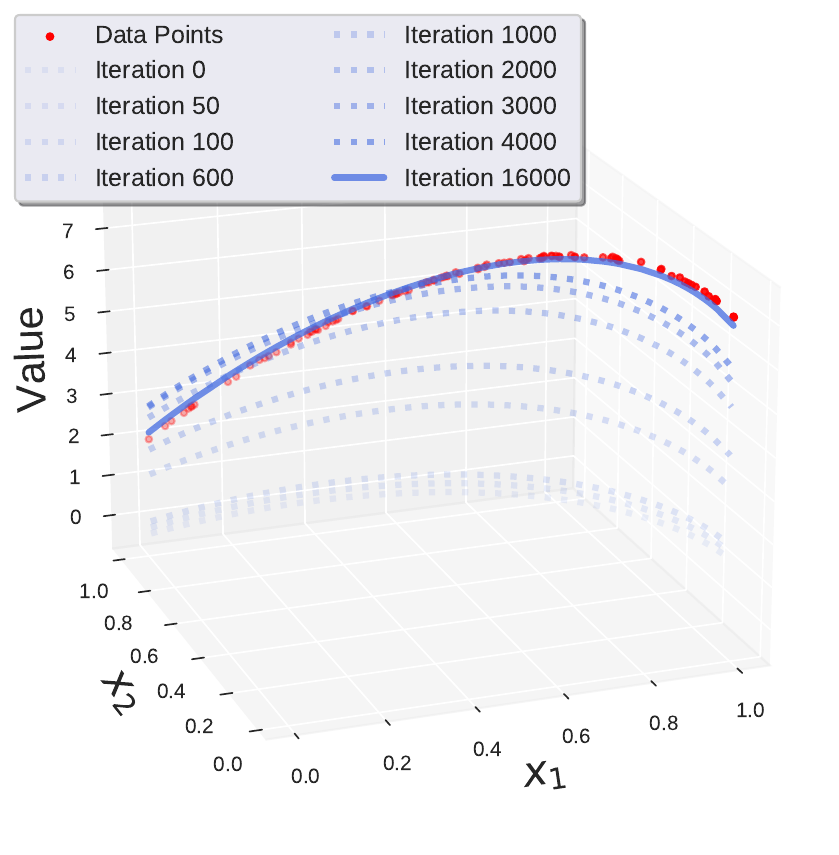}
        \vskip -0.1in
        \caption{FO.}
    \end{subfigure}
    \begin{subfigure}[b]{0.24\textwidth}
        \centering
        \includegraphics[width=\linewidth]{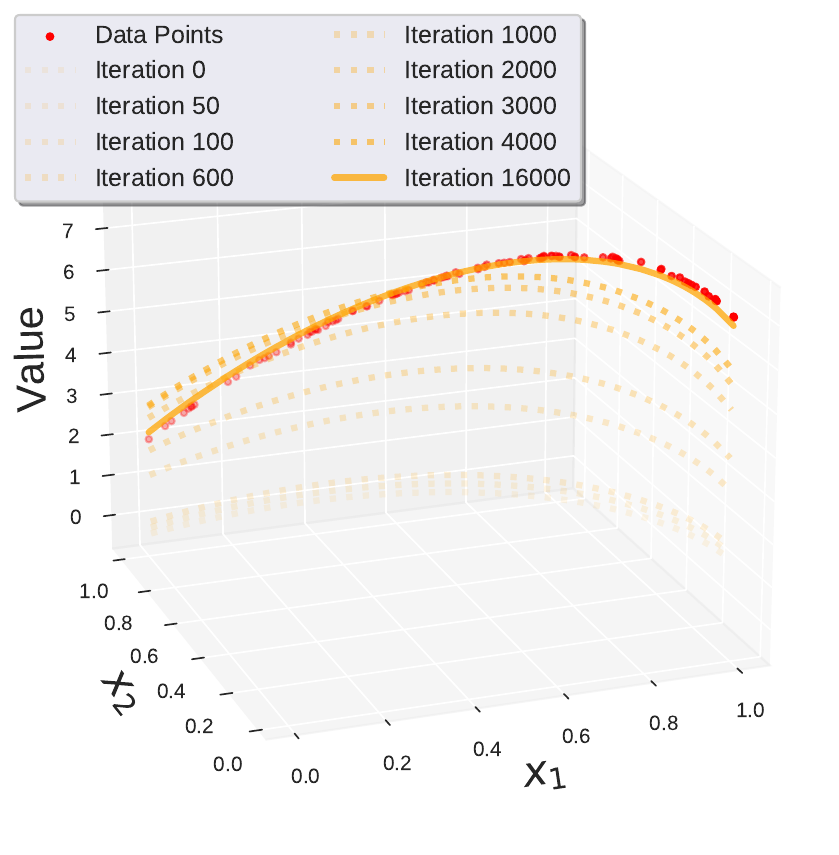}
        \vskip -0.1in
        \caption{ZO $\sigma_{\bm{z}}=1$.}
    \end{subfigure}
    \begin{subfigure}[b]{0.24\textwidth}
        \centering
        \includegraphics[width=\linewidth]{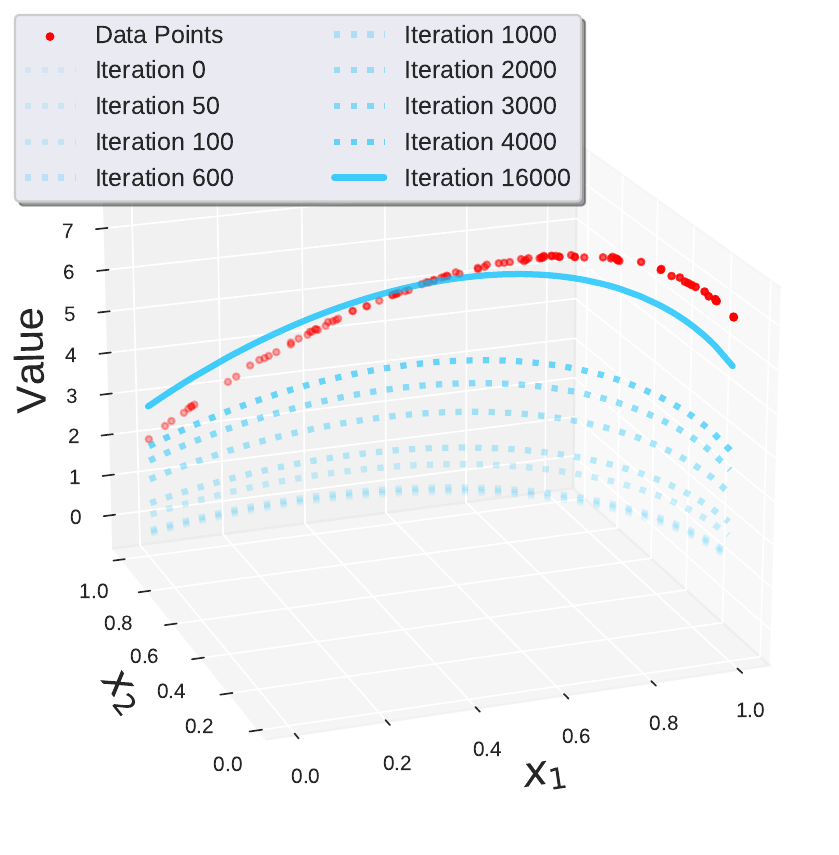}
        \vskip -0.1in
        \caption{ZO $\sigma_{\bm{z}}=0.5$.}
    \end{subfigure}
    \begin{subfigure}[b]{0.24\textwidth}
        \centering
        \includegraphics[width=1\linewidth]{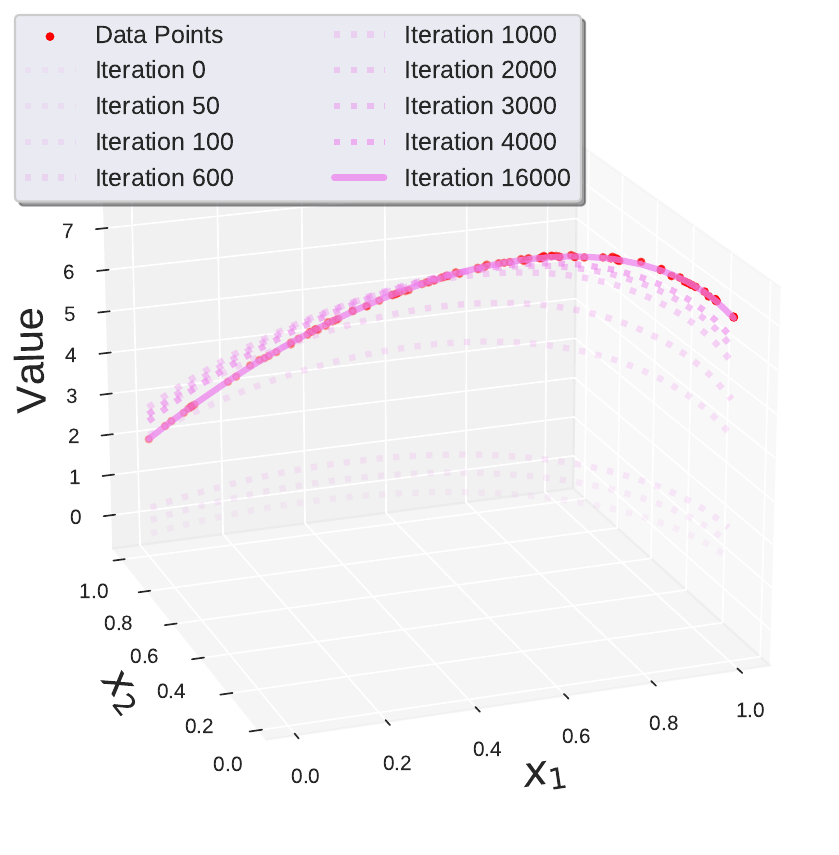}
        \vskip -0.1in
        \caption{ZO $\sigma_{\bm{z}}=1.5$.}
    \end{subfigure}
    \vskip -0.1in
    \caption{Evolution of linear models under FO and ZO in a 2-D fitting task. The vectors $\bm{\zeta}$ and $\bm{z}$ are independently sampled. (a) Evolution under FO. (b-d) Evolution under ZO with (b) $\sigma_{\bm{z}}=1$, (c) $\sigma_{\bm{z}}=0.5$, and (d) $\sigma_{\bm{z}}=1.5$.}
    \label{fig:linIndCom}
\end{figure*}

\begin{figure}[ht]
    \centering
    \begin{subfigure}[t]{0.24\linewidth}
        \centering
        \includegraphics[width=\linewidth]{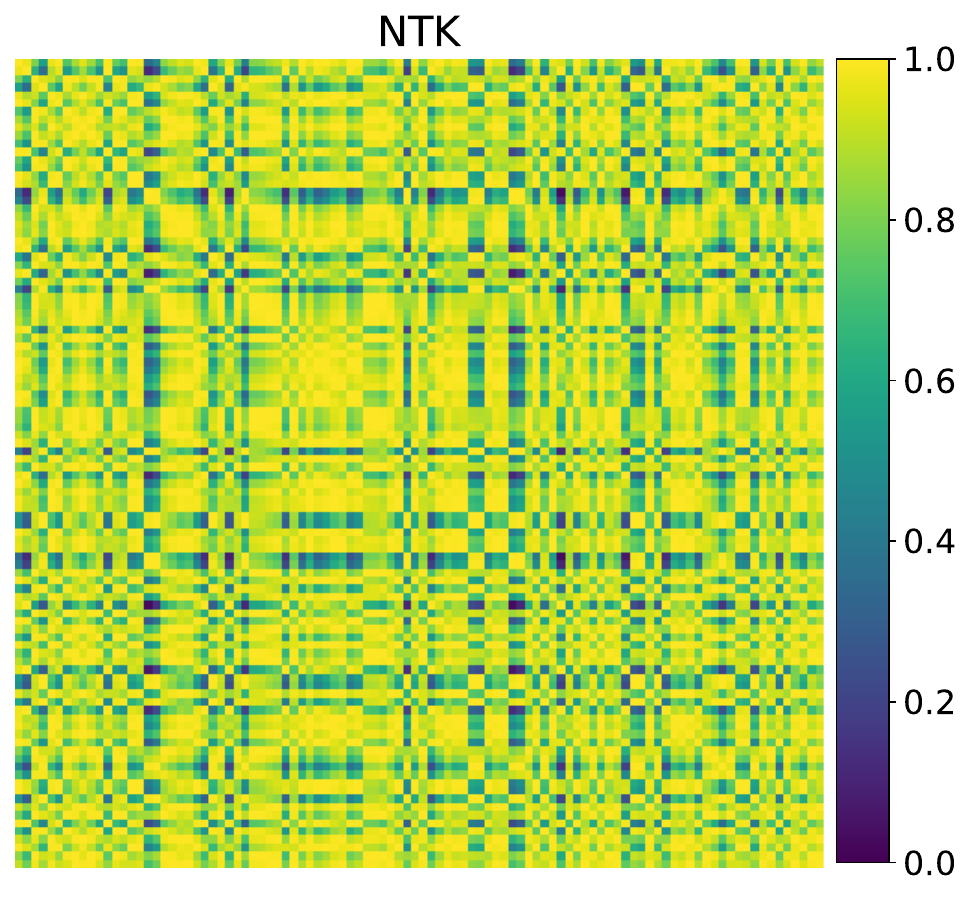}
        \caption{FO.}
    \end{subfigure}
    \vspace{\floatsep}
    \begin{subfigure}[t]{0.24\linewidth}
        \centering \includegraphics[width=\linewidth]{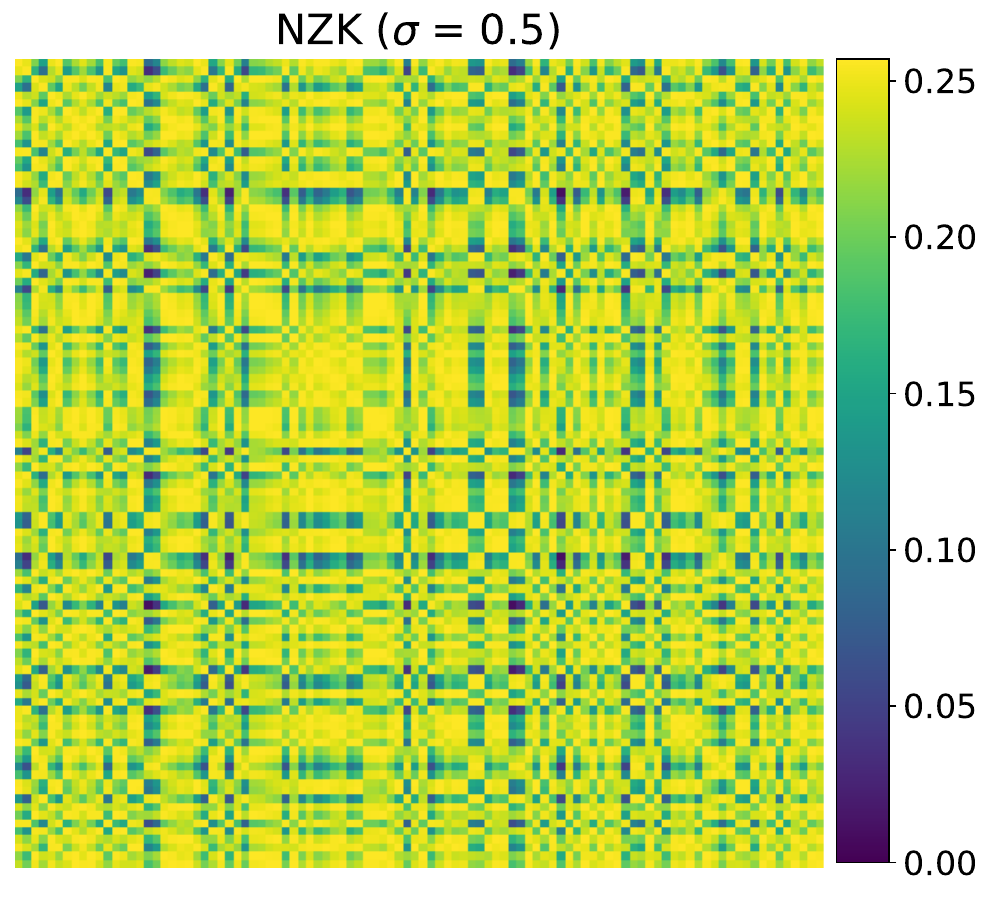}
        \caption{$\sigma_{\bm{z}}=0.5$.}
    \end{subfigure}
    \begin{subfigure}[t]{0.24\linewidth}
        \centering \includegraphics[width=\linewidth]{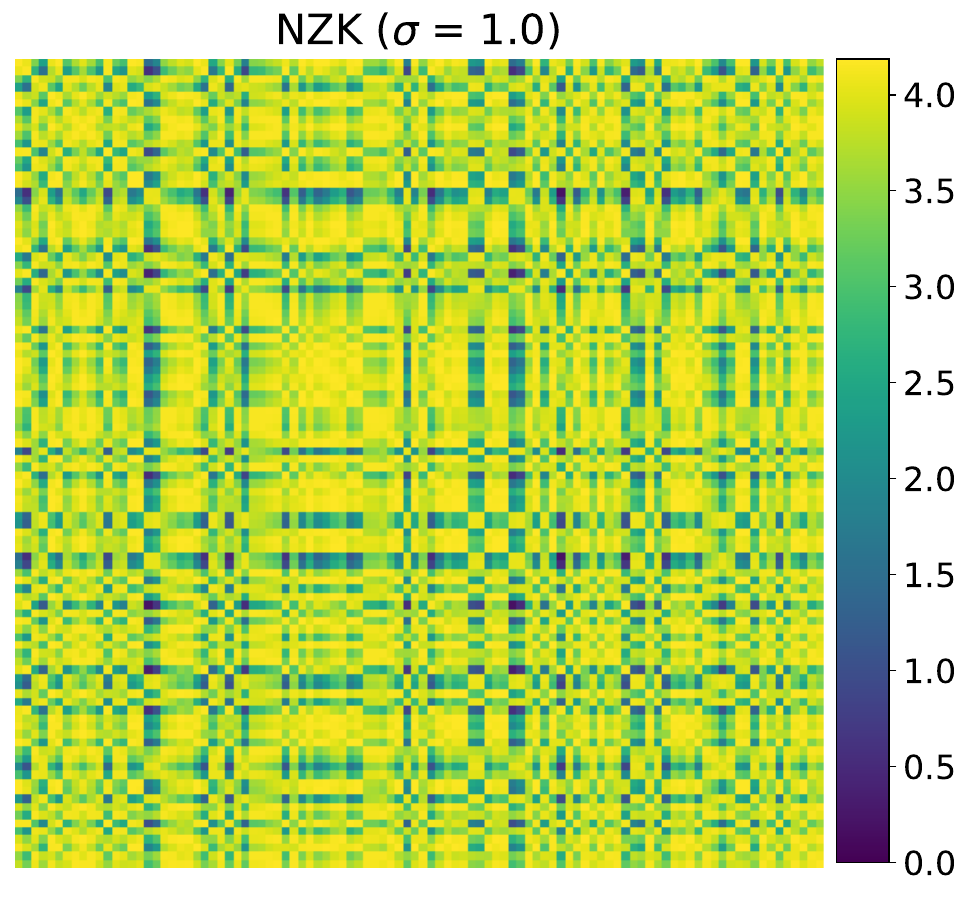}
        \caption{$\sigma_{\bm{z}}=1.0$.}
    \end{subfigure}
    \begin{subfigure}[t]{0.24\linewidth}
        \centering \includegraphics[width=\linewidth]{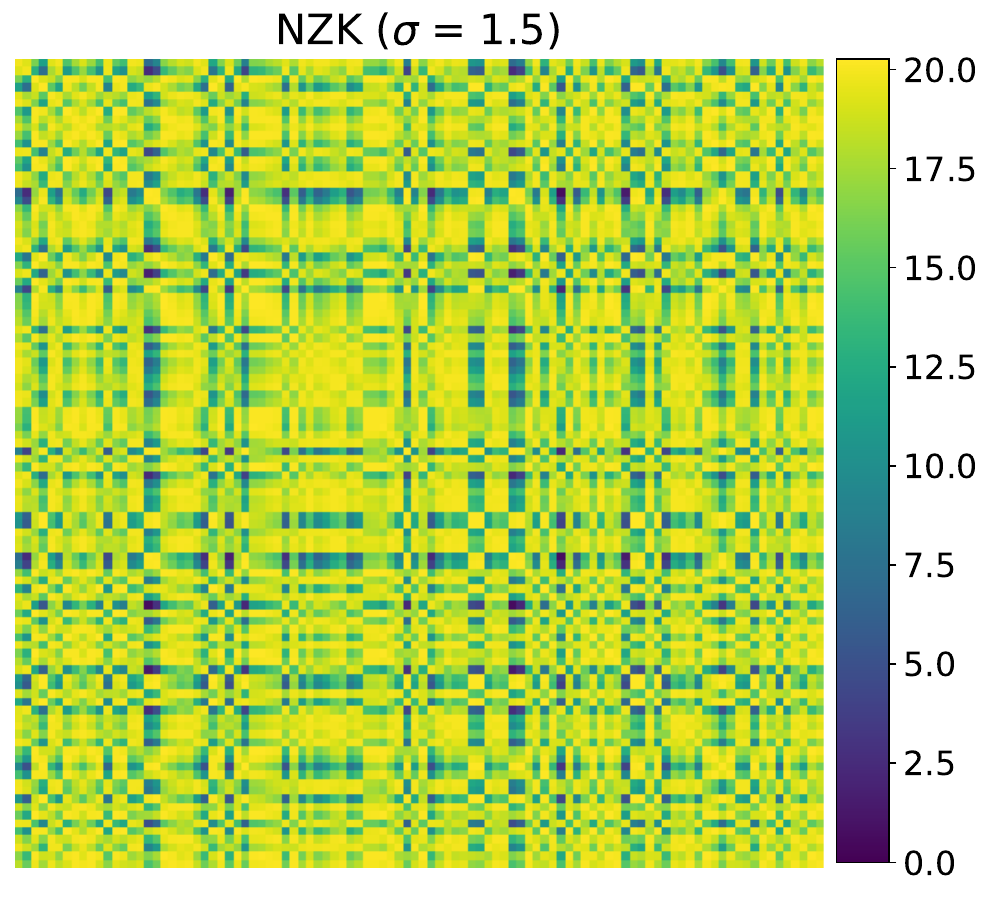}
        \caption{$\sigma_{\bm{z}}=1.5$.}
    \end{subfigure}
    \vskip -0.1in
    \caption{Comparison of NTK and expected NZK with varying $\sigma_{\bm{z}}$ for the case of $d=2$, using identical $\bm{\zeta}$ and $\bm{z}$.}
    \label{fig:linEquDiffSigmd2}
\end{figure}

\begin{figure}[ht]
    \centering
    \centering
    \begin{subfigure}[t]{0.24\linewidth}
        \centering
        \includegraphics[width=\linewidth]{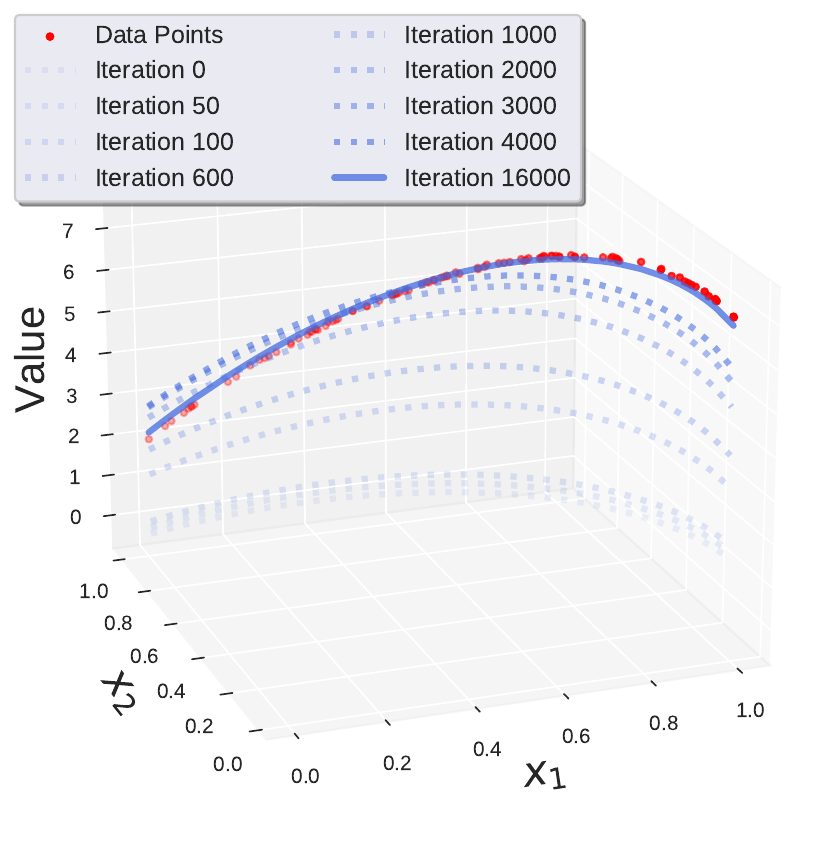}
        \caption{FO.}
    \end{subfigure}
    \begin{subfigure}[t]{0.24\linewidth}
        \centering \includegraphics[width=\linewidth]{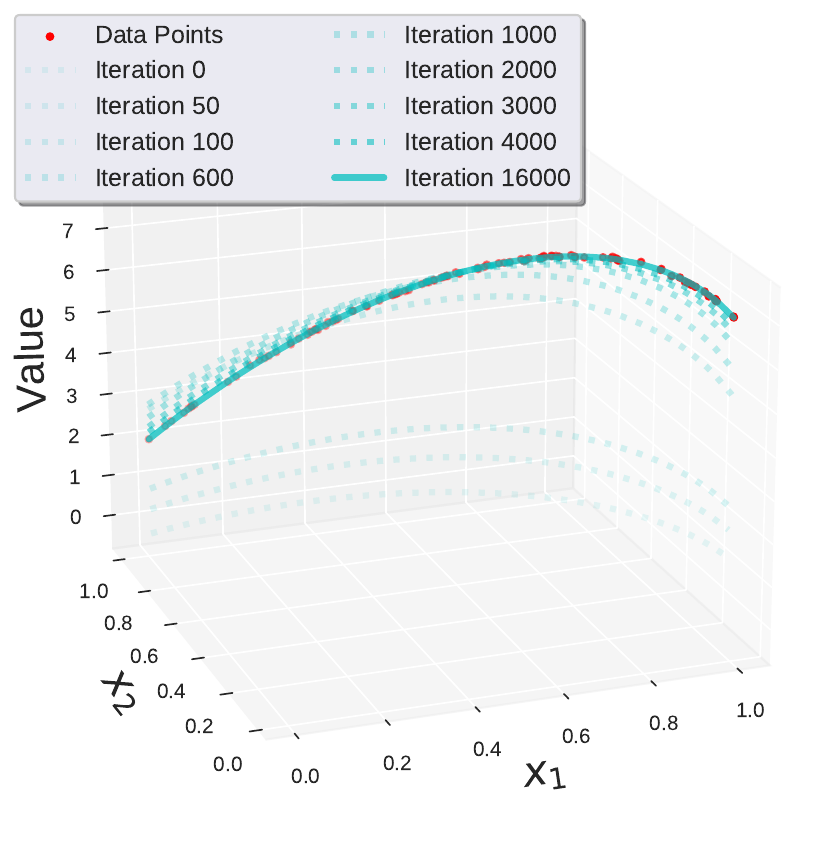}
        \caption{$\sigma_{\bm{z}}=1.0$.}
    \end{subfigure}
    \begin{subfigure}[t]{0.24\linewidth}
        \centering \includegraphics[width=\linewidth]{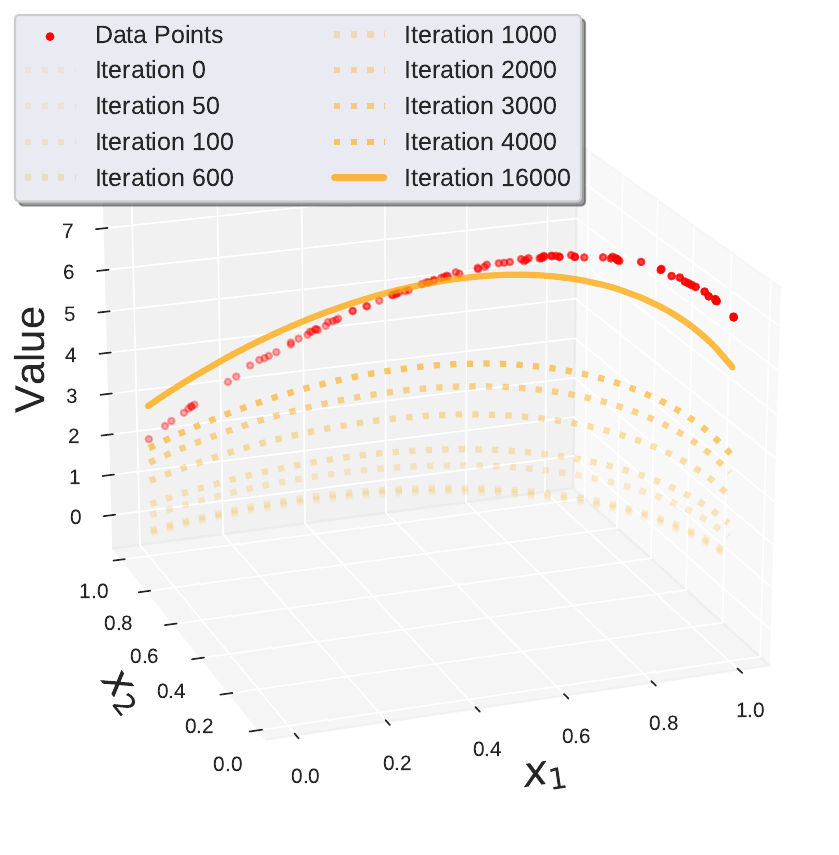}
        \caption{$\sigma_{\bm{z}}=0.5$.}
    \end{subfigure}
    \begin{subfigure}[t]{0.24\linewidth}
        \centering \includegraphics[width=\linewidth]{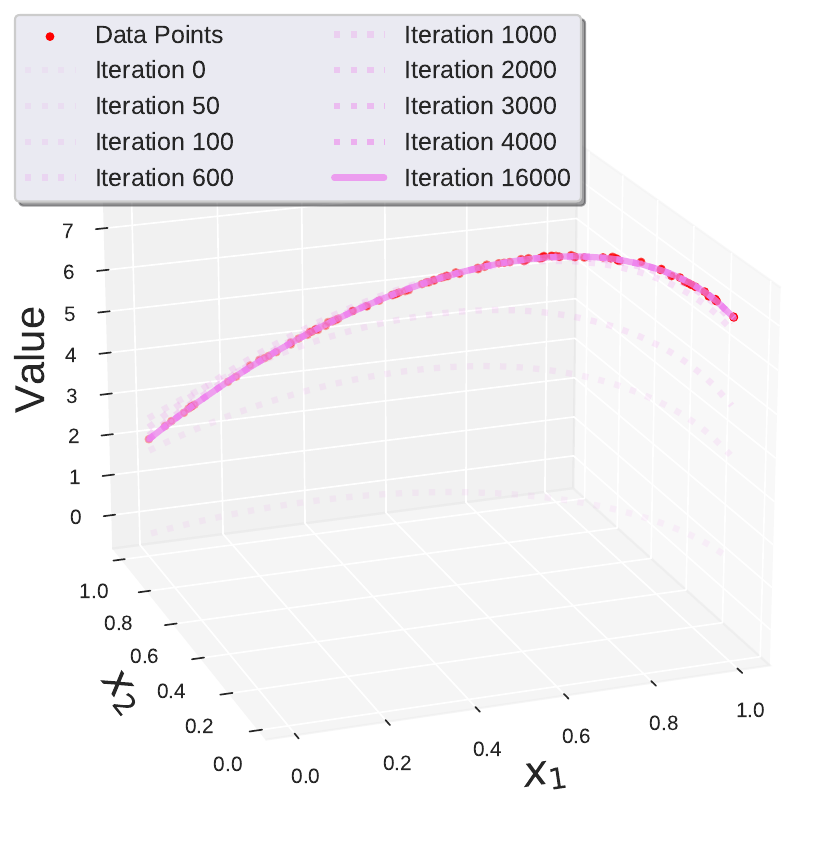}
        \caption{$\sigma_{\bm{z}}=1.5$.}
    \end{subfigure}
    \vskip -0.1in
    \caption{Comparison of linear model evolution under FO and ZO with different $\sigma_{\bm{z}}$ for $d=2$, using identical $\bm{\zeta}$ and $\bm{z}$.}
    \label{fig:linEquDiffSig}
\end{figure}

\begin{figure}[ht]
    \centering
    \begin{subfigure}[t]{0.32\linewidth}
        \centering
        \includegraphics[width=\linewidth]{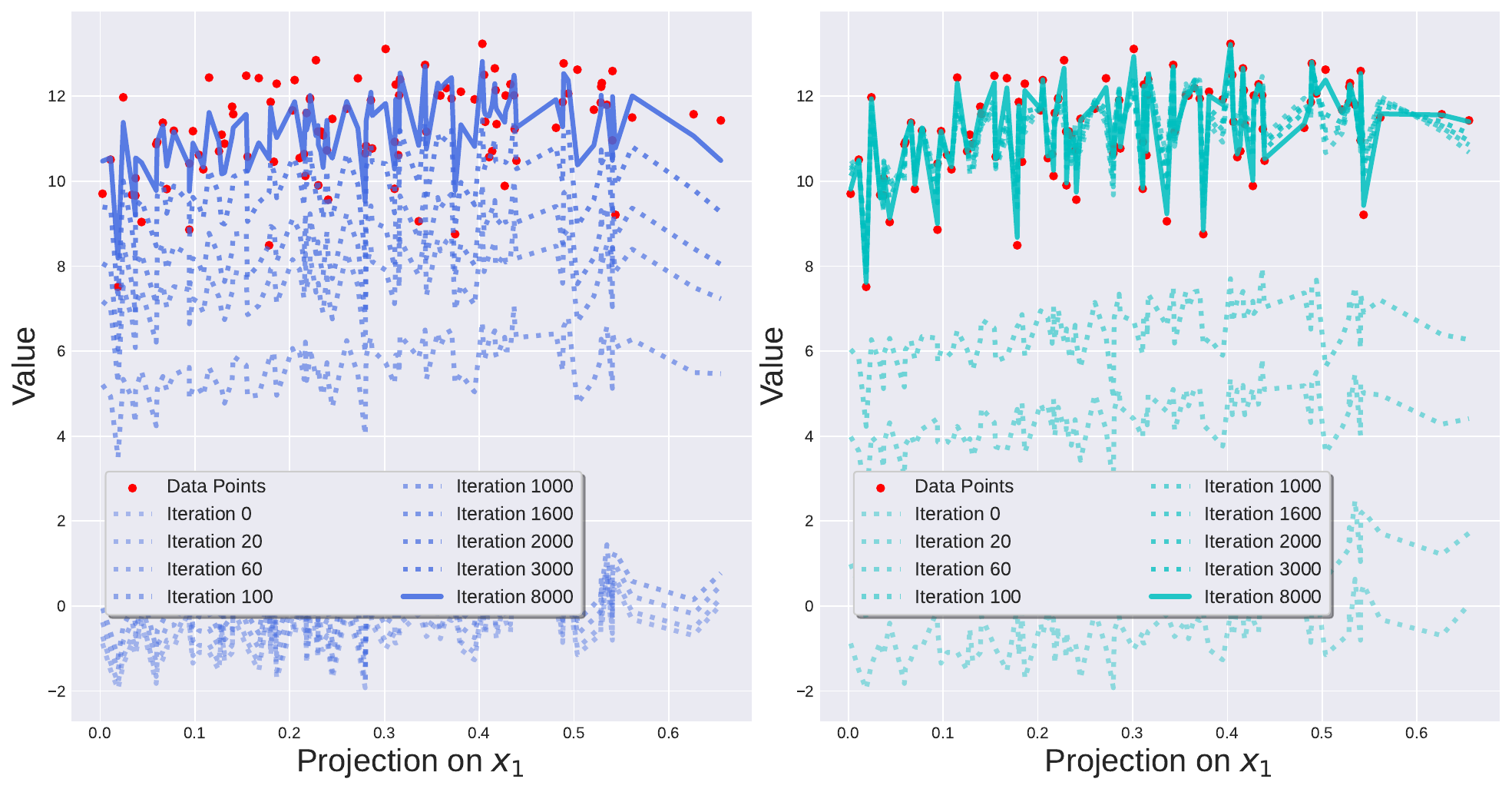}
        \caption{$d=10$}
    \end{subfigure}
    \begin{subfigure}[t]{0.32\linewidth}
        \centering \includegraphics[width=\linewidth]{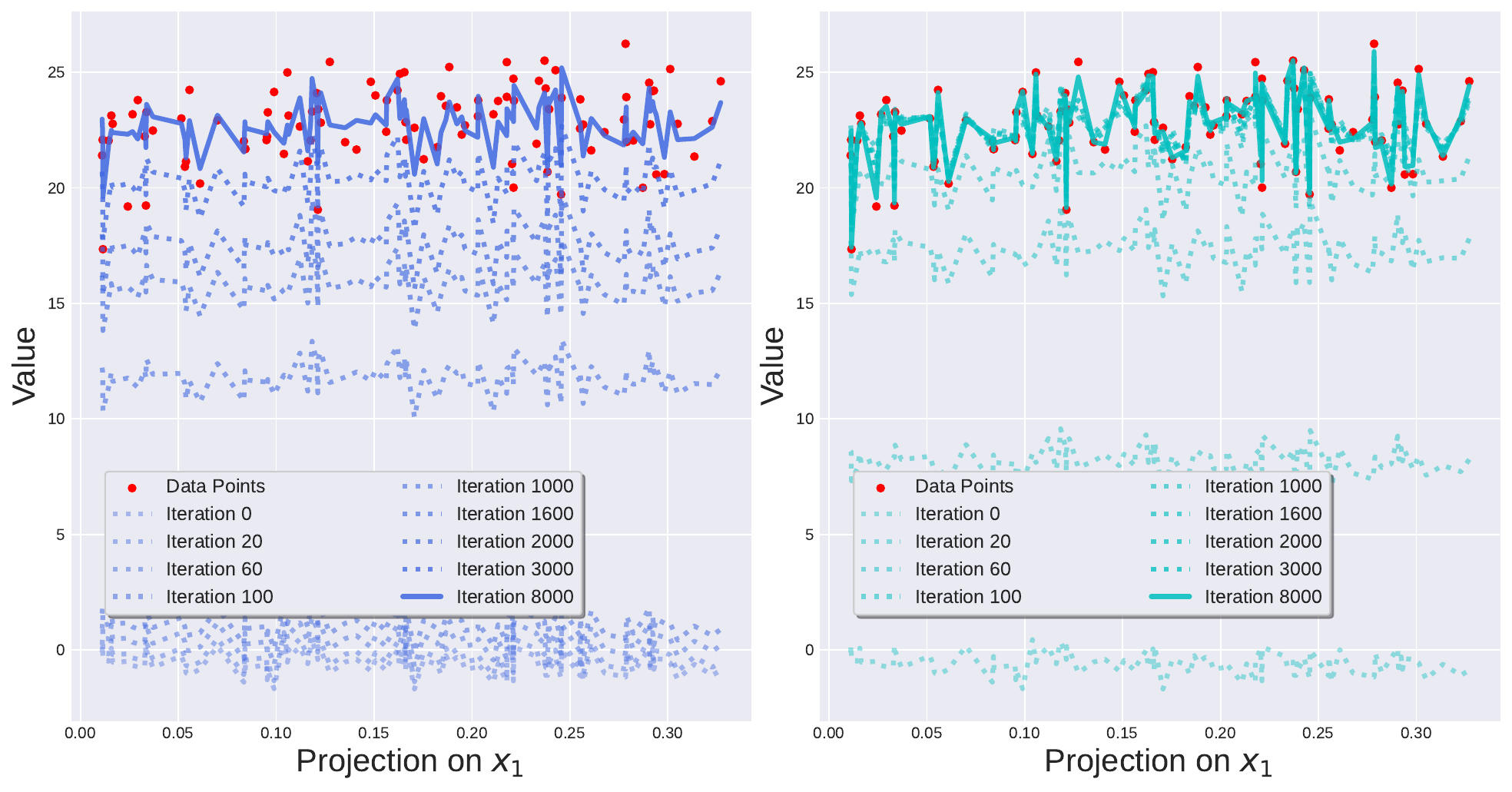}
        \caption{$d=30$}
    \end{subfigure}
    \begin{subfigure}[t]{0.32\linewidth}
        \centering \includegraphics[width=\linewidth]{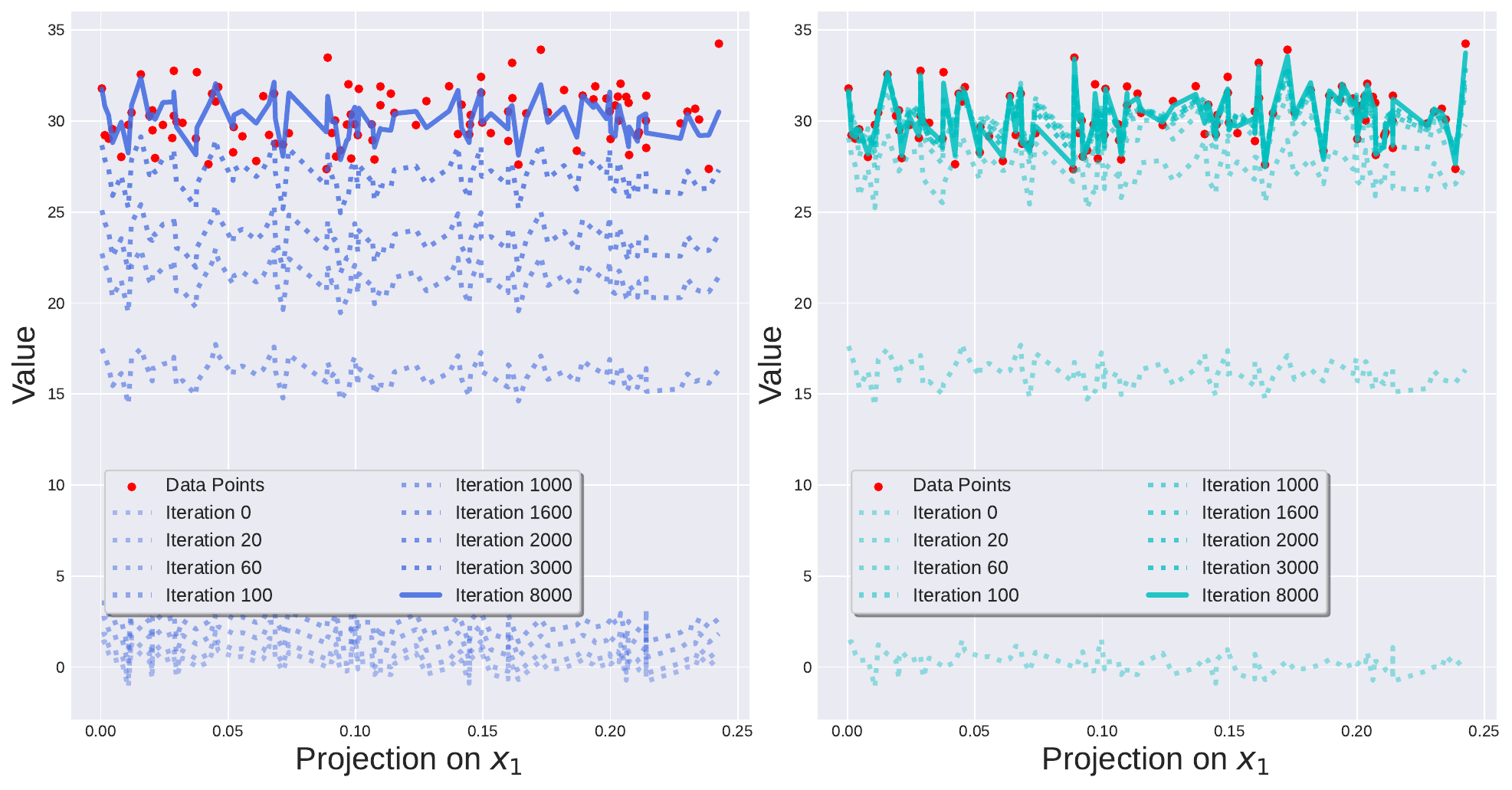}
        \caption{$d=50$}
    \end{subfigure}
    \vskip -0.1in
    \caption{Comparison of linear model evolution under FO and ZO with varying  $d$, projected into 2-D space. Identical $\bm{\zeta}$ and $\bm{z}$ are used.}
    \label{fig:linEquEvoDegr}
\end{figure}

\begin{figure}[ht]
    \centering
    \begin{subfigure}[t]{0.32\linewidth}
        \centering
        \includegraphics[width=\linewidth]{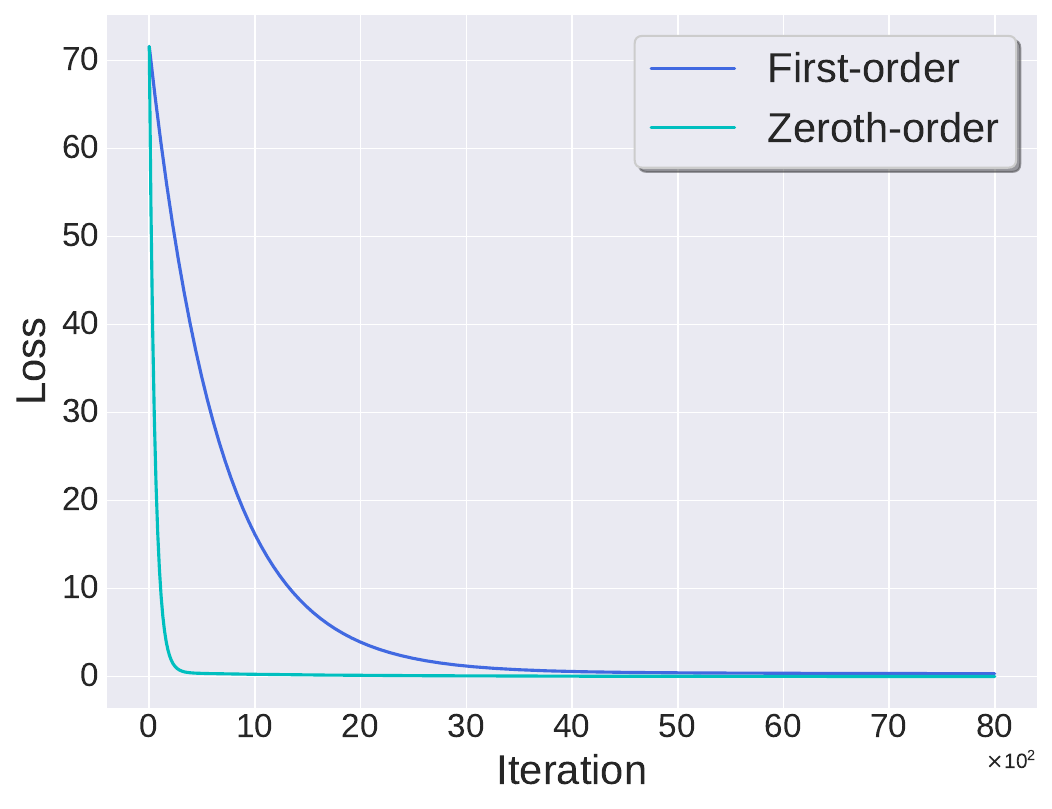}
        \caption{$d=10$}
    \end{subfigure}
    \begin{subfigure}[t]{0.32\linewidth}
        \centering \includegraphics[width=\linewidth]{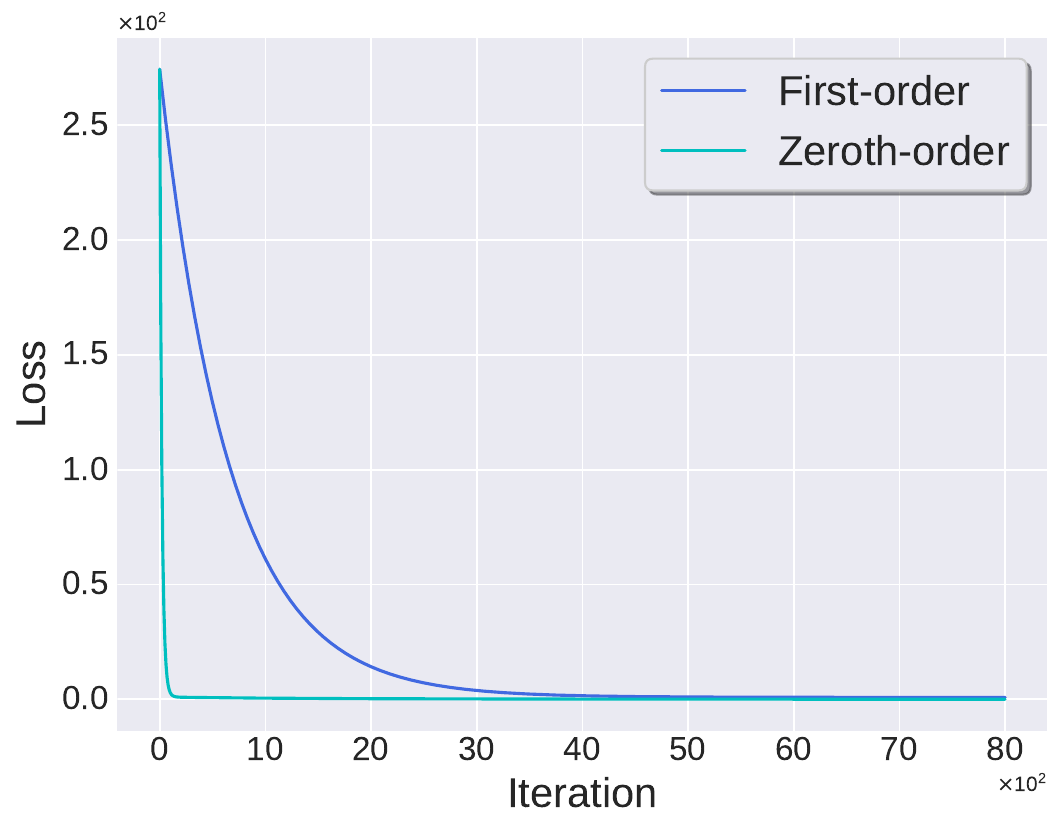}
        \caption{$d=30$}
    \end{subfigure}
    \begin{subfigure}[t]{0.32\linewidth}
        \centering \includegraphics[width=\linewidth]{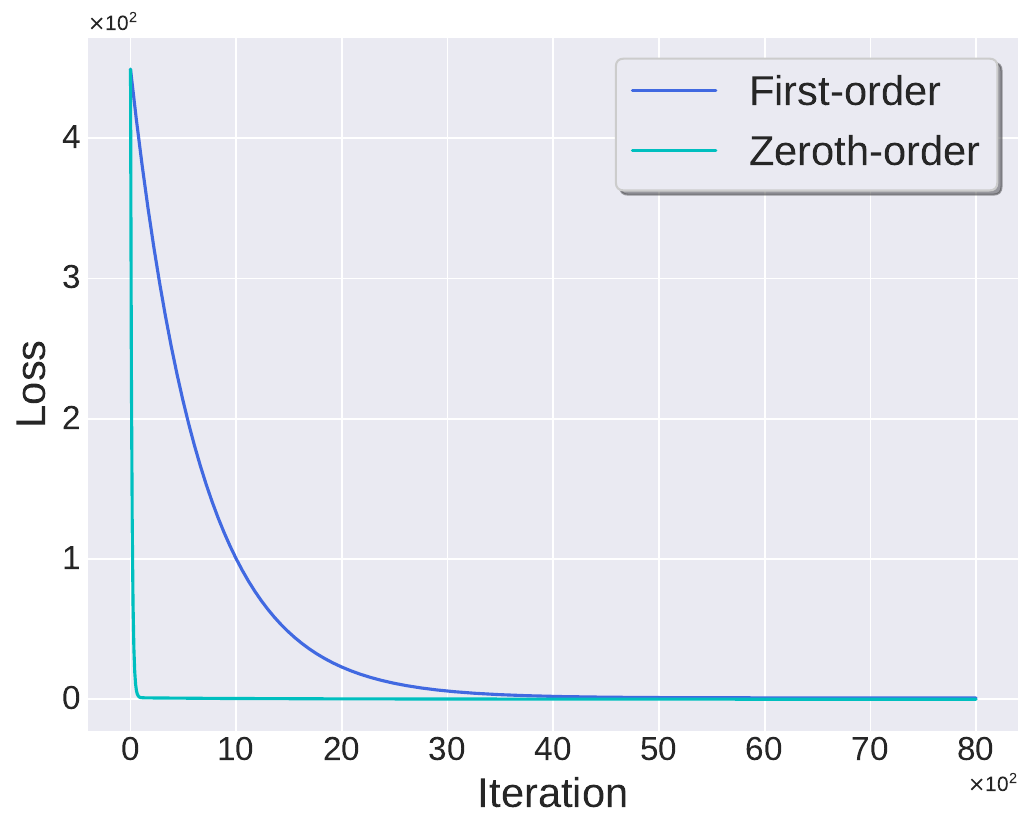}
        \caption{$d=50$}
    \end{subfigure}
    \vskip -0.1in
    \caption{Comparison of the loss for the linear model under FO and ZO, with varying $d$ and identical $\bm{\zeta}$ and $\bm{z}$.}
    \label{fig:linEquLossDegr}
\end{figure}

\begin{figure}[ht]
    \centering
    \centering
    \begin{subfigure}[t]{0.32\linewidth}
        \centering
        \includegraphics[width=\linewidth]{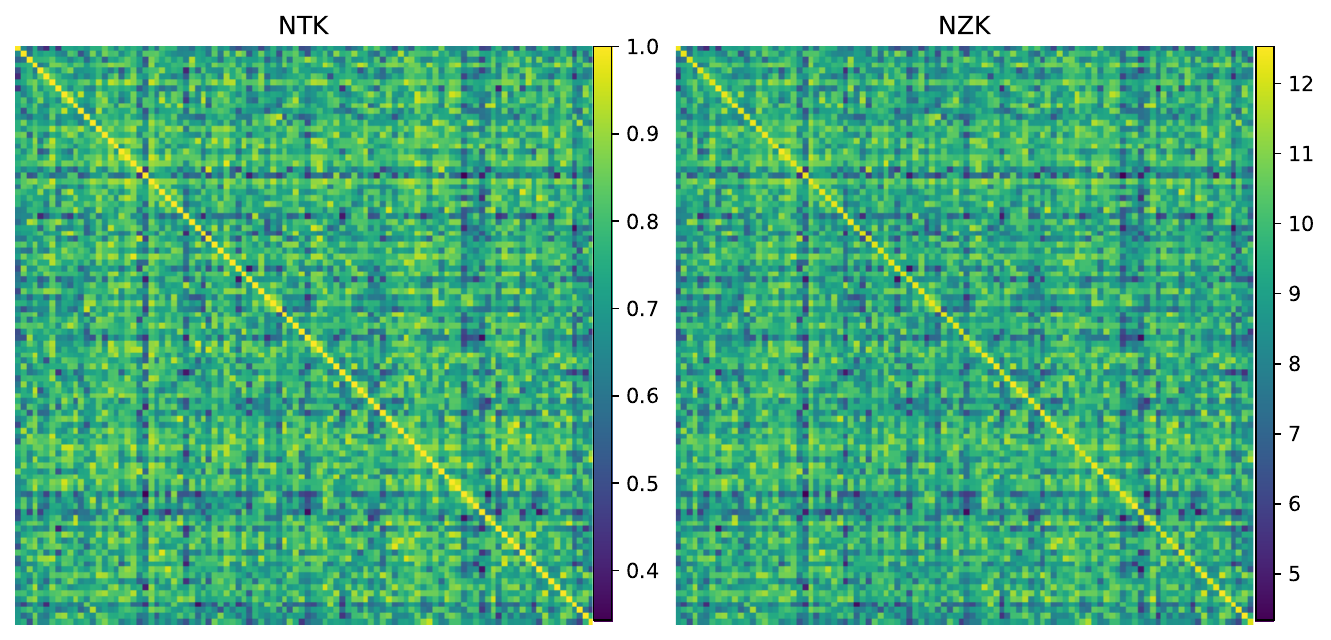}
        \caption{$d=10$}
    \end{subfigure}
    \begin{subfigure}[t]{0.32\linewidth}
        \centering \includegraphics[width=\linewidth]{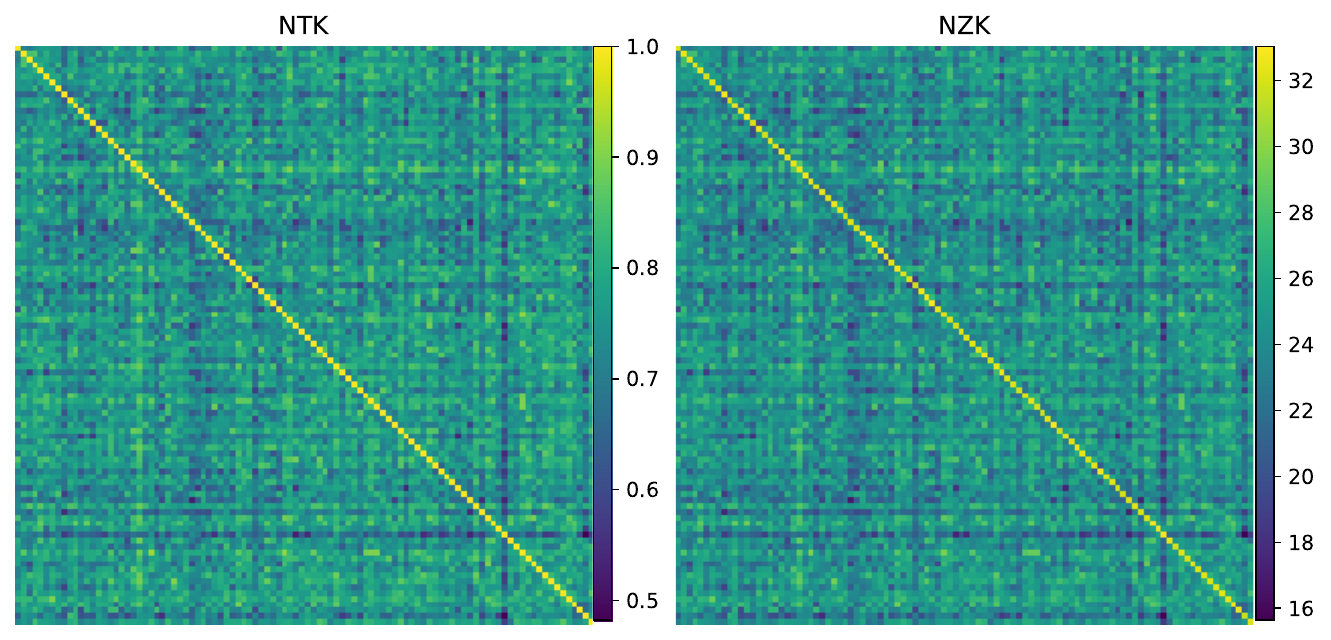}
        \caption{$d=30$}
    \end{subfigure}
    \begin{subfigure}[t]{0.32\linewidth}
        \centering \includegraphics[width=\linewidth]{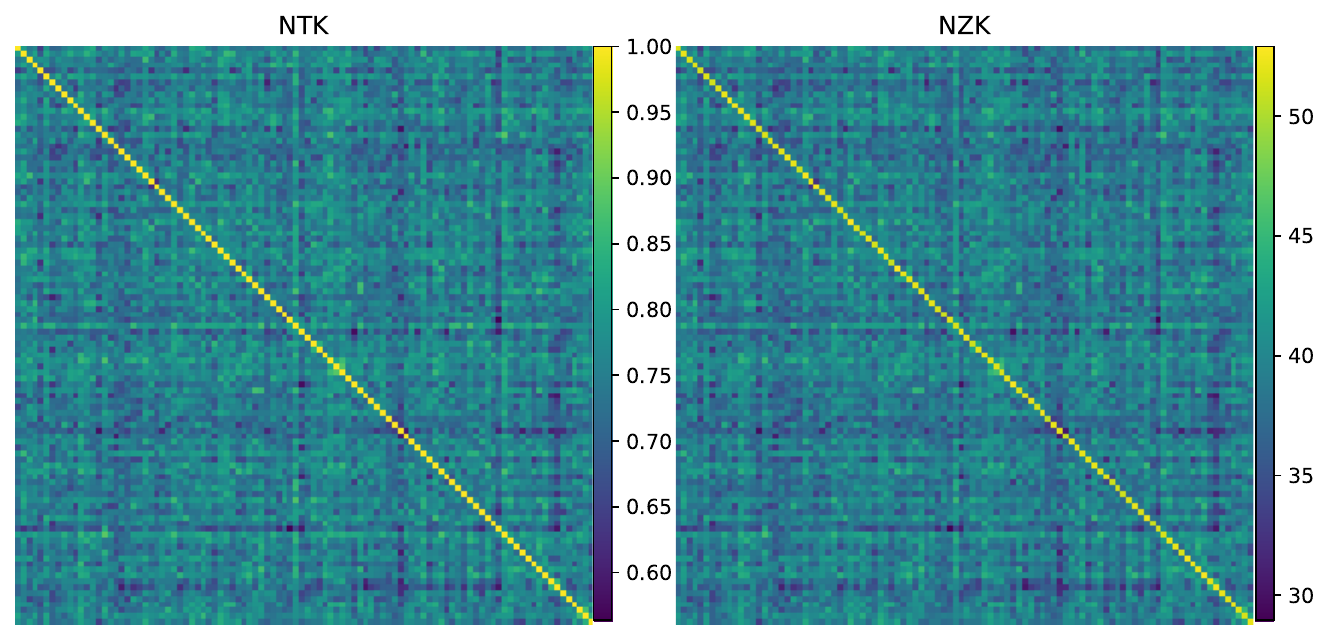}
        \caption{$d=50$}
    \end{subfigure}
    \vskip -0.1in
    \caption{Comparison of NTK and expected NZK for the linear model under FO and ZO, with different values of $d$ and identical $\bm{\zeta}$ and $\bm{z}$.}
    \label{fig:linEquNZKDegr}
\end{figure}

\begin{figure}
    \centering
		\includegraphics[width=.5\linewidth]{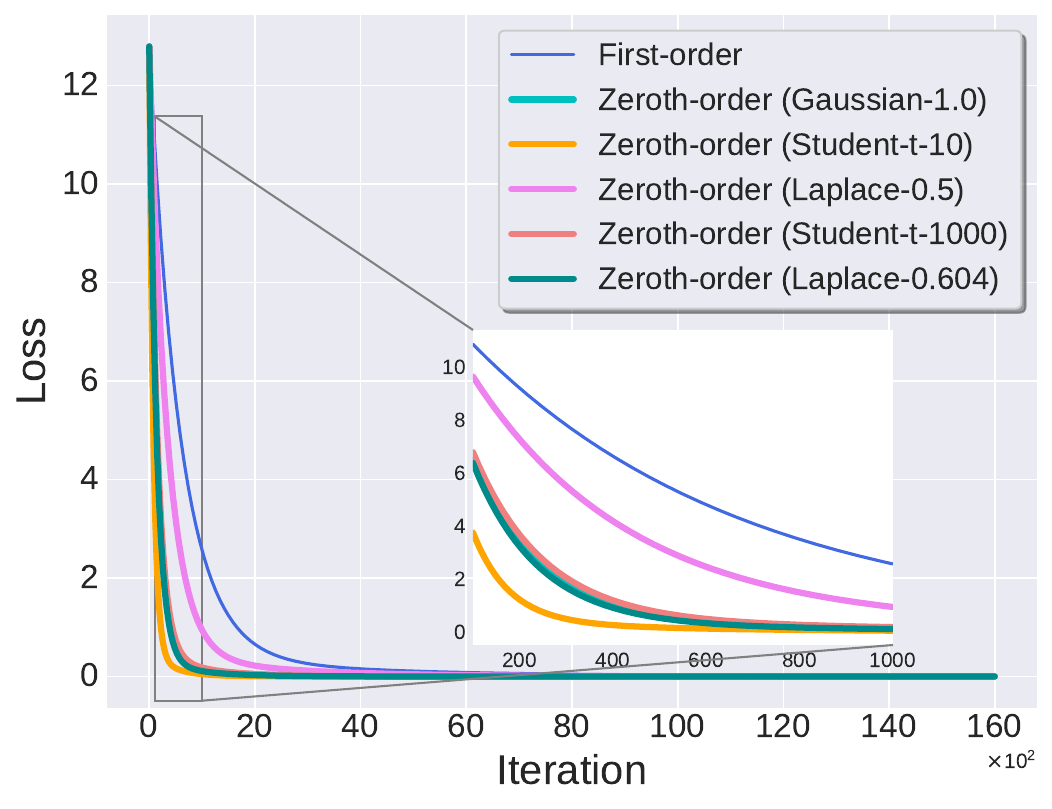}
		\vskip -0.1in
		\caption{Comparison of losses in 2-D fitting tasks under FO and ZO with different distributions, while keeping $\bm{\zeta}$ and $\bm{z}$ identical.}
		\label{fig:distriloss}
\end{figure}

\begin{figure}[hb]
    \centering
    \begin{subfigure}[t]{0.16\linewidth}
        \centering
        \includegraphics[width=\linewidth]{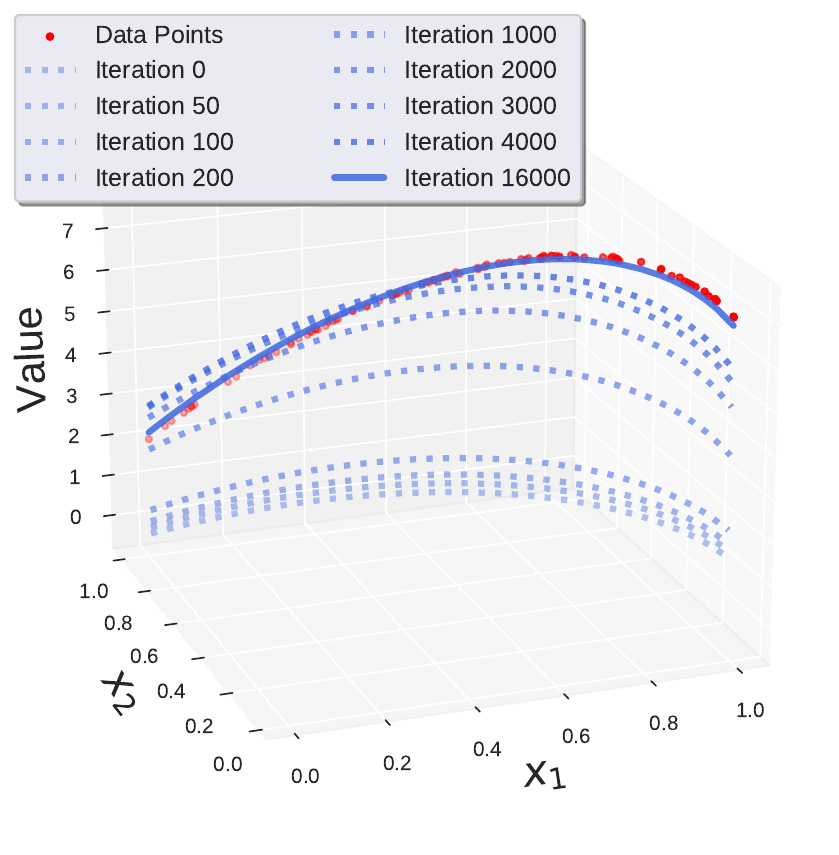}
        \caption{FO.}
    \end{subfigure}
    \begin{subfigure}[t]{0.16\linewidth}
        \includegraphics[width=\linewidth]{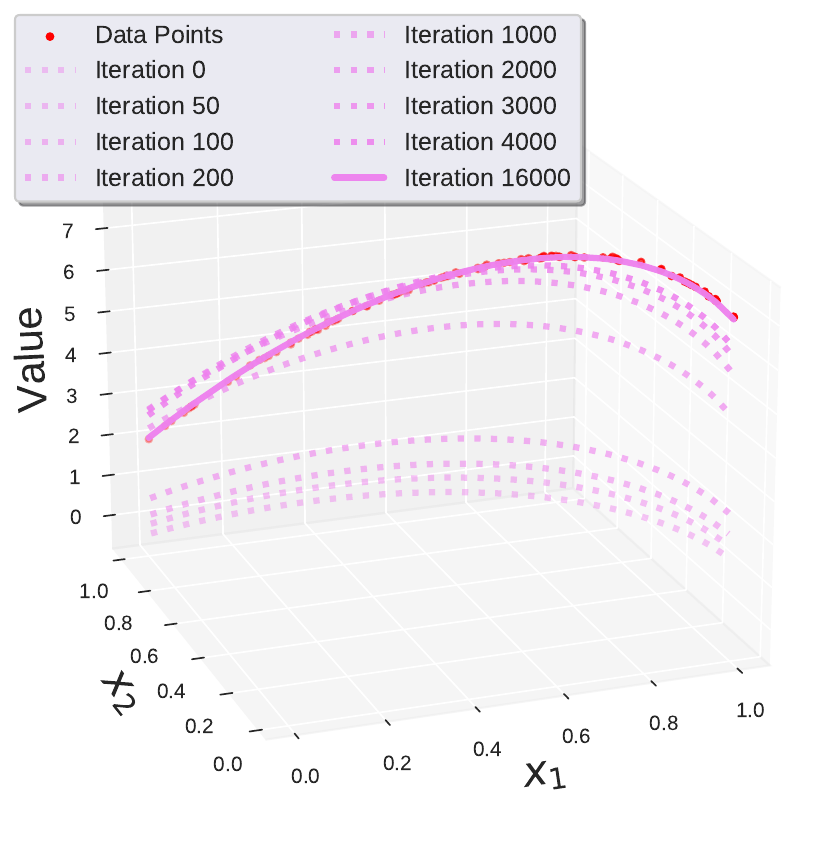}
        \caption{Laplace-0.5.}
    \end{subfigure}
    \begin{subfigure}[t]{0.16\linewidth}
        \includegraphics[width=\linewidth]{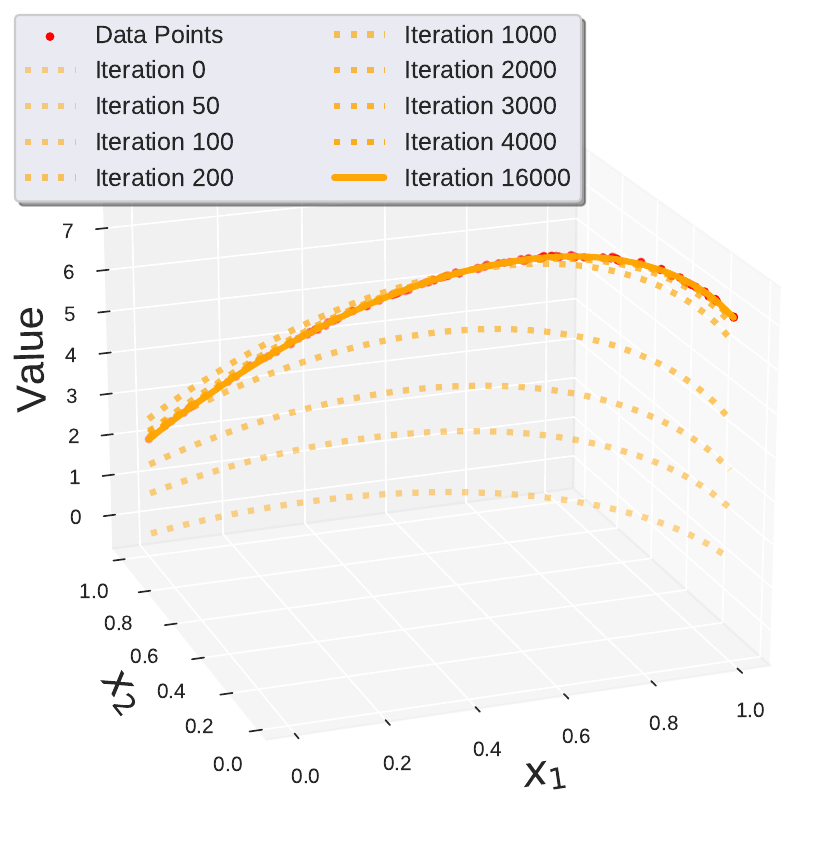}
        \caption{Student's t-10.}
    \end{subfigure}
    \begin{subfigure}[t]{0.16\linewidth}
        \includegraphics[width=\linewidth]{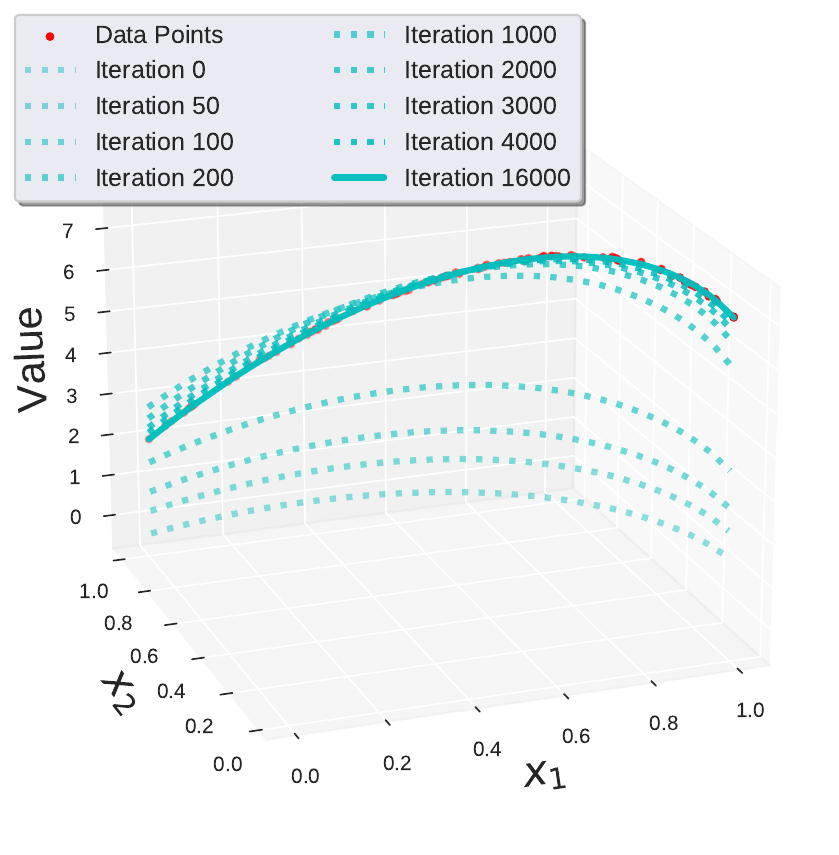}
        \caption{Gaussian-1.0.}
    \end{subfigure}
    \begin{subfigure}[t]{0.16\linewidth}
        \includegraphics[width=\linewidth]{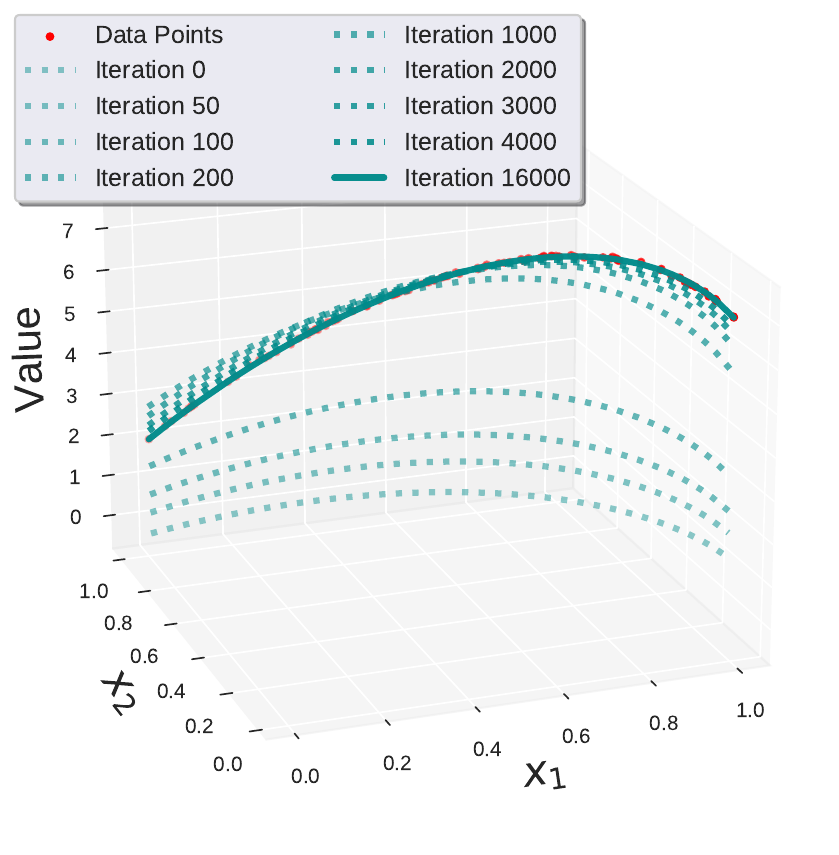}
        \caption{Laplace-0.605.}
    \end{subfigure}
    \begin{subfigure}[t]{0.16\linewidth}
        \includegraphics[width=\linewidth]{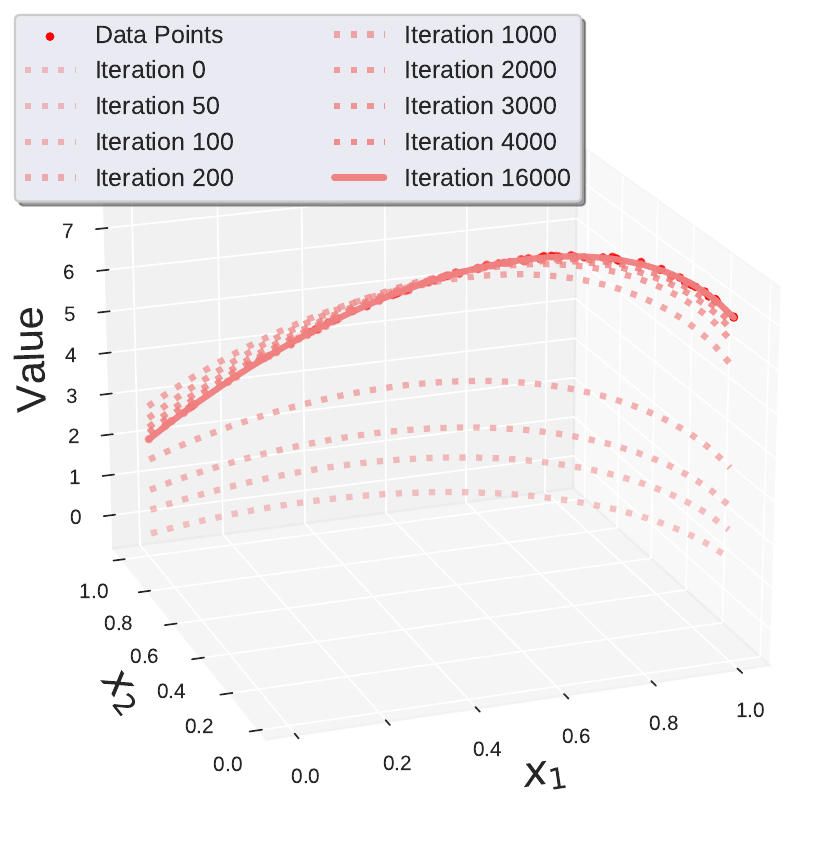}
        \caption{Student's t-1000.}
    \end{subfigure}
    \vskip -0.1in
    \caption{Comparison of linear model evolution under FO and ZO with different distributions for $d=2$. Identical $\bm{\zeta}$ and $\bm{z}$ are used. The second row shows similar performance.}
    \label{fig:evoldistri}
\end{figure}

\begin{figure}[ht]
    \centering
    \includegraphics[width=.4\linewidth]{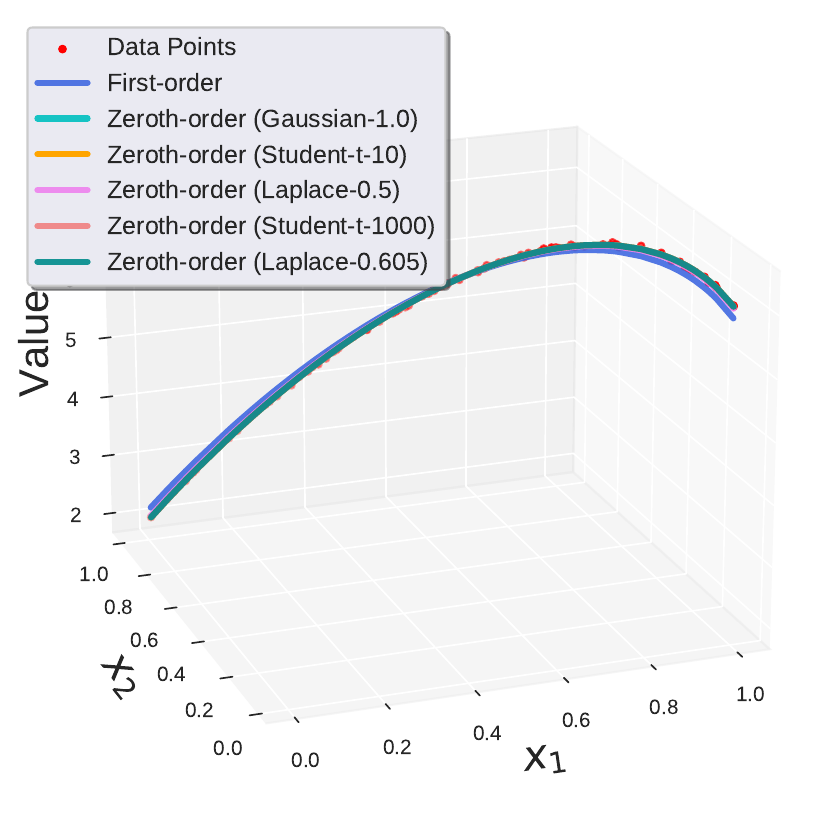}
    \vskip -0.1in
    \caption{Comparison of the final linear models under FO and ZO after 16,000 iterations with different distributions, for $d=2$, using identical $\bm{\zeta}$ and $\bm{z}$.}
    \label{fig:finevoldistri}
\end{figure}

\begin{figure}[ht]
    \centering
    \begin{subfigure}[t]{0.16\linewidth}
        \centering
        \includegraphics[width=\linewidth]{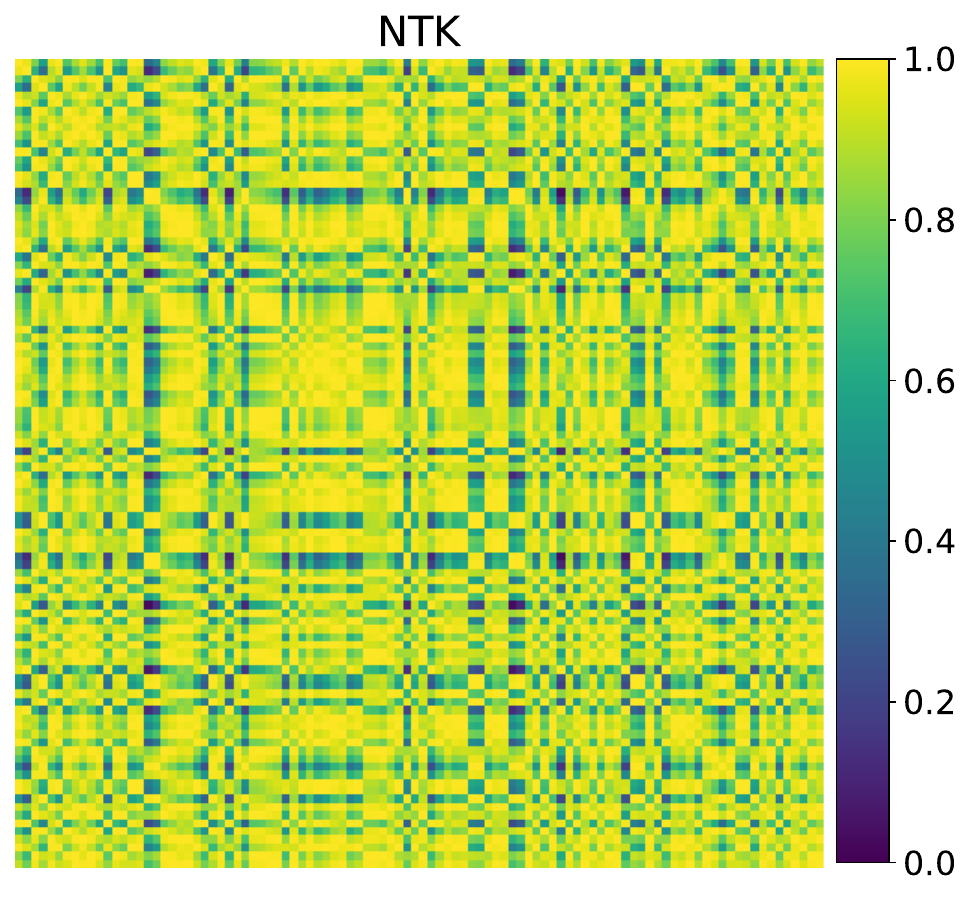}
        \caption{FO.}
    \end{subfigure}
    \vspace{\floatsep}
    \begin{subfigure}[t]{0.16\linewidth}
        \centering \includegraphics[width=\linewidth]{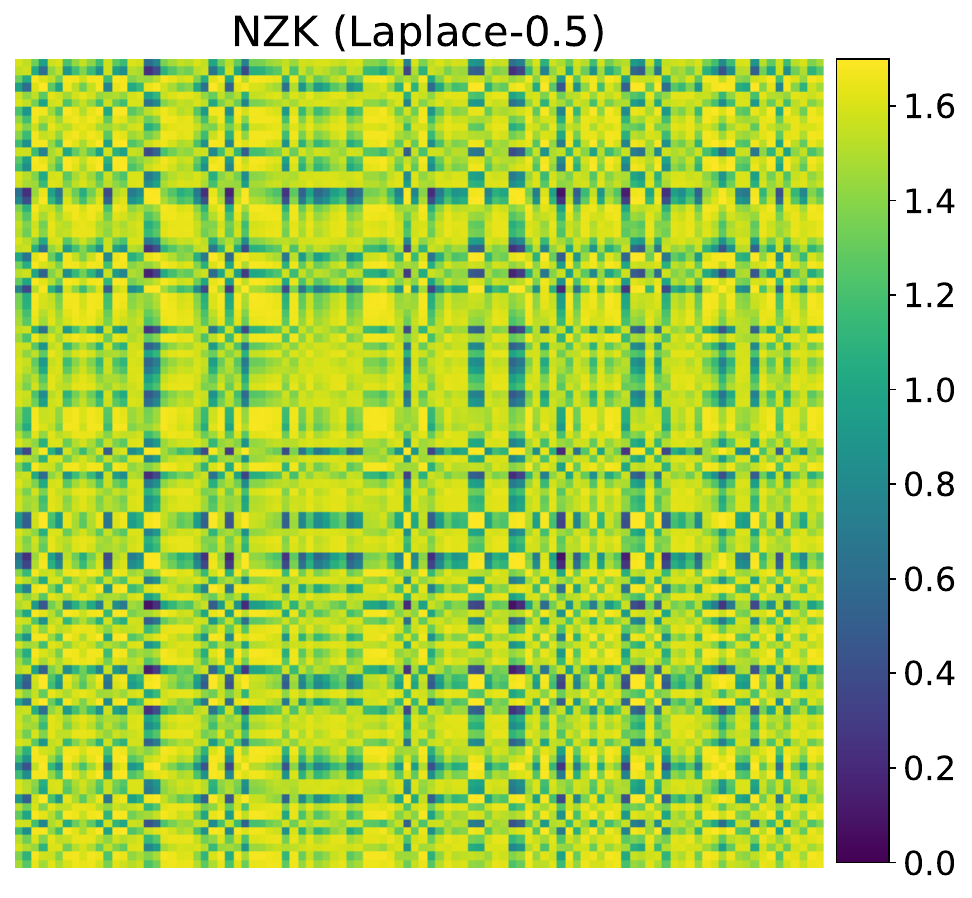}
        \caption{Laplace-0.5.}
    \end{subfigure}
    \begin{subfigure}[t]{0.16\linewidth}
        \centering \includegraphics[width=\linewidth]{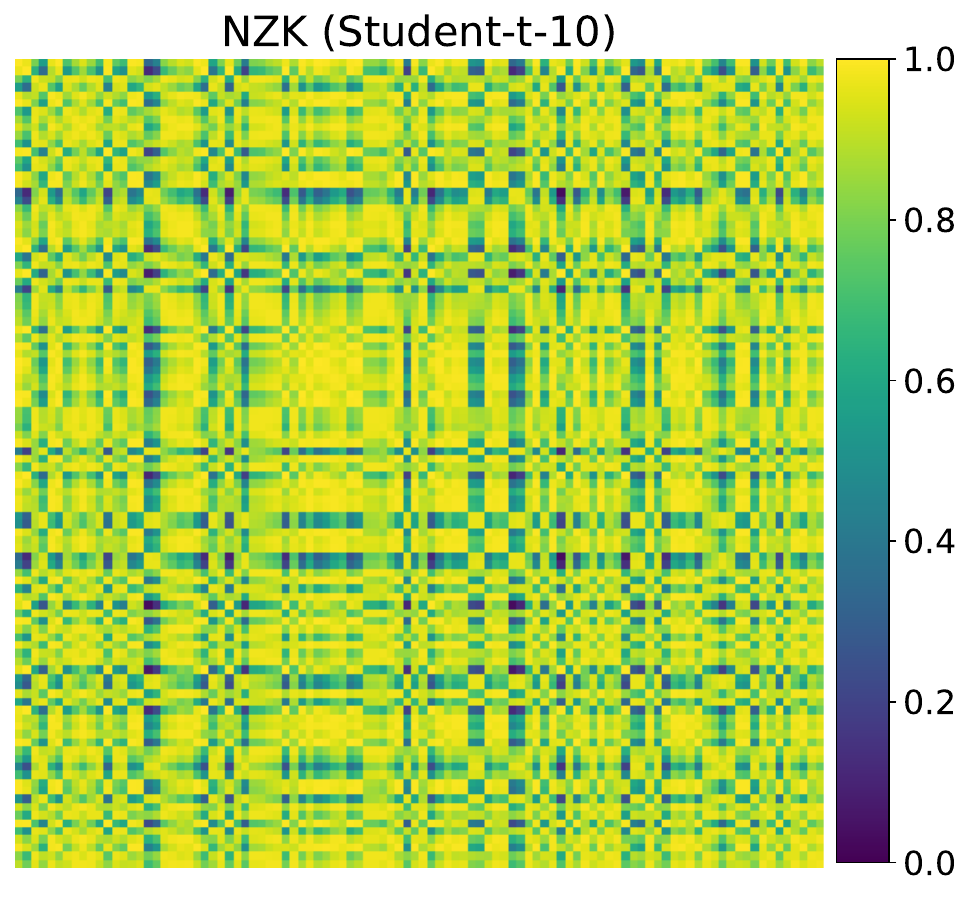}
        \caption{Student's t-10.}
    \end{subfigure}
    \begin{subfigure}[t]{0.16\linewidth}
        \centering \includegraphics[width=\linewidth]{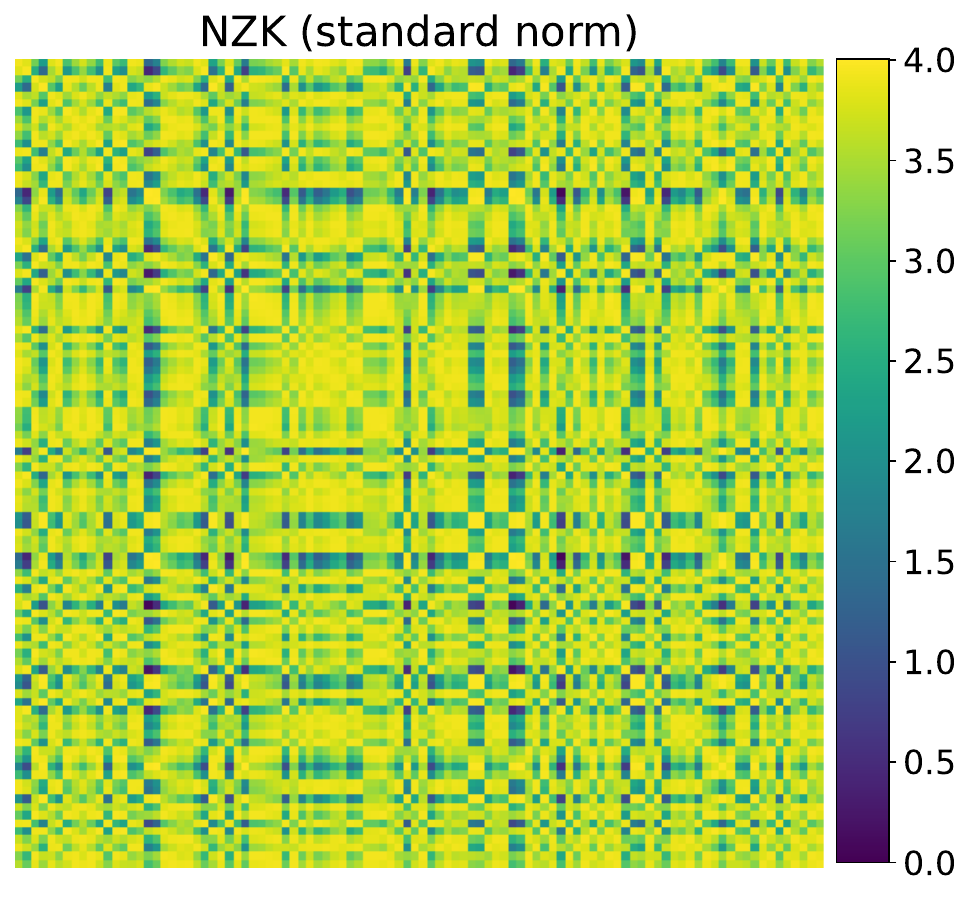}
        \caption{Gaussian-1.0.}
    \end{subfigure}
    \begin{subfigure}[t]{0.16\linewidth}
        \centering \includegraphics[width=\linewidth]{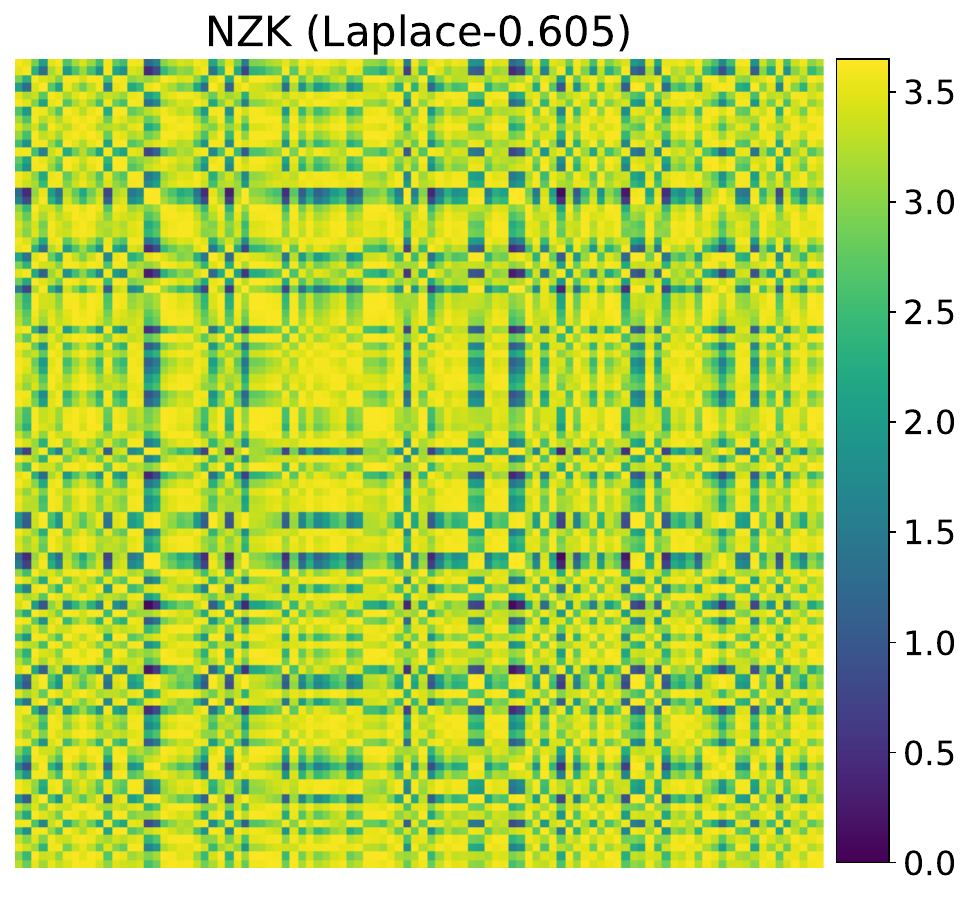}
        \caption{Laplace-0.605.}
    \end{subfigure}
    \begin{subfigure}[t]{0.16\linewidth}
        \centering \includegraphics[width=\linewidth]{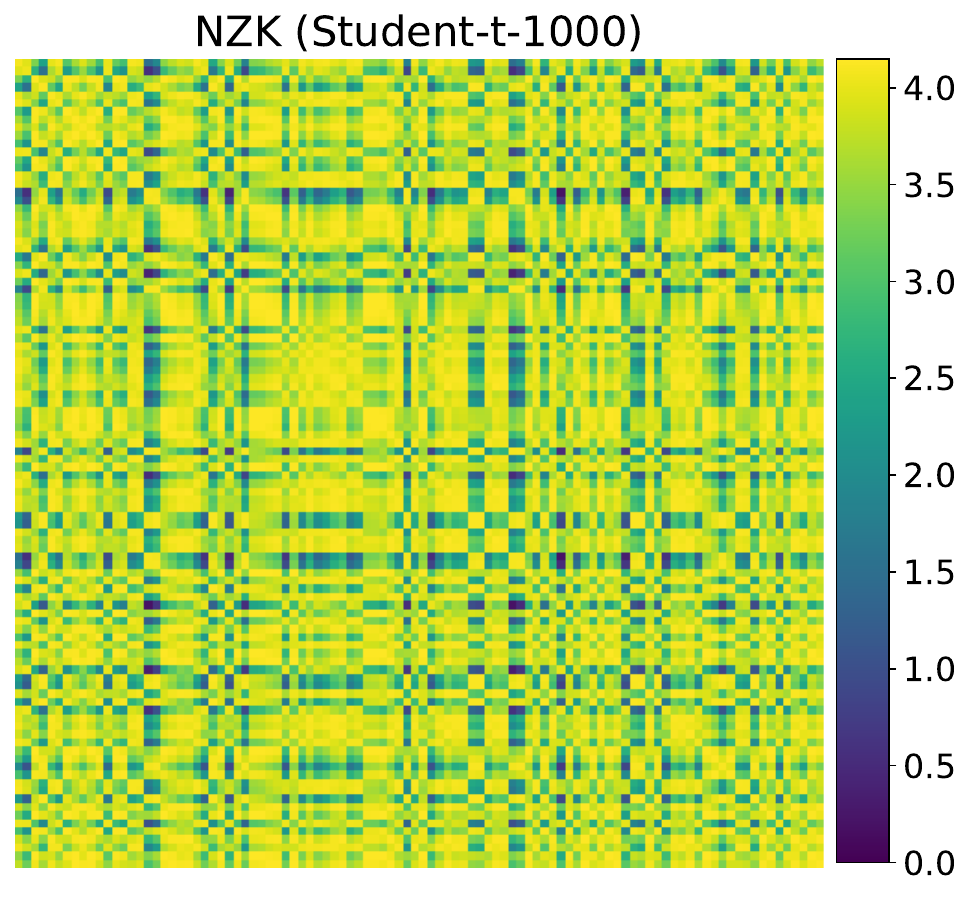}
        \caption{Student's t-1000.}
    \end{subfigure}
    \vskip -0.2in
    \caption{Comparison of NTK and expected NZK with different distributions for  $d=2$, and identical $\bm{\zeta}$ and $\bm{z}$ are taken.}
    \label{fig:distriNZK}
\end{figure}

\begin{figure}[ht]
    \centering
        \includegraphics[width=.5\linewidth]{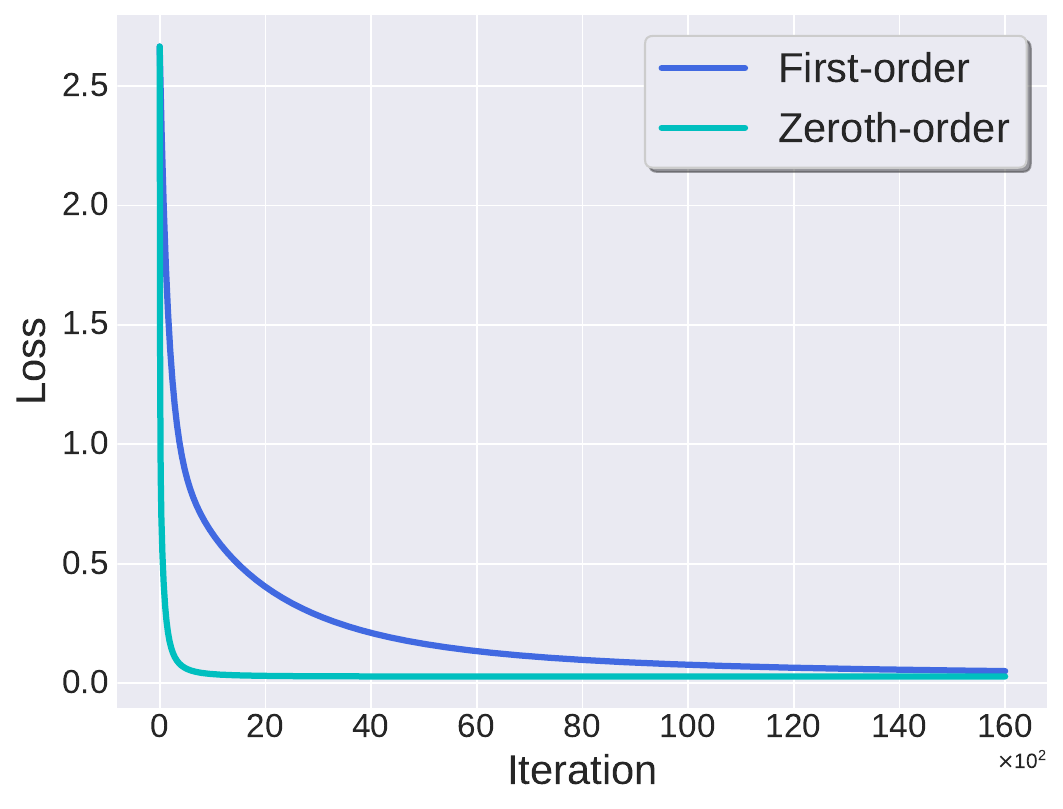}
    \vskip -0.1in
    \caption{Comparison of losses for the \textbf{linear} model under FO and ZO in 24-dimensional classification tasks on MNIST, with identical $\bm{\zeta}$ and $\bm{z}$.}
    \label{fig:linMNIloss}
\end{figure}

\begin{figure}[ht]
    \centering
        \includegraphics[width=.7\linewidth]{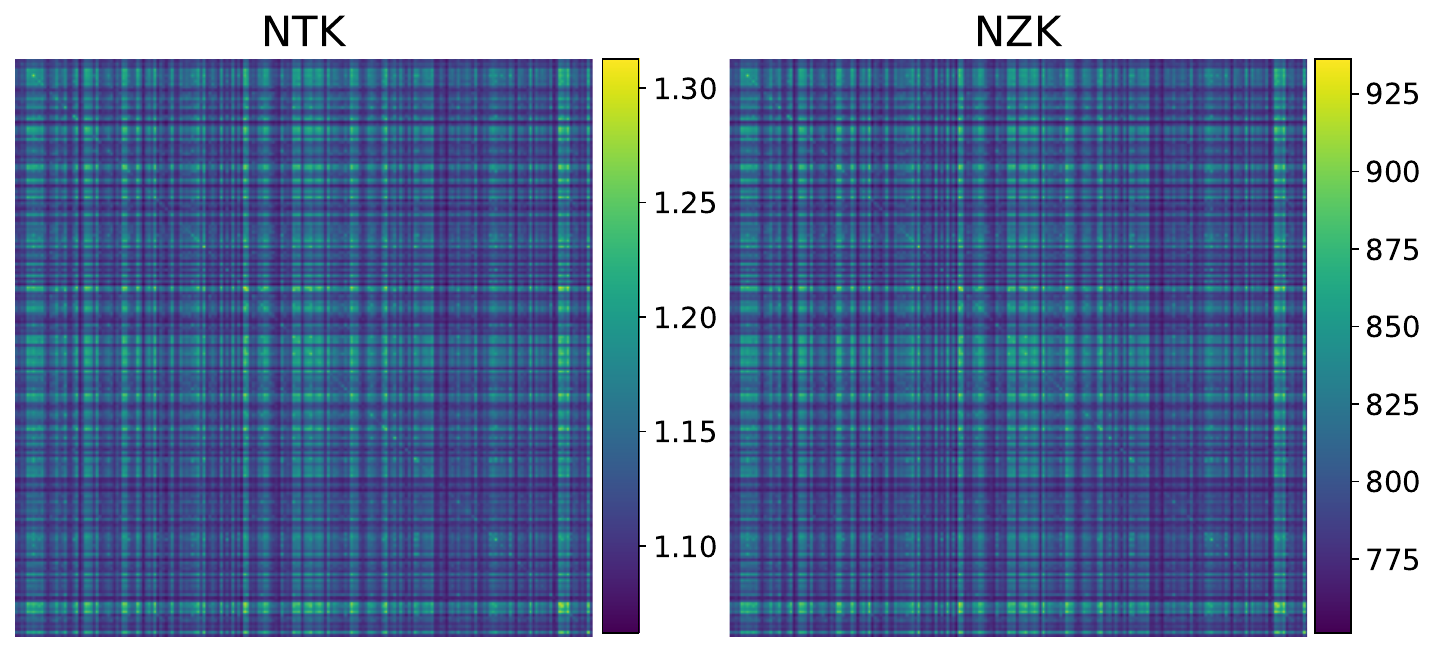}
    \vskip -0.1in
    \caption{Comparison of NZK for the 24-dimensional \textbf{linear} model under FO and ZO for MNIST classification, with identical $\bm{\zeta}$ and $\bm{z}$.}
    \label{fig:linMNInzk}
\end{figure}

\begin{figure}[t]
    \centering
    \begin{minipage}{0.49\linewidth}
        \centering
		\includegraphics[width=\linewidth]{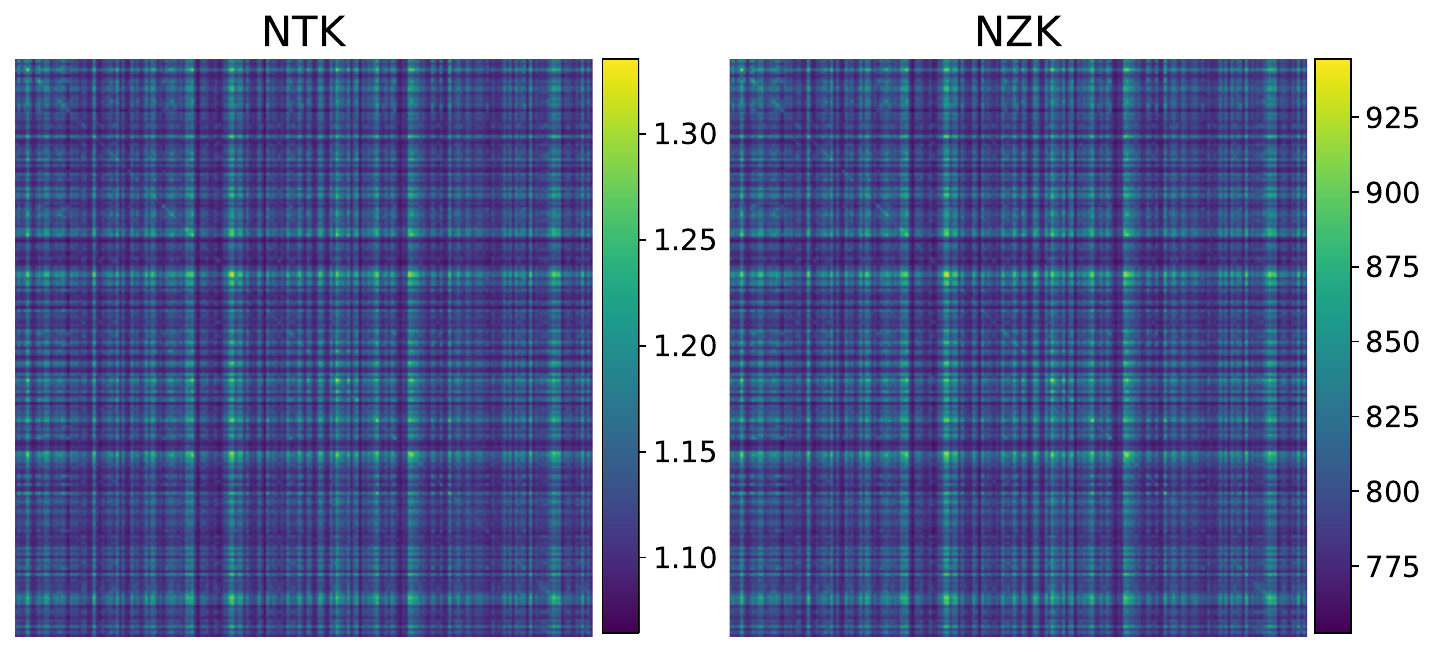}
		\vskip -0.1in
		\caption{Comparison of NZK for \textbf{linearized} neural networks in MNIST classification using FO and ZO, with identical $\bm{\zeta}$ and $\bm{z}$.}
		\label{fig:linearizedMNInzk}
    \end{minipage}
    \hfill
    \begin{minipage}{0.49\linewidth}
        \centering
		\includegraphics[width=\linewidth]{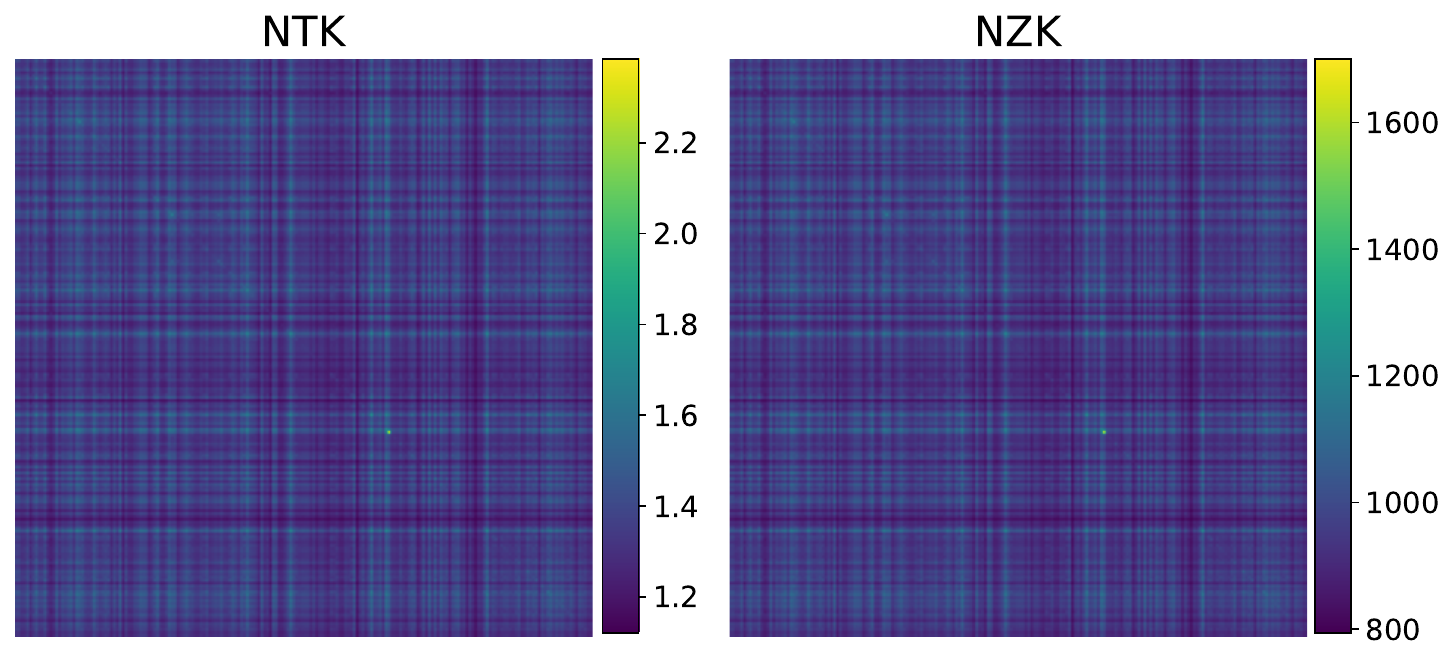}
		\caption{Comparison of NZK for \textbf{linearized} neural networks in CIFAR-10 classification using FO and ZO, with identical $\bm{\zeta}$ and $\bm{z}$.}
		\label{fig:linearizedCIFARnzk}
    \end{minipage}
\end{figure}

\subsection{Linearized Neural Networks}

We examine a linearized neural network defined as:
\begin{eqnarray}
&f^{\mathrm{lin}}(\theta_t)\coloneqq f(\theta_0)+\left\langle\frac{f(\theta_0+\epsilon\bm{u})-f(\theta_0-\epsilon\bm{u})}{2\epsilon}\bm{u},\theta_t-\theta_0\right\rangle,\nonumber
\end{eqnarray}
with $\theta\in\mathbb{R}^{51}$ and $\bm{x}\in\mathbb{S}^2$. The original nonlinear neural network is a three-layer feed-forward network with ReLU activation applied after the hidden layer. 

For the digital classification task using the MNIST dataset, we downsample the images to $8\times8$ ones and flatten them into 64-dim vectors as inputs. Therefore, the input layer of the original nonlinear neural network has the dimension of 64.  This network also includes two hidden layers with 10 and 5 neurons, respectively, and an output layer of dimension 1, resulting in a total of 711 parameters. Figure~\ref{fig:linearizedCIFARloss} (a)
illustrates the loss
while Figure~\ref{fig:linearizedMNInzk} presents a visual comparison of NTK and NZK. For the classification task on the CIFAR-10 and tiny imagenet dataset, we follow the same setting as for MNIST dataset. The visualizations of NZK corresponding to CIFAR-10 are provided in Figure~\ref{fig:linearizedCIFARnzk}.

\end{document}